\DeclareMathOperator*{\argmax}{arg\,max}
\newcommand{\pluseq}{\mathrel{+}=}
\def \cC {\mathcal{C}}
\def \cF {\mathcal{F}}
\def \cS {\mathcal{S}}
\def \cH {\mathcal{H}}
\def \cA {\mathcal{A}}
\def \bx {\mathbf{x}}
\def \bX {\mathbf{X}}
\def \bY {\mathbf{Y}}
\def \by {\mathbf{y}}
\def \cA {\mathcal{A}}
\def \bI {\mathbf{I}}
\def \bR {\mathbb{R}}
\newtheorem{assumption}{Assumption}
\newcommand{\model}{\textsc{HetoFedBandit}}
\newcommand{\modelenhance}{\textsc{HetoFedBandit-E}}
\begin{document}

\title{Federated Linear Contextual Bandits with Heterogeneous Clients}

\author{\name Ethan Blaser \email ehb2bf@virginia.edu \\
       \addr Department of Computer Science\\
       University of Virginia\\
       Charlottesville, VA 22904, USA
       \AND
       \name Chuanhao Li \email cl5ev@virginia.edu \\
       \addr Department of Computer Science\\
       University of Virginia\\
       Charlottesville, VA 22904, USA
       \AND
       \name Hongning Wang \email hw5x@virginia.edu \\
       \addr Department of Computer Science\\
       University of Virginia\\
       Charlottesville, VA 22904, USA}

\editor{My editor}

\maketitle

\begin{abstract}
The demand for collaborative and private bandit learning across multiple agents is surging due to the growing quantity of data generated from distributed systems. Federated bandit learning has emerged as a promising framework for private, efficient, and decentralized online learning. However, almost all previous works rely on strong assumptions of client homogeneity, i.e., all participating clients shall share the same bandit model; otherwise, they all would suffer linear regret. This greatly restricts the application of federated bandit learning in practice. In this work, we introduce a new approach for federated bandits for heterogeneous clients, which clusters clients for collaborative bandit learning under the federated learning setting. Our proposed algorithm achieves non-trivial sub-linear regret and communication cost for all clients, subject to the communication protocol under federated learning that at anytime only one model can be shared by the server.
\end{abstract}

\begin{keywords}
  Contextual Bandit, Federated Learning, Collaborative Bandit
\end{keywords}

\section{Introduction}\label{sec:intro}

Bandit learning algorithms \citep{auer2002finite,chapelle2011empirical,lihongcontextual,abbasi2011improved} have become a reference solution to the problems of online decision optimization in a wide variety of applications, including recommender systems \citep{lihongcontextual}, clinical trials \citep{durand2018contextual}, and display advertising \citep{li2010exploitation}. Typically, these algorithms are operated by a centralized server; but due to the growing quantity of data generated from distributed systems, there is a surge in demand for private, efficient, and decentralized bandit learning across multiple clients. Federated bandit learning has emerged as a promising solution framework,
where multiple clients collaborate to minimize their cumulative regret under the coordination of a single central server \citep{dislinucb,dubey2020,fedmab,Liasynch,he2022a}. The server's role is limited to facilitating joint model estimation across clients, without having access to any clients' arm pulling or reward history. 

Although federated bandit learning has gained increasing interest from the research community, most existing approaches necessitate that all clients share the same underlying bandit model in order to achieve near-optimal sub-linear regret for a population of clients. This strong homogeneity assumption distills federated bandit learning to a joint estimation of a single global model across clients, subject to the federated learning communication protocol \citep{bonawitz2019towards,kairouz2021advances}. 
However, in reality, clients can have diverse objectives, resulting in different optimal policies. Imposing a single global model on a heterogeneous client population can easily cost every client linear regret \citep{hossain2021fair}. Consequently, rational clients should choose not to participate in such a federated learning system, as they cannot determine if other participating clients share the same bandit model with them beforehand, and they can already achieve sub-linear regret independently (albeit inferior to the regret obtained when all clients genuinely share the same bandit model). This seriously impedes the practical application of existing federated bandit learning solutions.

In a parallel line of bandit research, studies in collaborative bandits aim to improve bandit learning in heterogeneous environments by facilitating collective model estimation among different clients. For example, clustered bandit algorithms group similar clients and use a shared bandit model for clients within the same group
\citep{gentile2014online,li2016collaborative,gentile2017context,cesa2013gang,wu2016contextual}. When the relatedness among clients are provided, such as through an affinity graph, joint policy learning can be performed analytically \citep{cesa2013gang,wu2016contextual}. However, most of the existing collaborative bandit learning algorithms operate under a centralized setting, in which data from all clients is assumed to be directly accessible by a central server. As a result, these methods cannot address the demand for privacy and communication efficiency in online learning for distributed systems. Significant efforts are required to adapt these algorithms to distributed settings \cite{mahadik2020fast}. 

In this paper, we introduce a novel approach for federated bandit learning among heterogeneous clients, extending collaborative bandit learning to the standard federated learning setting. The goal is to ensure that every participating client achieves regret reduction compared to their independent learning, thereby motivating all clients to participate. As the first work of this kind, we focus on estimating a linear contextual bandit model \citep{lihongcontextual,abbasi2011improved} for each client, which is also the most commonly employed model in federated bandits.  
Not surprisingly, regret reduction in a population of heterogeneous clients can be realized by clustering the clients, where collective model estimation is only performed within each cluster. 
But the key challenges lie in the communication protocol in federated learning. First, the server lacks real-time access to each client's data, resulting in delayed inferences of client clusters. Second, the server can only estimate and broadcast one global model at a time \citep{chaoyanghe2020fedml,openfl_citation}. This can cause communication congestion and delay the model updates. Both of them cost regret.

To address these challenges we develop a two-stage federated clustered bandit algorithm. In the first stage, all clients perform pure exploration to prepare a non-parametric clustering of clients based on the statistical homogeneity test \citep{lidyclu}. Then in the second stage, a first-in-first-out queue is maintained on the server side to facilitate event-triggered communication \citep{dislinucb} at the cluster level. 
We rigorously establish the upper bounds of cumulative regret and communication cost for this algorithm.
Then, we empirically enhance the algorithm by allowing dynamic re-clustering of clients in the second stage and employ a priority queue to improve regret. We conduct comprehensive empirical comparisons of the newly proposed federated bandit algorithm against a set of representative baselines to demonstrate the effectiveness of our proposed framework.
\section{Related Work} \label{sec:relatedWork}
Our work is closely related to studies in federated bandit learning and collaborative bandits. In this section, we discuss the most representative solutions in each area and highlight the relationships between them and our work.

\noindent\textbf{Federated Linear Contextual Bandits:} 
There have been several works that study the federated linear contextual bandit setting, where multiple clients work collaboratively to minimize their cumulative regret with the coordination of a single central server \citep{dislinucb,Liasynch,huangfedcontextualbandits}. \cite{dislinucb} introduced DisLinUCB, where a set of homogeneous clients, each with the same linear bandit parameter, conduct joint model estimation through sharing sufficient statistics with a central server. \cite{Liasynch} and \cite{he2022a} extended this setting by introducing an event-triggered asynchronous communication framework to achieve sub-linear communication cost as well as sub-linear regret in a homogeneous environment. Additionally, \cite{dubey2020differentially} considers deferentially private federated contextual bandits in peer-to-peer communication networks. Fed-PE, proposed in \citep{huangfedcontextualbandits}, is a federated phase-based elimination algorithm for linear contextual bandits that handles both homogeneous and heterogeneous settings. However, in their setting, the client is trying to learn the fixed context vectors associated with each arm as opposed to the linear reward parameter (which is known in their setting). With the exception of Fed-PE, which utilizes a different bandit formulation altogether, all of these prior works rely on strong assumptions of client homogeneity, while our work seeks to extend federated linear contextual bandit learning to a heterogeneous environment. 

\noindent\textbf{Collaborative Bandits:}
Collaborative bandits seek to leverage similarities between heterogeneous clients to improve bandit learning. Clustered bandit algorithms are one example, where similar clients are grouped together, and a shared bandit model is used for all clients in the same group\citep{gentile2014online,li2016collaborative,gentile2017context,cesa2013gang,wu2016contextual}. \cite{gentile2014online} assumed that observations from different clients in the same cluster are associated with the same underlying bandit parameter. \cite{gentile2017context} further studied context-dependent clustering of clients, grouping clients based on their similarity along their bandit parameter's projection onto each context vector. \cite{lidyclu} unified non-stationary and clustered bandit by allowing for a time varying bandit parameter for each client, which requires online estimation of the dynamic cluster structure at each time.  Other works leverage explicit inter-client and inter-arm relational structures, such as social networks \citep{buccapatnam2013multi,cesa2013gang,wu2016contextual, hongbayesian, kvetonsideobservations2012, mannorsideobservations} to facilitate collaboration. However, most existing collaborative bandit solutions are designed under a centralized setting, where all clients' observation data is readily available at a central server. \cite{liufederatedonlineclusteringofbandits} and \cite{korda2016_peer_clustering} consider online cluster estimation in a distributed setting. However, their federated learning architectures do not align with the standard federated learning architecture and real world implementations where a single central server broadcasts a single global model at each timestep \citep{McMahanMRA16,chaoyanghe2020fedml,openfl_citation}. Specifically, \cite{liufederatedonlineclusteringofbandits} utilizes a hierarchical server configuration that is distinct from the standard single-server FL setup. On the other hand, \cite{korda2016_peer_clustering} is based on a peer-to-peer (P2P) communication network, which stands in contrast to the centralized communication model and also overlooks the potential communication costs associated with such a decentralized approach.
\section{Methodology}
\label{sec:method}
In this section, we begin by outlining the problem setting investigated in this work. Then we present our two-stage federated clustered bandit algorithm designed to serve a population of heterogeneous clients under the standard communication setup in federated learning. We provide theoretical analysis of the upper regret bound for our developed solution. Lastly, we introduce a set of improvements to our proposed algorithm, including dynamic re-clustering of clients using an adaptive clustering criterion, and the implementation of a priority queue to enhance online performance, both of which were found empirically effective.

\subsection{Problem Setting} \label{subsec:setting}

A federated bandit learning system consists of two components: 1) $N$ clients, which take actions and get reward feedback from their environment (e.g., edge devices in a recommendation system interacting with end users) and 2) a central server coordinating client communication for collaborative model estimation. In each time step $t=1,2,...T$, each client $i \in {N}$ chooses an action $x_{t,i}$ from its action set $\mathcal{A}_{t,i} = \{{x}_{t,1}, {x}_{t,2}...,x_{t,K}\}$, where $x\in \bR^d$. Adhering to the standard linear reward assumption from \citep{li2010exploitation}, the corresponding reward received by client $i$ is $y_{t,i} = \langle \theta_i^{*},{x}_{t,i}\rangle + \eta_t$, where noise $\eta_t$ comes from a $\sigma^2$ sub-Gaussian distribution, and $\theta_{i}^*$ is the true linear reward parameter for client $i$.  Without loss of generality, we assume $\|x\|_2 \leq 1$ and $\|\theta_{i}^*\| \leq 1$.

The learning system interacts with the environment for $T$ rounds, aiming to minimize the cumulative pseudo-regret $R_T = \sum_{t = 0}^T \sum_{i=0}^N \max_{x\in \cA_{t,i}} \langle \theta_i^{*},x\rangle - \langle \theta_i^{*},{x}_{t,i}\rangle$.

Following the federated learning setting, we assume a star-shaped communication network, where the clients cannot directly communicate among themselves. Instead, they must share the learning algorithm's parameters (e.g., gradients, model weights, or sufficient statistics) through the central server. To preserve data-privacy, raw observations collected by each client $(x_{t,i},y_{t,i})$ are stored locally and will not be shared with the  server.
At every timestep $t=1,...T$, the central server is capable of using the shared learning algorithm to update and broadcast one model to the selected clients. The communication cost is defined as the amount of sufficient statistics communicated across the learning system  over the entire time-horizon.


Unlike existing federated bandit works \citep{dislinucb, Liasynch, he2022a} which assume homogeneous clients, we adopt the standard clustered bandit setting to model a heterogeneous learning environment. Without an underlying cluster structure in the environment, collaboration between clients would be infeasible. Therefore, we assume that clients sharing similar reward models form clusters, collectively represented as $\cC = \{C_1,C_2,...,C_M \}$. The composition and quantity of these clusters, are unknown to the system, necessitating on-the-fly inference. Consistent with prevalent clustered bandit practices \citep{gentile2014online,gentile2017context,liufederatedonlineclusteringofbandits}, we use unknown environmental parameters $\epsilon$ and $\gamma$ to delineate the ground-truth cluster structures:

\begin{assumption}[Proximity within clusters] \label{ass:proximity} For any two clients $i,j$ within a particular cluster $C_k \in \cC$, $\|\theta^*_i - \theta^*_j\| \leq \epsilon$ where $\epsilon = 1/(N\sqrt{T})$.
\end{assumption}

\begin{assumption}[Separateness among clusters] \label{ass:separation} For any two clusters $C_k,C_l \in \cC, \; \forall i\in C_k,\, j\in C_l$, $\|\theta^*_i - \theta^*_j\| \geq \gamma \geq 0 \;$ \citep{gentile2014online,gentile2017context,lidyclu,liufederatedonlineclusteringofbandits}.
\end{assumption}

 Contrary to previous clustered bandit assumptions of identical reward models within a cluster, our Assumption \ref{ass:proximity} offers more flexibility. It enables similar clients (represented by $\epsilon$) to collaborate, amplifying the system's collaborative benefit. We also adopt a standard context regularity assumption found in clustered bandits.

\begin{assumption}[Context regularity]\label{as:contextreg}
At each time $t$, $\forall i\in \{N\}$ arm set $\cA_{t,i}$ is generated i.i.d. from a sub-Gaussian random vector $x_{t,i} \in \bR^{d}$, such that $\mathbb{E}[x_{t,i}x_{t,i}^{\top}]$ is full-rank with minimum eigenvalue $\lambda_c>0$ \citep{gentile2014online,gentile2017context,li2019improved}.
\end{assumption}

Notably, our context regularity Assumption \ref{as:contextreg} is weaker than those in \citep{gentile2014online,gentile2017context,li2019improved}. Ours only requires the lower bound on the minimum eigenvalue of $\mathbb{E}[x_{t,i}x_{t,i}^{\top}]$, while others require the imposition of a variance condition on the stochastic process generating $x_{t,i}$. 


To facilitate our later discussions, we use $\cH_{t,i}=\left\{(x_{\tau,i},y_{\tau,i})\right\}_{\tau=1}^{t}$ to represent the set of $t$ observations from client $i$. $(\bX_{i},\by_{i})$ denote design matrices and feedback vectors of $\cH_{t,i}$ where each row of $\bX$ is the context vector of an arm and the corresponding element in $\by$ is the observed reward for this arm. Note that $\mathbf{X}_{j}$ only contains the observations made by client $j$ and does not include aggregated observations from other clients in the cluster. We also define the weighted norm of a vector $x\in \bR^d$ as $\|x\|_A =\sqrt{x^\top A x}$, where $A\in \bR^{d\times d}$ is a positive definite matrix.

\subsection{Algorithm: \model} \label{subsec:alg}
In this section, we present our two-stage federated clustered bandit algorithm. As discussed in Section \ref{sec:intro}, there are two primary challenges associated with extending clustered bandit learning to the federated learning setting. The first challenge is to identify the subsets of heterogeneous clients that can benefit from collaboration among themselves. 
To achieve this, in the first stage of our algorithm, all clients conduct random exploration ahead of a non-parametric clustering of clients based on the statistical homogeneity test \citep{lidyclu}. The second challenge arises from the communication network setting in federated learning framework, which allows only one model to be broadcast at each time step \citep{chaoyanghe2020fedml,openfl_citation}. 
To accommodate this constraint, a first-in-first-out queue is utilized on the server side, enabling event-triggered collaboration \citep{dislinucb} at the cluster level during our algorithm's second phase. 
We provide an overview of the key components of our algorithm, with the full details available in Algorithm \ref{alg:simplified}.

\begin{figure}[h]
\includegraphics[height= 8cm, width=\columnwidth]{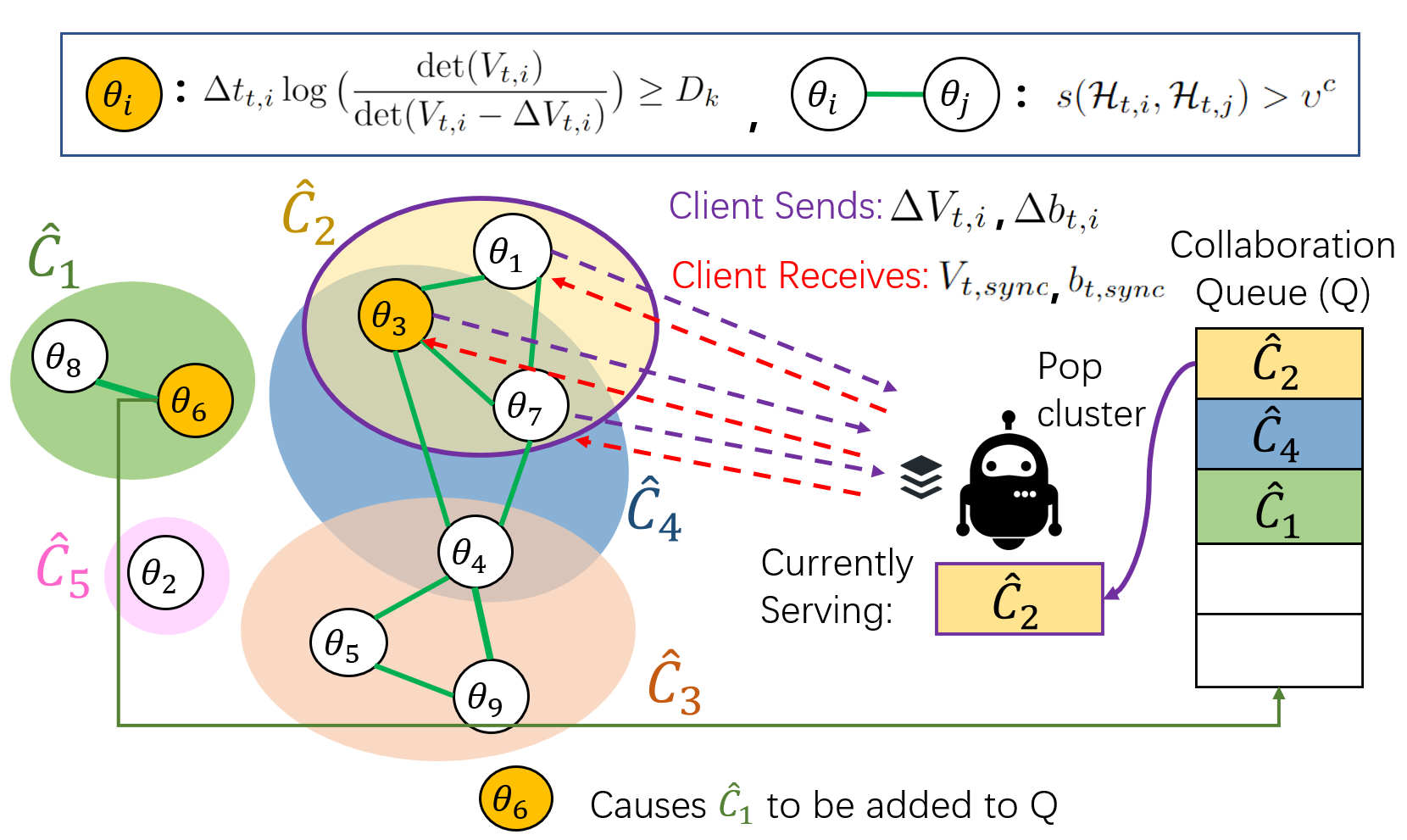}
 \vspace{-3mm}
\caption{Execution of \model \, after pure exploration phase $T_0$. Each client $i \in N$ is represented by a node in the client graph $\mathcal{G}$ on the left hand side. Edges between clients indicate potential collaborators, as defined by the homogeneity test. The colored ellipsoids represent the estimated clusters $\hat{\cC} = \{\hat{C}_1,...,\hat{C}_5\}$, which are the maximal cliques of $\mathcal{G}$. Clients exceeding their communication threshold are highlighted in orange. Currently, client $\theta_6$ has exceeded the communication threshold $D_1$ for cluster $\hat{C}_1$, which causes cluster $\hat{C}_1$ to be added to the queue. The server pops cluster $\hat{C}_2$ from the queue and facilitates collaboration among $\{\theta_1, \theta_3, \theta_7\}$. In the next timestep, the server will serve cluster $\hat{C}_4$, queued for removal.}
 \vspace{-4mm}
\label{fig:algo}
\end{figure}

\paragraph{Pure Exploration Phase} Under our relaxed context regularity assumption, we execute a short exploration phase of length $T_0$ to guarantee the accuracy of our homogeneity test. Our discussion on the choice of $T_0$ is deferred to Section \ref{subsec:proof}. Although our derived theoretical value for $T_0$ depends on an unknown environmental parameter $\gamma$, in practice, $T_0$ can be tuned as a hyperparameter.

During this exploration stage, for each $t\in \{0...T_0\}$, every client $i\in [N]$ selects an action $x_{t,i}$ by uniformly sampling from $\cA_{t,i}$ in parallel. After receiving reward $y_{t,i}$, each client updates their local sufficient statistics $V_{t,i}=V_{t-1,i}+{x}_{t,i} {x}_{t,i}^\top$ and $b_{t,i} =b_{t-1,i}+x_{t,i} y_{t,i}$. Upon completion of $T_0$ rounds of pure exploration, each client then shares its sufficient statistics ($V_{T_0,i}$, $b_{T_0,i}$) to the central server. We present the details of the exploration phase in Algorithm 1.

\begin{algorithm}[h]
    \caption{Pure Exploration Phase}\label{alg:exploration}
  \begin{algorithmic}[1]
    \FOR{$t=1,2,...,T_0$}
        \FOR{Agent $i \in N$}
            \STATE Choose arm $x_{t,i} \in \cA_{i,t}$ uniformly at random and observe reward $y_{t,i}$
            \STATE Update agent $i$: $\cH_{t,i}=\cH_{t-1,i}\cup (x_{t,i},y_{t,i})$, $V_{t,i}\pluseq x_{t,i}x_{t,i}^{\top}$, $b_{t,i}\pluseq x_{t,i}y_{t,i}$,
            \STATE $\Delta V_{t,i}\pluseq x_{t,i}x_{t,i}^{\top}$, $\Delta b_{t,i}\pluseq x_{t,i}y_{t,i}, \Delta t_{0,i}\pluseq 1$ 
        \ENDFOR
    \ENDFOR
  \end{algorithmic}
\end{algorithm}
\paragraph{Cluster Estimation}
The key challenge in online clustering of bandits is to measure the similarity between different bandit models. Previous works identify whether a set of clients share exactly the same underlying reward model; while we cluster similar clients (as defined by $\epsilon$) to widen the radius of beneficial collaboration. We realize this by testing whether $\|\theta^*_{1}-\theta^*_{2}\|\leq \epsilon$ via the homogeneity test introduced in \citep{lidyclu}.

Specifically, we utilize a $\chi^{2}$ test of homogeneity, where the test statistic $s(\cH_{t,1},\cH_{t,2})$ follows the non-central $\chi^{2}$-distribution \citep{chow1960tests,cantrell1991interpretation}. The test determines whether the parameters of linear regression models associated with two datasets are similar, assuming equal variance. Since $\theta_{1}^*$ and $\theta_{2}^*$ are unobservable, the test utilizes the maximum likelihood estimator (MLE) for $\theta$ on a dataset $\cH$, which we denote ${\vartheta}=(\bX^{\top}\bX)^{-}\bX^{\top}\by$, where $\left(\cdot\right)^{-}$ denotes the generalized matrix inverse:
\begin{align}
s(\cH_{t,1},\cH_{t,2})&=\frac{||\bX_{1}({\vartheta}_{1}-{\vartheta}_{1,2})||^{2}\!+\!||\bX_{2}({\vartheta}_{2}-{\vartheta}_{1,2})||^{2}}{\sigma^{2}} 
\label{eq:homo_test}
\end{align}
where ${\vartheta}_{1,2}$ denotes the estimator using data from $\cH_{t,1}$ and $\cH_{t,2}$.

When $s(\cH_{t,1},\cH_{t,2})$ exceeds a chosen threshold $\upsilon^c$, it indicates a deviation between the combined estimator and the individual estimators on the two datasets. Thus, we conclude $\|\theta_{1}^* - \theta_{2}^*\|> \epsilon$;
otherwise, we conclude $\|\theta_{1}^* - \theta_{2}^*\|\leq \epsilon$. 
Therefore, to determine the sets of clients that can collaborate, the central server performs this pairwise homogeneity test among each pair of clients. If two clients $i,j$ satisfy the homogeneity test $s(\mathcal{H}_{T_0,i},\mathcal{H}_{T_0,j})\geq \upsilon^c$, then we add an undirected edge between them in a client graph $\mathcal{G}$ indicating they benefit from mutual collaboration.

Next, our algorithm uses $\mathcal{G}$ to determine the clusters of clients that can benefit from collaboration. Because our algorithm allows collaboration between non-identical clients, we must ensure every client within a cluster is sufficiently similar to every other; otherwise, linear regret can be caused by the incompatible model sharing. 
For this purpose, we require each of our estimated clusters be a maximal clique of $\mathcal{G}$ (line 5 in Algorithm \ref{alg:clustering}).
We denote the set of resulting clusters as $\hat \cC = \{\hat{C}_1, \hat{C}_2, ... \hat{C}_{\hat{M}}\}$ 
and the set of cluster indices in $\cC$ that client $i$ belongs to as $\mathcal{K}_i$. 

Using the maximal cliques of $\mathcal{G}$ as the cluster estimates introduces a unique challenge that affects our subsequent algorithmic design: the estimated clusters may not be disjoint. In Figure \ref{fig:algo}, we can see that client 3, represented by $\theta_3$, is a member of two clusters, $\hat{C}_2$ and $\hat{C}_4$. 
Therefore, $\theta_3$ will receive shared model updates from clients $\{\theta_1,\theta_7,\theta_4\}$. However, the absence of an edge between $\theta_1$ and $\theta_4$ implies that simultaneous collaboration between $\{\theta_3, \theta_1\}$ and $\{\theta_3, \theta_4\}$ is not allowed by our algorithm. As a result, when $\theta_3$ is collaborating with $\theta_1$, it should only share its local data, excluding what it has received from the server.
Later we describe our queue-based sequential approach to resolve this.
\begin{algorithm}[h]
    \caption{Cluster Estimation}\label{alg:clustering}
  \begin{algorithmic}[1]
        \FOR{$(i,j) \in N$}
            \STATE\textbf{if} {$s(\cH_{T_0,i},\cH_{T_0,j}) \leq \upsilon^{c}$} \textbf{then} add edge $e(i,j)$ to $\mathcal{G}$
        \ENDFOR
        \STATE $\hat{\cC} = \{\hat{C}_1, \hat{C}_2, ... \hat{C}_{\hat{M}}\}=$ maximal\_cliques($\mathcal{G}$)
        \STATE Set $\mathcal{K}_{i}$ = $\{k: i\in \hat{C_k}\}$ for each client $i$
  \end{algorithmic}

\end{algorithm}

\paragraph{Optimistic Learning Phase}
Upon identifying client clusters suitable for collaboration, we proceed to the optimistic learning phase of our algorithm. Here, clients optimistically choose arms, utilizing the collaboration with other similar clients to enhance their local model estimates. 
At each time step $t\in \{T_0...T\}$, each client $i\in [N]$ optimistically selects an arm $x_{t,i} \in \cA_{t,i}$ using the UCB strategy based on its sufficient statistics $\{V_{t,i},b_{t,i}\}$:
\begin{equation}
    \label{eq:UCB}
    x_{t}=\argmax_{x \in \cA_{t,i}}{x^{\top}\hat{\theta}_{t-1,i}+\text{CB}_{t-1,i}(x)}
\end{equation}
where $\hat{\theta}_{t-1,i}= \overline{V}_{t-1,i}^{-1}b_{t-1,i}$ is the ridge regression estimator with regularization parameter $\lambda$; $\overline{V}_{t-1,i}=V_{t-1,i}+\lambda I$; and the confidence bound of reward estimation for arm $x$ is $\text{CB}_{t-1,i}(x)=\alpha_{t-1,i}\|x\|_{\overline{V}_{t-1,i}^{-1}}$, where 
$\alpha_{t-1,i}=\sigma \sqrt{2 \log \biggr( \frac{\det(\overline{V}_{t-1,i})^{1/2}}{ \delta \det(\lambda I)^{1/2}}\biggr)} + \sqrt{\lambda}$. Note that $V_{t,i}$ is formulated using data locally collected by client $i$ in conjunction with data from the clients with whom client $i$ has previously collaborated. After client $i$ observes reward $y_{t,i}$, it updates its local sufficient statistics to improve the reward estimates in future rounds. 

\paragraph{Communication Protocol} 


Our algorithm integrates the event-triggered communication protocol from \cite{dislinucb} to efficiently balance communication and regret minimization within clusters. It uses delayed communication, where clients store observations and rewards in a local buffer $\Delta V_{t,i}$ and $\Delta b_{t,i}$. Clients request server collaboration when the informativeness of the stored updates surpass a certain threshold. Specifically, If $\Delta t_{t,i} \log(\det(V_{t,i})/\det(V_{t,i}-\Delta V_{t,i}))\geq D_k$ for any $k \in \mathcal{K}_i$, with $D_k$ as the communication threshold for the estimated cluster $\hat{C}_k$, the client sends a collaboration request for $\hat{C}_k$.

Multiple clients across different clusters can trigger simultaneous communication requests, and single clients can request collaboration for multiple clusters because the estimated clusters are not disjoint. In such cases, the central server uses a first-in-first-out queue (FIFO) $Q$ to manage the clusters needing collaboration one-by-one. At each timestep $t\in \{T_0...T\}$, it serves one cluster from the queue, ensuring no inter-cluster data contamination by computing $V_{t,sync}$ and $b_{t,sync}$ using only the clients' upload buffers. Despite the single
global-model restriction in federated learning, our algorithm still helps multiple groups of similar clients in a pseudo round-robin manner. As a result, a cluster of clients can resume engaging with the environment without being hindered by the server's processing time for unrelated clusters that don't offer collaborative advantage.

This queuing strategy enhances the system's efficiency, allowing clusters to re-engage with the environment without idling for the server's processing of all other clusters. However, before computing and sharing $\{V_{t,sync}, b_{t,sync}\}$ for collaboration, our algorithm mandates the complete upload of local buffers from every client in that cluster. This signifies that our algorithm employs asynchronous communication at the cluster level but still requires synchronous communication among clients within the same cluster. In practical distributed systems, clients often exhibit variable response times and occasional unavailability. Adapting our algorithm to support asynchronous communication at the individual client level, such that they can collaborate without awaiting updates from all other clients within the cluster, remains an important open research question.

\begin{algorithm}[h]
    \caption{\model}\label{alg:simplified}
  \begin{algorithmic}[1]
    \STATE \textbf{Input:} $T$, $\delta \in (0,1)$, exploration length $T_0$, $\lambda>0$, neighbor identification $\upsilon^c$
    \STATE \textbf{Initialization:} \textbf{Clients:} $\forall i \in N$: $V_{0,i}=\textbf{0}_{d \times d}, b_{0,i}=\textbf{0}_{d}, \cH_{0,i}=\emptyset,\Delta V_{0,i}=\textbf{0}_{d \times d}, \Delta b_{0,i}=\textbf{0}_{d},\Delta t_{i,0}=0$, $\mathcal{K}_{i} = \emptyset$ ; \textbf{Server: }Client graph $\mathcal{G}$ with $N$ nodes, FIFO queue $Q$;
    \STATE Pure Exploration Phase (Algorithm \ref{alg:exploration})
    \STATE Cluster Estimation (Algorithm \ref{alg:clustering})
    \STATE Cluster communication thresholds $\mathcal{D}=[D_1,...,D_{\hat{M}}]$ where $D_k = (T \log |\hat{C}_k|T) / (d |\hat{C}_k|)$
    \FOR{$t=T_0+1,...,T$}
        \FOR{Client $i \in N$}
            \STATE Choose arm $x_{t,i} \in \cA_{t,i}$ by Eq. \ref{eq:UCB} observe reward $y_{t,i}$
            \STATE Update client $i$: $\cH_{t,i}=\cH_{t-1,i}\cup (x_{t,i},y_{t,i})$, $V_{t,i}\pluseq x_{t,i}x_{t,i}^{\top}$, $b_{t,i}\pluseq x_{t,i}y_{t,i}$, \STATE $\Delta V_{t,i}\pluseq x_{t,i}x_{t,i}^{\top}$, $\Delta b_{t,i}\pluseq x_{t,i}y_{t,i}, \Delta t_{t,i}\pluseq 1$ 
            \FOR{$k \in \mathcal{K}_i$}
                \IF{$\Delta t_{t,i} \log(\det(V_{t,i})/\det(V_{t,i}-\Delta V_{t,i}))\geq D_k$}
                    \STATE Collaboration Request: Server adds $\hat{C}_k$ to $Q$
                \ENDIF
            \ENDFOR
        \ENDFOR
        \IF{$Q$ is non-empty}
            \STATE Server pops $\hat{C_k}$ from $Q$
            \STATE Every client $i\in\hat{C_k}$ sends  $\Delta V_{t,j}$, $\Delta b_{t,j}$ to server
            \STATE Each client in $\hat{C}_k$ receives $V_{t,sync} = \sum_{j \in \hat{C_k}}\Delta V_{t,j}$, $b_{t,sync} = \sum_{j \in \hat{C_k}}\Delta b_{t,j}$ from the server
            \STATE Local client updates: $V_{t,i}\pluseq V_{t,sync} - \Delta V_{t,i}$, $b_{t,i}\pluseq b_{t,sync} - \Delta b_{t,i}$, $\Delta V_{t,i}=0$, $\Delta b_{t,i}=0, \Delta t_{t,i}=0$ 
        \ENDIF
    \ENDFOR
  \end{algorithmic}
\end{algorithm}

\subsection{Theoretical Results} \label{subsec:proof}
As presented in Section \ref{subsec:alg}, our algorithm first utilizes a homogeneity test to cluster similar clients in a heterogeneous environment. We prove that with our homogeneity test, Algorithm \ref{alg:clustering} correctly identifies the underlying clusters.
\begin{theorem}[\textbf{Clustering Correctness}] \label{thm:clustering_correctness}
Under the condition that we set the homogeneity test threshold $\upsilon^c\geq F^{-1}(1-\frac{\delta}{N^2}, df, \psi^c)$, with probability at least $1-\delta$, we have  
$\hat{\cC} = \cC$.
\end{theorem}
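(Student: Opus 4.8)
The plan is to reduce the set equality $\hat{\cC}=\cC$ to the event that \emph{every} pairwise homogeneity test is decided correctly, and then bound the failure probability by a union bound. Concretely, I would first show that on the event
$$\mathcal{E}=\bigl\{\,\text{edge } e(i,j)\in\mathcal{G}\ \Longleftrightarrow\ i,j\ \text{lie in the same true cluster}\,\bigr\},$$
the graph $\mathcal{G}$ is a disjoint union of cliques whose maximal cliques are exactly $\cC$. Indeed, Assumptions~\ref{ass:proximity} and~\ref{ass:separation} with $2\epsilon<\gamma$ make ``same cluster'' an equivalence relation: if $\|\theta^*_i-\theta^*_j\|\le\epsilon$ and $\|\theta^*_j-\theta^*_k\|\le\epsilon$ then $\|\theta^*_i-\theta^*_k\|\le 2\epsilon<\gamma$, forcing $i,k$ into the same cluster, so the true clusters are disjoint. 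On $\mathcal{E}$ each true cluster is a clique with no cross-cluster edges, hence the maximal cliques of $\mathcal{G}$ coincide with $\cC$. It therefore suffices to prove $\Pr[\mathcal{E}]\ge 1-\delta$.

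Next I would pin down the law of the statistic. Under the linear-Gaussian model the Chow-type statistic of Eq.~\eqref{eq:homo_test} can be rewritten (using that the pooled estimator equals a Gram-weighted average of $\vartheta_i,\vartheta_j$) as a single quadratic form $s(\cH_{T_0,i},\cH_{T_0,j})=\sigma^{-2}(\vartheta_i-\vartheta_j)^\top[(\bX_i^\top\bX_i)^{-1}+(\bX_j^\top\bX_j)^{-1}]^{-1}(\vartheta_i-\vartheta_j)$, so it follows a non-central $\chi^2$ law with $df=d$ degrees of freedom (once the Gram matrices are full rank, which holds w.h.p. after exploration) and non-centrality parameter
$$\psi_{ij}=\frac{1}{\sigma^2}\,(\theta^*_i-\theta^*_j)^\top\Bigl[(\bX_i^\top\bX_i)^{-1}+(\bX_j^\top\bX_j)^{-1}\Bigr]^{-1}(\theta^*_i-\theta^*_j).$$
The two facts I would exploit are that the non-central $\chi^2$ distribution is stochastically increasing in $\psi$, and that $\psi_{ij}$ is monotone in $\|\theta^*_i-\theta^*_j\|$ through this quadratic form. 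Writing $\psi^c$ for the largest non-centrality compatible with same-cluster pairs (an upper bound on $\psi_{ij}$ when $\|\theta^*_i-\theta^*_j\|\le\epsilon$; since $H:=[(\bX_i^\top\bX_i)^{-1}+(\bX_j^\top\bX_j)^{-1}]^{-1}\preceq \bX_i^\top\bX_i$ has $\lambda_{\max}\le T_0$, one gets $\psi^c\lesssim T_0\epsilon^2/\sigma^2\le 1/(\sigma^2 N^2)$, essentially negligible), the threshold $\upsilon^c=F^{-1}(1-\delta/N^2,df,\psi^c)$ is by definition the $(1-\delta/N^2)$-quantile of $\chi^2(df,\psi^c)$.

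With this setup I would bound the two error types. For a \textbf{same-cluster} pair, $\psi_{ij}\le\psi^c$, so stochastic monotonicity gives $\Pr[s>\upsilon^c]\le\Pr[\chi^2(df,\psi^c)>\upsilon^c]=\delta/N^2$, i.e.\ the edge is correctly added except with this probability. The harder direction is the \textbf{different-cluster} pair, where $\|\theta^*_i-\theta^*_j\|\ge\gamma$ and I must certify $\Pr[s\le\upsilon^c]\le\delta/N^2$. Here I would lower-bound $\psi_{ij}$ via the spectrum of the exploration Gram matrices: under Assumption~\ref{as:contextreg}, a matrix concentration (Chernoff/Bernstein) bound yields $\lambda_{\min}(\bX_i^\top\bX_i)\gtrsim T_0\lambda_c$ with high probability once $T_0=\Omega(d+\log(N/\delta))$, and since $\lambda_{\min}(H)\ge\tfrac12\min_i\lambda_{\min}(\bX_i^\top\bX_i)$ this gives $\psi_{ij}\gtrsim\gamma^2 T_0\lambda_c/\sigma^2$. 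Choosing $T_0$ large enough that this lower bound $\psi_\gamma$ forces $\Pr[\chi^2(df,\psi_\gamma)\le\upsilon^c]\le\delta/N^2$ closes the case, and solving the corresponding non-central $\chi^2$ tail inequality is precisely what produces the stated dependence of $T_0$ on the unknown $\gamma$. Finally, since exactly one hypothesis holds for each of the at most $\binom{N}{2}<N^2$ pairs, a union bound gives $\Pr[\mathcal{E}^c]\le N^2\cdot(\delta/N^2)=\delta$, hence $\Pr[\hat{\cC}=\cC]\ge 1-\delta$.

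I expect the \textbf{different-cluster case} to be the main obstacle: turning the parameter gap $\gamma$ into a guaranteed lower bound on $\psi_{ij}$ needs the minimum-eigenvalue concentration of $\bX^\top\bX$ under the weakened context-regularity assumption (only $\lambda_c>0$ is available, not a full variance/boundedness condition), together with a clean non-central $\chi^2$ lower-tail bound. A secondary subtlety is that the exact non-central $\chi^2$ distribution presumes Gaussian noise, so under the paper's sub-Gaussian noise one must either invoke Gaussianity for the test or replace the exact distribution by a concentration surrogate.
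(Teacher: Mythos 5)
Your proposal is correct in structure and matches the paper's overall strategy: reduce $\hat{\cC}=\cC$ to the correctness of all $N^2$ pairwise homogeneity tests, bound the non-centrality parameter by $\psi^c=1/\sigma^2$ for same-cluster pairs exactly as the paper does (via the Rayleigh--Ritz and $\lambda_{\max}(\bX_i^\top\bX_i)\le T$ chain), and close with a union bound. Where you genuinely diverge is the different-cluster direction. You propose a direct frequentist type-II bound: lower-bound $\psi_{ij}\gtrsim \gamma^2 T_0\lambda_c/\sigma^2$ via minimum-eigenvalue concentration and choose $T_0$ so that $F(\upsilon^c;df,\psi_{ij})\le \delta/N^2$ per pair. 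The paper instead imports a posterior-probability (precision) argument from its Lemma~\ref{lem:RealNegativeOutOfPredictedNegative}: it bounds $P(\|\theta_i^*-\theta_j^*\|\le\epsilon \mid s\le\upsilon^c)$ from below by controlling the ratio $P(FN)/P(TN)$, which requires lower-bounding the prior probability that two uniformly sampled clients share a cluster (at least $1/M$), and this is where the factor $M-1$ enters the definition $\psi^d=F^{-1}\bigl(\frac{1-F(\upsilon^c;df,\psi^c)}{M-1};df,\upsilon^c\bigr)$ and hence the required $T_0=\frac{16\psi^d\sigma^2}{\lambda_c\gamma^2}$. Your route avoids the population-ratio step entirely and yields a per-pair guarantee that does not depend on $M$, at the cost of needing an explicit non-central $\chi^2$ lower-tail inversion; the paper's route buys a statement conditioned on the test outcome but ties the exploration length to the number of clusters. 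Both need the same ingredient you flag as the main obstacle --- $\lambda_{\min}(\bX_i^\top\bX_i)\gtrsim \lambda_c T_0$ under the weakened context regularity --- which the paper obtains from Lemma~\ref{lem:min_eig}. Two of your side remarks are well taken and in fact sharper than the paper: the transitivity argument needs $2\epsilon<\gamma$ (the paper's Assumption~\ref{ass:separation} only states $\gamma\ge 0$), and the exact non-central $\chi^2$ law of the Chow statistic presumes Gaussian noise, which the paper's sub-Gaussian assumption does not strictly deliver.
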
 
$F^{-1}(\cdot)$ is the inverse of the CDF of the non-central $\chi^2$ distribution, and $\psi^c \doteq \frac{1}{\sigma^2}$. We provide the complete proof of Theorem \ref{thm:clustering_correctness} in Appendix \ref{appendix:proof:clustering_correctness}. Moreover, Algorithm \ref{alg:simplified} adopts a UCB-based arm selection, which requires the construction of a confidence ellipsoid.
\begin{lemma}[\textbf{Confidence Ellipsoids}] \label{confidence_sets}
Suppose client $i$ is a member of cluster $\hat{C}_k\in \hat{\cC}$, and is therefore collaborating with clients $j\in \hat{C_k}$. For any $\delta>0$, with probability at least $1-\delta$, for all $t\geq 0$ and all clients $i\in {N}$, $\theta_i^*$ lies in the set:
    \begin{equation*}
    \begin{split}
    \beta_{t,i} = \biggl\{ \theta \in \bR^d : \big\|\hat{\theta}_{t,i}- \theta\big\|_{\overline{V}_{t,i}} \leq \sigma\sqrt{2 \log \biggr( \frac{\det(\overline{V}_{t,i})^{1/2}}{ \det(\lambda I)^{1/2}\delta}\biggr)} \\
    + \sqrt{\lambda} + \bigg\|\sum_{j \in \hat{C}_{k} \setminus  \{i\}}^{} \mathbf{X}_{j}^\top\mathbf{X}_j(\theta_{j}^{*}-\theta_{i}^{*})\bigg\|_{\overline{V}_{t,i}^{-1}}\biggr\}
    \end{split}
    \end{equation*}
\end{lemma}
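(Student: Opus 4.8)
The plan is to analyze the ridge estimator error $\hat\theta_{t,i}-\theta_i^*$ directly and split it into the three contributions that appear on the right-hand side of the claimed ellipsoid. Recall that, after the in-cluster aggregation, client $i$'s statistics satisfy $V_{t,i}=\sum_{j\in\hat C_k}\bX_j^\top\bX_j$ and $b_{t,i}=\sum_{j\in\hat C_k}\bX_j^\top\by_j$, where $\bX_j,\by_j$ are the observations of client $j$ currently incorporated into client $i$'s model and $\by_j=\bX_j\theta_j^*+\boldsymbol\eta_j$ with $\boldsymbol\eta_j$ the stacked sub-Gaussian noise. Substituting the reward model and inserting $\theta_i^*$ via $\sum_{j}\bX_j^\top\bX_j\theta_j^* = \sum_j\bX_j^\top\bX_j\theta_i^* + \sum_{j\ne i}\bX_j^\top\bX_j(\theta_j^*-\theta_i^*)$ together with $\sum_j\bX_j^\top\bX_j=\overline V_{t,i}-\lambda I$, I obtain the decomposition
\begin{align*}
\hat\theta_{t,i}-\theta_i^* &= \overline V_{t,i}^{-1}\sum_{j\in\hat C_k}\bX_j^\top\boldsymbol\eta_j \;-\; \lambda\,\overline V_{t,i}^{-1}\theta_i^* \\
&\quad +\; \overline V_{t,i}^{-1}\sum_{j\in\hat C_k\setminus\{i\}}\bX_j^\top\bX_j(\theta_j^*-\theta_i^*),
\end{align*}
where the first term is the pooled noise, the second is the ridge regularization bias, and the third is the cross-client heterogeneity bias (the $j=i$ summand vanishes).

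The next step is to take the $\overline V_{t,i}$-weighted norm and apply the triangle inequality to these three terms. The routine algebraic fact here is the identity $\|\overline V_{t,i}^{-1}w\|_{\overline V_{t,i}}=\|w\|_{\overline V_{t,i}^{-1}}$, valid for any vector $w$ since $\overline V_{t,i}$ is positive definite; applying it converts each term into a $\overline V_{t,i}^{-1}$-norm. The heterogeneity term then becomes exactly $\big\|\sum_{j\in\hat C_k\setminus\{i\}}\bX_j^\top\bX_j(\theta_j^*-\theta_i^*)\big\|_{\overline V_{t,i}^{-1}}$, matching the third term of the claim verbatim, so no further work is needed there. For the regularization term, using $\overline V_{t,i}\succeq\lambda I$ gives $\overline V_{t,i}^{-1}\preceq\lambda^{-1}I$, hence $\lambda\|\theta_i^*\|_{\overline V_{t,i}^{-1}}\le\sqrt\lambda\,\|\theta_i^*\|\le\sqrt\lambda$ by the assumption $\|\theta_i^*\|\le1$, yielding the $\sqrt\lambda$ summand.

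The main obstacle is controlling the pooled noise term $\big\|\sum_{j\in\hat C_k}\bX_j^\top\boldsymbol\eta_j\big\|_{\overline V_{t,i}^{-1}}$, which is where the self-normalized martingale bound of \citep{abbasi2011improved} must be invoked. The care required is that the summand now ranges over observations drawn from \emph{several} clients at \emph{different} times and pooled through delayed, event-triggered communication, rather than a single client's i.i.d.\ stream. I would therefore enumerate all $(j,\tau)$ observation pairs aggregated into $\overline V_{t,i}$ as a single ordered sequence and build a filtration under which each context is predictable and each noise increment is conditionally $\sigma^2$-sub-Gaussian; since the contexts are chosen before their rewards are observed and the noise is independent across clients and rounds, both properties hold. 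Because $\overline V_{t,i}=\lambda I+\sum_{(j,\tau)}x_{\tau,j}x_{\tau,j}^\top$ is precisely the regularized Gram matrix of this pooled sequence, the uniform-in-time self-normalized bound applies and gives, with probability at least $1-\delta$ simultaneously for all $t$ and all $i$, $\big\|\sum_{j\in\hat C_k}\bX_j^\top\boldsymbol\eta_j\big\|_{\overline V_{t,i}^{-1}}\le\sigma\sqrt{2\log\big(\det(\overline V_{t,i})^{1/2}/(\det(\lambda I)^{1/2}\delta)\big)}$. Crucially, the uniform-in-time nature of that bound is what lets it survive the fact that the set of aggregated observations is itself data-dependent and grows only at communication events. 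Combining the three bounds via the triangle inequality yields the stated confidence ellipsoid, and a union bound over the $N$ clients preserves the $1-\delta$ confidence level.
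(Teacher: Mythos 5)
Your proposal is correct and follows essentially the same route as the paper's proof: the same three-term decomposition of $\hat\theta_{t,i}-\theta_i^*$ into pooled noise, ridge bias, and cross-client heterogeneity bias, the same triangle-inequality and $\overline V_{t,i}^{-1}\preceq \lambda^{-1}I$ bounds, and the same invocation of the self-normalized martingale inequality of \citet{abbasi2011improved} followed by a union bound over clients. Your extra care in constructing the filtration for the pooled, event-triggered observation sequence is a detail the paper's proof glosses over, but it does not change the argument.
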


We provide the complete proof of Lemma \ref{confidence_sets} in Appendix \ref{appendix:proof:confidence_sets}. 

Our algorithm enables collaboration among heterogeneous clients, which introduces extra biases represented by the term $H = \|\sum_{j \in \hat{C}_{k}\setminus \{i\}} \mathbf{X}_{j}^\top\mathbf{X}_j(\theta_{j}^{}-\theta_{i}^{})\|_{\overline{V}_{t,i}^{-1}}$.  With improved analysis compared with \cite{lidyclu}, we utilize our cluster estimation procedure Algorithm \ref{alg:clustering} to control the magnitude of the bias term $H$ by judiciously picking the threshold $\upsilon^{c}$.

Then, based on the constructed confidence ellipsoid, we prove Theorem \ref{thm:regret_comm}, which provides upper bounds of the cumulative regret $R_T$ and communication cost $C_T$ incurred by \model. While the good/bad epoch decomposition used in our analysis is first introduced by \citet{dislinucb}, additional care needs to be taken when bounding the extra regret introduced by the delays in serving clusters before they can be removed from the queue. We present the complete proof of Theorem \ref{thm:regret_comm} in Appendix \ref{appendix:proof:regret_comm}.
\begin{theorem}[\textbf{Regret and Communication Cost}]\label{thm:regret_comm}
With an exploration phase length of $T_0 = \frac{16\psi^d \sigma^2}{\lambda_c \gamma^2}$, with probability $1-\delta$ our protocol achieves a cumulative regret of 
\begin{equation}
\begin{split}
    R_T = O\biggr( \frac{N\psi^d \sigma^2}{\lambda_c \gamma^2} + \sum_{k=1}^M d\sqrt{|C_k|T}\log^{2}(|C_k|T) \\ + d|C_k|^2M\log(|C_k|\,T)\biggr),
\end{split}
\end{equation}
where $\psi^{d}=F^{-1}\big(\frac{\delta}{N^{2}(M-1)};d,\upsilon^{c}\big)$,
with communication cost
\begin{equation} 
    C_{T}= O(Nd^2) + \sum_{k=1}^M O(|C_k|^{1.5} \cdot d^3).
\end{equation}

\end{theorem}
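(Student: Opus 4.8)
The plan is to work on the intersection of the two high-probability events already secured: correct cluster recovery, $\hat\cC=\cC$ (Theorem~\ref{thm:clustering_correctness}), and validity of every client's confidence ellipsoid (Lemma~\ref{confidence_sets}). A union bound keeps the whole analysis inside an event of probability $1-\delta$ after rescaling the per-event failure probabilities. On this event I would split $R_T$ into (i) the regret of the pure exploration phase, (ii) the within-cluster optimistic-learning regret, and (iii) the extra regret caused by the server's single-model queueing constraint, bounding each separately; the communication cost is handled by counting the scalars transmitted at the one post-exploration upload and at each subsequent cluster synchronization.

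First I would dispose of the exploration term. Since $\|x\|_2\le1$ and $\|\theta_i^*\|\le1$, the instantaneous regret of any client is at most $2$, so the $T_0$ rounds of uniform sampling by all $N$ clients contribute $O(NT_0)$; plugging in $T_0=\tfrac{16\psi^d\sigma^2}{\lambda_c\gamma^2}$ gives the first stated term. The purpose of this length is that Assumption~\ref{as:contextreg} ($\lambda_c>0$) forces the least eigenvalue of $V_{T_0,i}$ to grow like $\lambda_c T_0$, which is precisely what Theorem~\ref{thm:clustering_correctness} needs to separate clusters at resolution $\gamma$; thus the same $T_0$ paid for in regret buys exact clustering.

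Next, on the event $\hat\cC=\cC$ collaboration is confined to genuinely $\epsilon$-close clients, so I would analyze each cluster in isolation. The preliminary estimate is to control the bias $H=\big\|\sum_{j\in C_k\setminus\{i\}}\mathbf{X}_j^\top\mathbf{X}_j(\theta_j^*-\theta_i^*)\big\|_{\overline{V}_{t,i}^{-1}}$ from Lemma~\ref{confidence_sets}: using $\mathbf{X}_j^\top\mathbf{X}_j\preceq\overline{V}_{t,i}$ gives $\|\mathbf{X}_j^\top\mathbf{X}_j v\|_{\overline{V}_{t,i}^{-1}}\le\|v\|_{\mathbf{X}_j^\top\mathbf{X}_j}\le\sqrt{t}\,\|v\|$, and combined with $\|\theta_j^*-\theta_i^*\|\le\epsilon=1/(N\sqrt T)$ (Assumption~\ref{ass:proximity}) over at most $|C_k|$ summands this yields $H=O(1)$. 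Hence every cluster's confidence width matches ordinary LinUCB up to constants, and I can import the good/bad-epoch decomposition of \citet{dislinucb} with the homogeneous population replaced by cluster $C_k$ pooling $|C_k|T$ observations: the elliptical-potential lemma bounds the good-epoch regret by $O\!\big(d\sqrt{|C_k|T}\log^2(|C_k|T)\big)$, where the extra logarithm reflects the staleness of the statistics used between synchronizations, and the number of bad epochs is $O(d\log(|C_k|T))$ by telescoping $\log\det\overline{V}$.

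The crux, and the only place that genuinely departs from \citet{dislinucb}, is the last term $\sum_k d|C_k|^2M\log(|C_k|T)$, which is pure overhead from the rule that the server broadcasts a single model per round. Here I would first observe that at most $M$ distinct clusters can ever be pending, so a request for $C_k$ is served after a wait of at most $M-1$ rounds, during which all $|C_k|$ of its clients keep acting on stale statistics. Bounding the delay regret then amounts to multiplying the number of synchronization-triggering events for $C_k$ by this worst-case wait, by the $|C_k|$ affected clients, and by the per-round regret accrued on stale statistics; summing over clusters gives the claimed order. The matching communication cost follows from the same census of synchronizations — $O(\sqrt{|C_k|}d)$ rounds, each transmitting $O(|C_k|d^2)$ scalars, for $O(|C_k|^{1.5}d^3)$ per cluster — together with the one-time $O(Nd^2)$ upload of $\{V_{T_0,i},b_{T_0,i}\}$ that feeds the homogeneity test. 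I expect this delay analysis to be the main obstacle: unlike the instantaneous broadcasts of \citet{dislinucb}, synchronizations now queue, and one must show both that the queue drains quickly enough and that the accumulated staleness neither pushes good epochs out of the constant-determinant-ratio regime nor inflates the epoch count, while correctly tracking how clients that re-trigger during their wait produce the $|C_k|^2M$ scaling.
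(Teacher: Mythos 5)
Your proposal follows essentially the same route as the paper's proof: the same three-way decomposition (exploration regret bounded by $O(NT_0)$, a good/bad-epoch analysis \`a la DisLinUCB applied within each correctly recovered cluster, and a separate charge for the FIFO-queue delay of at most $M$ rounds yielding the $d|C_k|^2M\log(|C_k|T)$ term), the same $H\le 1$ bias bound via $\mathbf{X}_j^\top\mathbf{X}_j\preceq \overline{V}_{t,i}$ together with $\epsilon=1/(N\sqrt{T})$, and the same communication census of $O(Nd^2)$ plus $O(d^3|C_k|^{1.5})$ per cluster. The only cosmetic difference is bookkeeping: in the paper the $\log^2(|C_k|T)$ factor emerges from the pre-queue portion of the bad epochs through the choice $D_k = T\log(|C_k|T)/(d|C_k|)$, rather than from the good-epoch elliptical-potential bound as you attribute it, but the resulting totals agree.
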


\begin{remark}
The regret upper-bound has three components. The first term is our version of the "problem hardness" \citep{lidyclu, gentile2014online} which is independent of T. This "hardness" factor is determined by the cluster separation parameter $\gamma$  from Assumption \ref{ass:separation}.
The second term is the standard regret upper bound from centralized clustered bandit algorithms \citep{lidyclu, gentile2014online}. The third term arises from the potential waiting time clusters may experience in the queue before the server serves them. Our communication cost matches that of an idealized algorithm executing DisLinUCB\citep{dislinucb} within each ground-truth cluster. 
\end{remark}
We compare our regret and communication upper-bound under three cases. \textbf{Case 1 - Single cluster:} Setting $M=1$ reduces the problem to a nearly homogeneous setting, where every client is within $\epsilon$ of everyone else.
Under this setting, our regret becomes $\Tilde{O}(d\sqrt{NT})$ where logarithmic factors and factors that do not depend on $T$ (since it is assumed that $T \gg N$) are omitted in $\Tilde{O}$. Additionally communication cost becomes $O(N^{1.5} d^3)$.
Our algorithm matches the regret and communication cost of \citep{dislinucb}, which is designed for homogeneous clients. 
\textbf{Case 2 - N clusters:} Setting $M=N$ reduces the problem to a completely heterogeneous setting, where no client can benefit from collaboration as each client is at least $\epsilon$ away from others. 
Algorithm \ref{alg:simplified} has regret $\Tilde{O}(dN\sqrt{T})$, which recovers the regret of running LinUCB \citep{abbasi2011improved} independently on each client. In this setting our communication becomes $O(Nd^3)$. 
\textbf{Case 3 - Equal Size Clusters:} Setting $|C_k| = N / M, \forall k$ gives us $M$ clusters of equal size. In this setting our regret becomes $\Tilde{O}(d\sqrt{MNT})$, where the first term recovers the results presented in \citet{gentile2014online, lidyclu}. Our communication becomes $O(d^3N^{1.5} / \sqrt{M})$.


\subsection{Empirical Enhancements: \modelenhance} \label{method:enhance}
In this section, we describe the details of our proposed empirical enhancements to our \model{} algorithm, where we perform re-clustering to improve the quality of estimated clusters of clients and replace the first-in-first-out queue with a priority queue to help clusters where a shared model update can most rapidly reduce regret for clients in that cluster. We present the details of our enhanced algorithm \modelenhance\ in Algorithm \ref{alg:data-depsimplified}.

\subsubsection{Data-Dependent Clustering}
We propose a data-dependent clustering procedure to enhance collaboration among clients with similar observational histories. Our homogeneity test for cluster formation ensures an upper bound on the bias term $H$, as outlined in Lemma \ref{confidence_sets}. This term depends on the differences in underlying parameters ($\theta_i^*$ vs., $\theta_j^*$) and each client's observation history. For instance, if client $j$'s observations are in the null space of $(\theta_{j}^{*}-\theta_{i}^{*})$, collaborating with client $j$ will not introduce excessive bias to client $i$. But without further assumptions about the context vector sequence, we must conservatively assume in our original design that every client $j$'s entire observation history aligns with $(\theta_{j}^{*}-\theta_{i}^{*})$.

Previously, we used a homogeneity test with threshold $\epsilon = \frac{1}{N\sqrt{T}}$ to verify clients' collaboration across all timesteps. Now, we can relax the homogeneity test threshold to check if two clients can collaborate at a specific timestep $t$ by examining if $\|\theta_i - \theta_j\|\leq \epsilon = {1} / \left({N\sqrt{\lambda_{\max}(\mathbf{X}_{j}^\top\mathbf{X}_j)}}\right)$.
To achieve this, we modify our algorithm to forgo single round cluster estimation. Instead, every time a client requests collaboration, we re-cluster the clients using the data-dependent thresholds for our pairwise homogeneity tests. By making these thresholds data-dependent, each client can collaborate with more neighbors earlier, boosting overall collaborative benefits in our learning system.

\begin{algorithm}[!h]
    \caption{Data-Dependent Clustering}\label{alg:data-depclustering}
  \begin{algorithmic}[1]
        \STATE Re-initialize client graph $\mathcal{G}$ with no edges
        \FOR{$(i,j) \in N$}
            \STATE Server Computes $\upsilon^c = F^{-1}(1-\frac{\delta}{N^2}, df, \psi^c)$ where 
            \STATE $\psi^c = \frac{\epsilon^2}{\sigma^2} \lambda_\text{max} (\mathbf{X}_2^\top \mathbf{X}_2(\mathbf{X}_1^\top \mathbf{X}_1 + \mathbf{X}_2^\top \mathbf{X}_2)^{-1} \mathbf{X}_1^\top \mathbf{X}_1)$
            \IF{$s(\cH_{t,i},\cH_{t,j}) \leq \upsilon^{c}$}
                \STATE Add edge $e(i,j)$ to $\mathcal{G}$
            \ENDIF
        \ENDFOR
        \STATE $\hat{\cC} = \{\hat{C}_1, \hat{C}_2, ... \hat{C}_{\hat{M}}\}=$ maximal\_cliques($\mathcal{G}$)
        \STATE Set $\mathcal{K}_{i}$ = $\{k: i\in \hat{C_k}\}$ for each client $i$
        \STATE Cluster communication thresholds $\mathcal{D}=[D_1,...,D_{\hat{M}}]$ where $D_k = (T \log |\hat{C}_k|T) / (d |\hat{C}_k|)$
  \end{algorithmic}
\end{algorithm}
\begin{algorithm}[!h]
    \caption{\modelenhance}\label{alg:data-depsimplified}
  \begin{algorithmic}[1]
    \STATE \textbf{Input:} $T$, $\delta \in (0,1)$, regularization parameter $\lambda>0$
    \STATE \textbf{Initialize Clients:} $\forall i \in N$: $V_{0,i}=\textbf{0}_{d \times d}, b_{0,i}=\textbf{0}_{d}, \cH_{0,i}=\emptyset,\Delta V_{0,i}=\textbf{0}_{d \times d}, \Delta b_{0,i}=\textbf{0}_{d},\Delta t_{i,0}=0$, $\mathcal{K}_{i} = \emptyset$
    \STATE \textbf{Initialize Server: }Client graph $\mathcal{G}$ with $N$ nodes;
    \STATE Initialize empty Priority Queue $Q$; 
    \FOR{$t=T_0+1,...,T$}
        \FOR{Client $i \in N$}
            \STATE $\overline{V}_{t-1,i} = V_{t-1,i} + \lambda I$, $\hat{\theta}_{t-1,i} = \overline{V}_{t-1,i}^{-1}\, b_{t-1,i}$
            \STATE Choose arm $x_{t,i} \in \cA_{t,i}$ by Equation \ref{eq:UCB} observe reward $y_{t,i}$ 
            \STATE Update agent $i$: $\cH_{t,i}=\cH_{t-1,i}\cup (x_{t,i},y_{t,i})$, $V_{t,i}\pluseq x_{t,i}x_{t,i}^{\top}$, $b_{t,i}\pluseq x_{t,i}y_{t,i}$, \STATE $\Delta V_{t,i}\pluseq x_{t,i}x_{t,i}^{\top}$, $\Delta b_{t,i}\pluseq x_{t,i}y_{t,i}, \Delta t_{t,i}\pluseq 1$ 
            \IF{$\Delta t_{t,i} \log(\det(V_{t,i})/\det(V_{t,i}-\Delta V_{t,i}))\geq D_k$}
                \STATE Empty Priority Queue $Q$; 
                \STATE Every client $i\in N$ sends  $\Delta V_{t,j}$ and $\Delta b_{t,j}$ to server
                \STATE Data Dependent Cluster Estimation (Algorithm \ref{alg:data-depclustering})
                \STATE Send collaboration request to server, which then adds $\hat{C}_k \forall k \in \mathcal{K}_i$ to $Q$
            \ENDIF
        \ENDFOR
        \IF{Q is non-empty}
            \STATE Server pops cluster $\hat{C_k}= \argmax_{\hat{C}_k \in \hat{\cC}} \sum_{i\in \hat{C}_k}\Delta t_{t,i} \log(\frac{\det(V_{t,i})}{\det(V_{t,i}-\Delta V_{t,i})})$ from Q
            \STATE Server Computes: $V_{t,sync} = \sum_{j \in \hat{C_k}}\Delta V_{t,j}$, $b_{t,sync} = \sum_{j \in \hat{C_k}}\Delta b_{t,j}$
            \STATE Each client in $\hat{C}_k$ receives $V_{t,sync}$ and $b_{t,sync}$ from the server and updates their local model
            \STATE $V_{t,i}\pluseq V_{t,sync} - \Delta V_{t,i}$, $b_{t,i}\pluseq b_{t,sync} - \Delta b_{t,i}$, $\Delta V_{t,i}=0$, $\Delta b_{t,i}=0, \Delta t_{t,i}=0$ 
        \ENDIF
    \ENDFOR
  \end{algorithmic}
\end{algorithm}

\subsubsection{Priority Queue}
The second enhancement to \model{} involves utilizing a priority queue instead of a FIFO queue to determine the order in which to serve clusters requesting collaboration. As is demonstrated in \citep{dislinucb, Liasynch}, the cumulative regret incurred by a federated bandit algorithm is determined by the determinant ratios of the clients within the system: $\Delta t_{t,i} \log\big(\frac{\det(V_{t,i})}{\det(V_{t,i}-\Delta V_{t,i}}\big)$. Since the central server cannot assist all clusters at once, determinant ratios of awaiting clients can increase as they linger in the queue. In the original \model{}, clusters are attended based on their request order. However, an earlier-joining cluster might have a slower regret accumulation compared to a later one with a larger and faster growing determinant ratio.
By utilizing a priority queue that serves the clusters based on: $\argmax_{\hat{C}_k \in \hat{\cC}} \sum_{i\in \hat{C}_k}\Delta t_{t,i} \log(\frac{\det(V_{t,i})}{\det(V_{t,i}-\Delta V_{t,i}})$
the server ensures clusters are addressed in an order that minimizes the system-wide cumulative regret.

\section{Experiments} \label{sec:experiments}
In this section, we investigate the empirical performance of \model \ and \modelenhance{}, by comparing them against several baseline models on both simulated and real-world datasets.

\subsection{Baselines} In our evaluation, we compare our proposed \model\ algorithm with several representative algorithms from both the clustered and federated bandit learning domains. We compare against, LinUCB algorithm from \citep{abbasi2011improved}, DisLinUCB \citep{dislinucb}, FCLUB\_DC \citep{liufederatedonlineclusteringofbandits}, and DyClu \citep{lidyclu}. To ensure compatibility with our setting, we set the number of local servers in FCLUB\_DC to be equal to the number of clients.

\subsection{Synthetic Dataset} \label{experiment:synthetic}
We first present the results of our empirical analysis of \model{} and \modelenhance{} on a synthetic dataset.

\paragraph{Synthetic Dataset Generation} \label{experiment:synthetic-generation}
In this section, we describe the pre-processing procedure for the synthetic dataset used in Section \ref{experiment:synthetic}. We first create an action pool $\{x_k\}_{k=1}^{K}$ where $x$ is sampled from $N(0_d, I_d)$. To create a set of $N$ clients in accordance with our environment assumptions, we first  sample $M$ cluster centers $\{\theta_m\}_{m=1}^{M}$ from $N(0_d, I_d)$ that are $\gamma + 2\epsilon$ away from each other (enforced via rejection sampling). Then, we randomly assign each client index $i\in N$ to one of the $M$ clusters. To generate each $\theta_i$, we first sample a vector on the unit d-sphere, then we scale it by a value uniformly sampled from $[0,\epsilon]$ with $\epsilon = 1 / N\sqrt{T}$ and add it to the cluster center $\theta_k$ corresponding to the cluster client $i$ was assigned to. At each time step $t = 1,2,...T$ for each client in $[N]$ is presented a subset of $25$ arms are sampled from $\{x_k\}_{k=1}^{K}$, and shared with the client. The reward of the selected arm is generated by the linear function governed by the corresponding bandit parameter and context. In our experiments, we chose $d=25$, $K=1000$, $N=50$, $M=5$,  and $T=3000$. Since we conducted our experiment in a synthetic environment, we utilized the known values of $\gamma = 0.85$, $\sigma = 0.1$, and chose $\lambda= 0.1$, $\delta = 0.1$ for our algorithm's hyper-parameters. Note that while we utilized the known values for $\sigma$ and $\gamma$ in our experiment, these hyper-parameters can be tuned in practice using the ``doubling-trick", where the algorithm is repeatedly run in the same environment with increasing horizons \citep{aueradversarial1995,besson2018doubling}.  We present additional sensitivity analysis of the environmental parameters in Appendix \ref{appendix:sensitivity} due to the space limit.

\paragraph{Results}

In Figure \ref{fig:sim_results_reg}, we compare the accumulated regret of the different bandit algorithms on the simulated dataset. \model\ and \modelenhance{} outperform the other decentralized bandit baselines, with \modelenhance{} achieving a regret that is closest to the state-of-the-art centralized clustering bandit algorithm, DyClu. We observe that DisLinUCB experiences linear regret in our heterogeneous environment. While N-Independent LinUCB achieves sublinear regret, its cumulative regret is higher than our \model\ due to the absence of collaboration among similar clients. \model\ outperforms FCLUB\_DC, underscoring the strength of our federated clustered bandit approach in a heterogeneous setting. FCLUB\_DC's fixed clustering schedule results in delayed cluster identification, diminishing the quality of client collaboration. Notably, while FCLUB\_DC presumes the central server aids all local client clusters concurrently at each step, \model\ still excels despite adhering to a single model assumption in federated learning. We further present an ablation study on our empirical enhancements in Appendix \ref{appendix:empirical}.
\begin{figure}[h]
     \centering
     \begin{subfigure}[b]{0.49\textwidth}
         \centering         
         \includegraphics[width=0.99\textwidth,height=5.2cm]{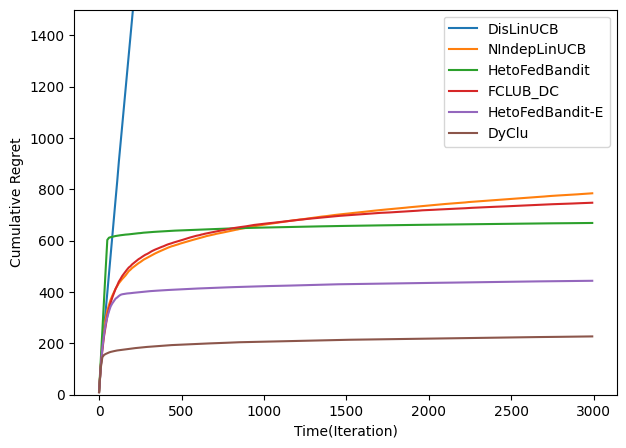}
         \caption{Accumulative Regret}
         \label{fig:sim_results_reg}
     \end{subfigure}
     \hfill
     \begin{subfigure}[b]{0.49\textwidth}
         \centering
         \includegraphics[width=0.99\textwidth, height=5.2cm]{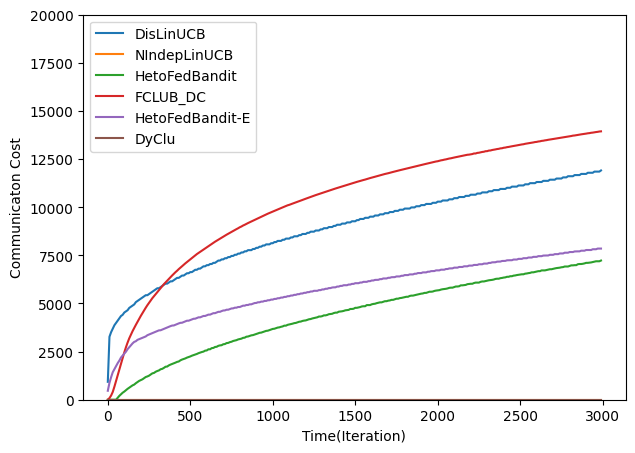}
         \caption{Communication Cost}
         \label{fig:sim_comm_cost}
     \end{subfigure}
     \hfill
       \caption{Experimental Results on Simulated Dataset}
        \label{fig:results_sim}
        \vspace{-4mm}
\end{figure}


In Figure \ref{fig:sim_comm_cost}, we observe that our algorithms exhibit the lowest communication cost among baselines while achieving encouraging regret. This demonstrates the communication efficiency of our approach, which is a critical factor in distributed systems. Notably, \modelenhance{} has a higher communication cost compared to \model{}, because the dynamic re-clustering requires the additional sharing of sufficient statistics from clients outside the cluster that requests collaboration. 

\subsection{LastFM Dataset} \label{experiment:realworld}
In this section, we present the results of our empirical analysis of \model{} and \modelenhance{} on the LastFM dataset, demonstrating their effectiveness in distributed recommender systems. 
\paragraph{LastFM Dataset} The dataset used in this experiment is extracted from the LastFM-2k dataset, which originally contains 1892 clients (users) and 16632 items (artists) \citep{Cantador:RecSys2011}. Each ``client" can be considered as an edge device serving a particular user in a distributed recommender system. The ``listened artists'' of each client are treated as positive feedback. To adapt this dataset for our experiments, we kept clients with over 350 observations, resulting in a dataset with $N=75$ clients and $T=41284$ interactions. The dataset was pre-processed following the procedure in \citep{cesa2013gang} to accommodate the linear bandit setting (with $d=25$ and the action set $K=25$). Since the environmental parameters $\sigma, \gamma$ are unknown for this real-world dataset, we directly tuned the values of our test threshold $\upsilon^c = 0.01$, $T_0 = 5000$, and $\alpha_{t,i} = \alpha = 0.3 \ \forall i \in \left[N\right]$, $\forall t \in T$ using a grid search.
\paragraph{Results} 
 We show that our models group users with similar musical preferences for collaborative model learning, enhancing recommendation quality compared to other distributed bandit learning methods. In Figure \ref{fig:results_real}, we present the normalized cumulative rewards and communication costs of the federated bandit algorithms on the LastFM dataset. We observe that \modelenhance\ outperforms the other decentralized bandit baselines on the real-world dataset, achieving the highest average normalized reward. In line with observations made using synthetic datasets, DisLinUCB's performance is suboptimal in environments with heterogeneous clients, demonstrated by NIndepLinUCB outperforming DisLinUCB. Moreover, while the normalized cumulative reward of FCLUB\_DC shows an improving trend over time, its prefixed-clustering schedule delays the identification of the underlying cluster structure compared to \model{} and \modelenhance{}. 

Notably, our basic algorithm design, \model{}, falls short of NIndepLinUCB on real-world data due to its single-timestep cluster estimation. In a simulated environment, where context vectors adhere closely to Assumption \ref{as:contextreg}, one-time cluster estimation post-exploration is usually sufficient. However, in real-world datasets, the distribution of context vectors may evolve over time, leading to potential inaccuracies in the clusters initially estimated after the exploration phase. This underlines the importance of our empirical enhancements, which incorporates dynamic data-dependent re-clustering, demonstrating its ability to adapt to shifts in the observed context distribution.

In Figure \ref{fig:real_comm_cost}, we observe that our algorithms once again exhibit the lowest communication cost among the compared baselines. Similar to our observations in Section \ref{experiment:synthetic}, \modelenhance{} has a higher communication cost compared to \model{}.


\begin{figure}[h]
     \centering
     \begin{subfigure}[b]{0.49\textwidth}
         \centering
         \includegraphics[width=\textwidth,height=5.2cm]{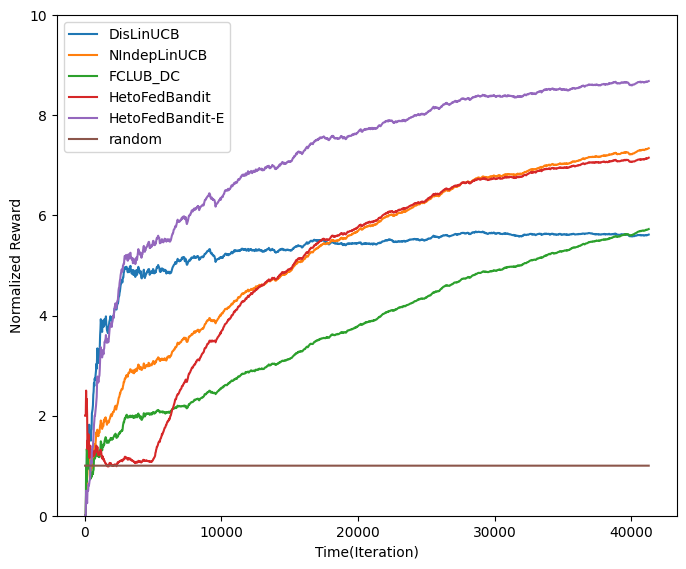}
         \caption{Normalized Accumulated Reward}
         \label{fig:real_results_reg}
     \end{subfigure}
     \hfill
     \begin{subfigure}[b]{0.49\textwidth}
         \centering
         \includegraphics[width=\textwidth, height=5.2cm]{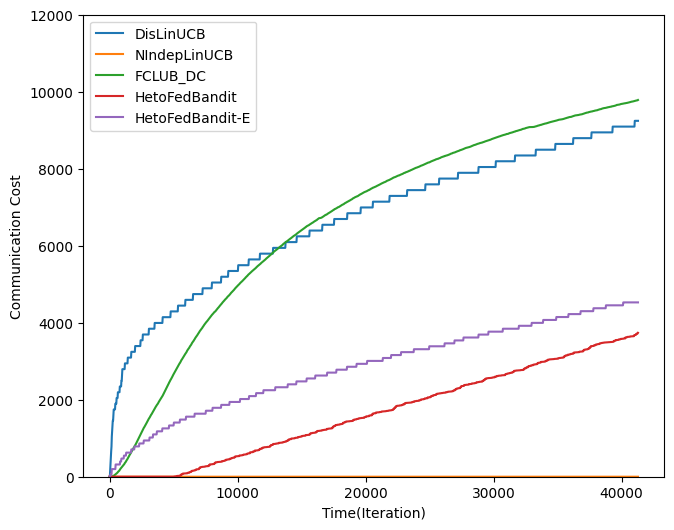}
         \caption{Communication Cost}
         \label{fig:real_comm_cost}
     \end{subfigure}
     \hfill
        \caption{Experimental Results on LastFM Dataset}
        \label{fig:results_real}
\end{figure}

\section{Conclusion}

In this work, we address the challenge of heterogeneous clients in federated bandit learning by introducing \model.\ Our approach combines the strengths of federated learning and collaborative bandit learning, enabling efficient communication and learning among clients with diverse objectives. We demonstrate through rigorous theoretical analysis that participating clients achieve regret reduction compared to their independent learning across various environmental settings, thereby motivating all clients to participate in such a federated learning system. We also empirically demonstrate that our algorithm achieves encouraging performance compared to existing federated bandit learning solutions on both simulated and real-world datasets. 

Our work not only addresses the limitations of existing federated bandit learning solutions, but also opens up new possibilities for practical applications in distributed systems. Our current approach requires that each client within the federated learning system trusts the central server to only facilitate collaboration for social good. However, in the real world, this ``trust'' is not something that should be naively assumed. To realize a truly federated model of bandit learning, the power to decide on collaboration should be transferred to the clients. In this regard, future research should consider viewing federated learning through the lens of mechanism design, so that each client perceives participating in the federated learning system as their best course of action.  


\acks{We thank the anonymous reviewers for their insightful suggestions and comments. This material is based upon work supported by the NSF Graduate Research Fellowship under Grant No. 1842490 and NSF Award IIS-2213700 and IIS-2128019.}


\newpage

\appendix
\appendix
\onecolumn

\section{Technical Lemmas}
In this section, we introduce the technical lemmas utilized in the subsequent proofs in this paper.
\label{sec:appendix}
\begin{lemma}[Lemma 11 in \cite{abbasi2011improved}] 
Let $\{X_t\}_{t=1}^{\infty}$ be a sequence in $\bR^d$, $V$ is  a $d \times d$ positive definite matrix and define $\bar{V}_t = V + \sum_{s=1}^{t}X_sX_s^\top$, where $V=\lambda I$. Additionally we have that $\lambda_{\min}(V)\geq \max(1,L^2)$ and $\|X_t\|_2 \leq L$ for all $t$, then
\begin{equation}
    \log\biggr(\frac{\det(\bar{V_n})}{\det(V)}\biggr)\leq \sum_{t=1}^T \|X_t\|^2_{\bar{V}_{t-1}^{-1}} \leq 2\log\biggr(\frac{\det(\bar{V_n})}{\det(V)}\biggr).
    \end{equation}
\end{lemma}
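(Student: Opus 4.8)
The plan is to reduce both inequalities to a single per-step determinant identity and then telescope. The starting point is the rank-one update $\bar{V}_t = \bar{V}_{t-1} + X_t X_t^\top$, to which I would apply the matrix determinant lemma. Factoring out $\bar{V}_{t-1}$ and using $\det(I + \bar{V}_{t-1}^{-1/2} X_t X_t^\top \bar{V}_{t-1}^{-1/2}) = 1 + X_t^\top \bar{V}_{t-1}^{-1} X_t$ yields the one-step ratio $\det(\bar{V}_t)/\det(\bar{V}_{t-1}) = 1 + \|X_t\|_{\bar{V}_{t-1}^{-1}}^2$. Taking logarithms and summing over $t$ telescopes the left side, giving the master identity $\log\big(\det(\bar{V}_n)/\det(V)\big) = \sum_t \log\big(1 + \|X_t\|_{\bar{V}_{t-1}^{-1}}^2\big)$. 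Both bounds then follow by comparing each summand $\log(1+x)$ against $x$, with $x = \|X_t\|_{\bar{V}_{t-1}^{-1}}^2 \ge 0$.

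For the lower bound (the first inequality), I would apply $\log(1+x) \le x$, which holds for all $x \ge 0$, to each summand. Since every weighted norm is nonnegative, this immediately gives $\log\big(\det(\bar{V}_n)/\det(V)\big) \le \sum_t \|X_t\|_{\bar{V}_{t-1}^{-1}}^2$, with no further hypotheses required.

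For the upper bound (the second inequality), I would use the complementary elementary inequality $x \le 2\log(1+x)$, which holds only on the bounded range $x \in [0,1]$. The crux is therefore to verify that each $\|X_t\|_{\bar{V}_{t-1}^{-1}}^2$ actually lies in $[0,1]$. Here the hypotheses enter: since $\bar{V}_{t-1} \succeq V = \lambda I$ and $\lambda_{\min}(V) \ge \max(1,L^2)$, monotonicity of the inverse on the PSD cone gives $\bar{V}_{t-1}^{-1} \preceq \max(1,L^2)^{-1} I$, whence $\|X_t\|_{\bar{V}_{t-1}^{-1}}^2 \le \|X_t\|_2^2/\max(1,L^2) \le L^2/\max(1,L^2) \le 1$. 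With the range guarantee in place, applying $x \le 2\log(1+x)$ term-by-term and summing yields $\sum_t \|X_t\|_{\bar{V}_{t-1}^{-1}}^2 \le 2\log\big(\det(\bar{V}_n)/\det(V)\big)$.

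I expect the only real obstacle — such as it is for this standard elliptical-potential lemma — to be the bookkeeping around the eigenvalue condition rather than any deep step. One must track that $\bar{V}_{t-1}^{-1} \preceq \max(1,L^2)^{-1} I$ relies jointly on $\bar{V}_{t-1} \succeq V$ and $\lambda_{\min}(V) \ge \max(1,L^2)$, and that the normalization $\|X_t\|_2 \le L$ is precisely what forces each weighted norm below $1$, so the restricted domain of $x \le 2\log(1+x)$ is respected. Everything else reduces to the determinant identity and the two scalar inequalities.
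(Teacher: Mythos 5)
Your proof is correct and is precisely the standard argument for this elliptical potential lemma: the paper itself states this result without proof (it is imported verbatim as Lemma 11 of \cite{abbasi2011improved}), and the proof given there proceeds exactly as you do, via the rank-one determinant identity $\det(\bar{V}_t)=\det(\bar{V}_{t-1})\bigl(1+\|X_t\|^2_{\bar{V}_{t-1}^{-1}}\bigr)$, telescoping, and the two scalar bounds $\log(1+x)\leq x$ and $x\leq 2\log(1+x)$ on $[0,1]$. Your bookkeeping of where the hypotheses $\lambda_{\min}(V)\geq\max(1,L^2)$ and $\|X_t\|_2\leq L$ enter (namely, to force each weighted norm into $[0,1]$ so the second scalar inequality applies) is exactly right.
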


\begin{lemma}[Theorem 1 of \cite{abbasi2011improved}] \label{lem:self_normalized_bound}
Let $\{\cF_{t}\}_{t=0}^{\infty}$ be a filtration. Let $\{\eta_{t}\}_{t=1}^{\infty}$ be a real-valued stochastic process such that $\eta_{t}$ is $\cF_{t}$-measurable, and $\eta_{t}$ follows conditionally zero mean $R$-sub-Gaussian for some $R \geq 0$.
Let $\{X_{t}\}_{t=1}^{\infty}$ be an $\bR^{d}$-valued stochastic process such that $X_{t}$ is $\cF_{t-1}$-measurable. Assume that $V$ is a $d \times d$ positive definite matrix. For any $t > 0$, define
\begin{align*}
    V_{t}=V+\sum_{\tau=1}^{t} X_{\tau} X_{\tau}^{\top} \quad \cS_{t}=\sum_{\tau=1}^{t}\eta_{\tau} X_{\tau}.
\end{align*}
Then for any $\delta >0$, with probability at least $1-\delta$,
\begin{align*}
    ||\cS_{t}||_{V_{t}^{-1}} \leq  R \sqrt{2\log{\frac{\det(V_{t})^{1/2}}{\det(V)^{1/2}\delta}}}, \quad \forall t\geq 0.
\end{align*}
\end{lemma}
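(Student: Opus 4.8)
The plan is to prove this self-normalized bound by the \emph{method of mixtures} (pseudo-maximization), which converts a whole family of scalar exponential supermartingales into a single bound that holds uniformly in $t$. First I would fix an arbitrary $\lambda \in \bR^d$ and define the exponential process
\begin{equation*}
M_t^\lambda = \exp\left(\frac{\langle \lambda, \cS_t\rangle}{R} - \frac{1}{2}\lambda^\top\Big(\textstyle\sum_{\tau=1}^t X_\tau X_\tau^\top\Big)\lambda\right),
\end{equation*}
with $M_0^\lambda = 1$. Because $X_\tau$ is $\cF_{\tau-1}$-measurable, conditioning the one-step ratio on $\cF_{\tau-1}$ and invoking the $R$-sub-Gaussian hypothesis with the choice $s = \langle\lambda,X_\tau\rangle/R$ gives $\bE[\exp(\eta_\tau\langle\lambda,X_\tau\rangle/R)\mid\cF_{\tau-1}]\le \exp(\langle\lambda,X_\tau\rangle^2/2)$, so that $\bE[M_\tau^\lambda\mid\cF_{\tau-1}]\le M_{\tau-1}^\lambda$. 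Thus $(M_t^\lambda)_{t\ge0}$ is a nonnegative supermartingale with $\bE[M_t^\lambda]\le 1$.

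The key step is to mix these processes against a Gaussian prior to eliminate the arbitrary direction $\lambda$. Let $h$ be the density of $\cN(0,V^{-1})$ and set $M_t = \int_{\bR^d} M_t^\lambda\, h(\lambda)\,d\lambda$. Nonnegativity lets me apply Tonelli to interchange the conditional expectation and the integral, so $M_t$ inherits the supermartingale property and $\bE[M_t]\le1$. Crucially, the mixture is a Gaussian integral that I can evaluate in closed form: combining the two quadratic forms uses $V + (V_t - V) = V_t$, and completing the square in $\lambda$ yields
\begin{equation*}
M_t = \frac{\det(V)^{1/2}}{\det(V_t)^{1/2}}\exp\left(\frac{1}{2R^2}\|\cS_t\|_{V_t^{-1}}^2\right).
\end{equation*}
This identity is what makes the bound \emph{self-normalizing}: the random normalizer $V_t^{-1}$ appears automatically from the prior covariance, with no need to guess a fixed weighting in advance.

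Finally I would apply a maximal inequality for nonnegative supermartingales (Ville's inequality): since $\bE[M_0]\le 1$, $\Pr[\sup_{t\ge0} M_t \ge 1/\delta]\le\delta$. On the complementary event, substituting the closed form, taking logarithms, and rearranging gives $\|\cS_t\|_{V_t^{-1}}^2 \le 2R^2\log(\det(V_t)^{1/2}/(\det(V)^{1/2}\delta))$ simultaneously for all $t\ge0$, which is exactly the claim after taking square roots.

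The main obstacle is the uniform-in-$t$ control together with the rigor of the interchange and the maximal inequality. The mixture $M_t$ need not converge pointwise in a naive sense, so I would argue through a stopped process $M_{t\wedge\tau}$ for an arbitrary stopping time $\tau$ (or equivalently invoke the a.s.\ convergence of nonnegative supermartingales to define $M_\infty$) and verify integrability so that Tonelli and the optional-stopping/maximal-inequality steps are legitimate. This is the delicate part that upgrades a fixed-$t$ Markov bound into the ``$\forall t\ge0$'' guarantee that the downstream confidence ellipsoids rely on.
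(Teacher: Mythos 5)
The paper does not prove this lemma at all: it imports it verbatim as Theorem 1 of \cite{abbasi2011improved} in the technical-lemmas appendix. Your method-of-mixtures argument — the exponential supermartingale $M_t^\lambda$, the Gaussian mixture with prior covariance $V^{-1}$ yielding the closed form $\frac{\det(V)^{1/2}}{\det(V_t)^{1/2}}\exp\bigl(\tfrac{1}{2R^2}\|\cS_t\|_{V_t^{-1}}^2\bigr)$, and the stopping-time/Ville step for uniformity in $t$ — is a correct reconstruction of exactly the proof given in that cited source, so there is nothing to flag.
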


\begin{lemma}[Determinant-Trace Inequality] \label{lem:det_trace}
Suppose $X_1, X_2, ..., X_t \in \bR^d$ and for any $1\leq \tau \leq t$, $\|X_\tau\|_2 \leq L$, Let $\overline{V_t} = \lambda I + \sum_{\tau=1}^{t} X_{\tau} X_{\tau}^{\top}$ for some $\lambda > 0$. Then,
\begin{align*}
    \det(\overline{V_t}) \leq (\lambda + tL^2/d)^d.
\end{align*}
\end{lemma}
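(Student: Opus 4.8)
The plan is to diagonalize $\overline{V_t}$ and combine the arithmetic–geometric mean inequality with a trace bound. Since $\overline{V_t} = \lambda I + \sum_{\tau=1}^t X_\tau X_\tau^\top$ is symmetric and positive definite (a sum of a positive multiple of the identity and rank-one PSD matrices), it has $d$ positive real eigenvalues $\sigma_1,\dots,\sigma_d$. First I would record the two standard spectral identities: the determinant equals the product of the eigenvalues, $\det(\overline{V_t}) = \prod_{i=1}^d \sigma_i$, and the trace equals their sum, $\mathrm{tr}(\overline{V_t}) = \sum_{i=1}^d \sigma_i$.

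Next I would apply AM–GM to the nonnegative eigenvalues, which gives
\begin{align*}
\det(\overline{V_t}) = \prod_{i=1}^d \sigma_i \leq \left(\frac{1}{d}\sum_{i=1}^d \sigma_i\right)^d = \left(\frac{\mathrm{tr}(\overline{V_t})}{d}\right)^d.
\end{align*}
This reduces the problem to bounding the trace from above, which is the only remaining computation.

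Finally, I would bound the trace directly using linearity and the cyclic/rank-one structure: $\mathrm{tr}(\lambda I) = \lambda d$, while $\mathrm{tr}(X_\tau X_\tau^\top) = \|X_\tau\|_2^2 \leq L^2$ for each $\tau$ by hypothesis. Hence
\begin{align*}
\mathrm{tr}(\overline{V_t}) = \lambda d + \sum_{\tau=1}^t \|X_\tau\|_2^2 \leq \lambda d + tL^2,
\end{align*}
and substituting into the AM–GM bound yields $\det(\overline{V_t}) \leq \big((\lambda d + tL^2)/d\big)^d = (\lambda + tL^2/d)^d$, as claimed.

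There is no substantive obstacle here: the argument is entirely linear-algebraic and self-contained, relying only on positive definiteness to guarantee real nonnegative eigenvalues (so that AM–GM applies) and on the norm hypothesis to control the trace. The only point requiring mild care is justifying that AM–GM is valid, i.e.\ that all $\sigma_i \geq 0$, which follows because $\lambda > 0$ and each $X_\tau X_\tau^\top \succeq 0$ make $\overline{V_t} \succ 0$.
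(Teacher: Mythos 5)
Your proof is correct and is the standard argument for this lemma: the paper states it as a known technical result without proof (it is Lemma 10 of \citet{abbasi2011improved}), and where the paper invokes it (bounding $\det(V)$ by $(\mathrm{tr}(V)/d)^d$ in the regret analysis) it relies on exactly the AM--GM-on-eigenvalues plus trace-bound reasoning you give. Nothing to add.
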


\begin{lemma}[Lemma 12 from \citep{lidyclu}] \label{lem:type2Cluster}
When the underlying bandit parameters $\theta_i^*$ and $\theta_j^*$ of two observation sequence $\cH_{t-1,i}$ and $\cH_{t-1,j}$ from client $i$ and $j$ are not the same, the probability that the cluster identification phase clusters them together corresponds to the type-II error probability given in Lemma \ref{lem:type2Similarity}, which can be upper bounded by:
\begin{equation*}
P\left(S(\cH_{t-1,i},\cH_{t-1,j}) \leq \upsilon^{c}\big|\|\theta_i^*-\theta_j^*\|>\epsilon\right) \leq F(\upsilon^{c};d,\psi^{d}),
\end{equation*}
under the condition that both $\lambda_{\min}(\sum_{(\bx_{k},y_{k}) \in \cH_{t-1,i}}\bx_{k}\bx_{k}^{\top})$ and $\lambda_{\min}(\sum_{(\bx_{k},y_{k}) \in \cH_{t-1, 2}}\bx_{k}\bx_{k}^{\top})$ are at least $\frac{2\psi^{d}\sigma^{2}}{\gamma^{2}}$.
\end{lemma}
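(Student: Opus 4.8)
The plan is to recognize $S(\cH_{t-1,i},\cH_{t-1,j})$ as the classical Chow-type test of coincidence of two linear regressions, pin down its exact distribution under the alternative, and then reduce the claimed inequality to a single monotonicity argument for the non-central $\chi^2$ CDF. Writing $B_1=\bX_i^\top\bX_i$, $B_2=\bX_j^\top\bX_j$, and $\delta=\vartheta_i-\vartheta_j$ for the difference of the two ordinary-least-squares estimators, the first step is to simplify the numerator of the statistic in Eq.~\ref{eq:homo_test}. Since the eigenvalue hypothesis makes both designs full rank, the generalized inverses are true inverses and $\vartheta_{i,j}=(B_1+B_2)^{-1}(B_1\vartheta_i+B_2\vartheta_j)$, giving $\vartheta_i-\vartheta_{i,j}=(B_1+B_2)^{-1}B_2\delta$ and $\vartheta_j-\vartheta_{i,j}=-(B_1+B_2)^{-1}B_1\delta$. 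Collecting terms and applying the parallel-sum identity $B_1^{-1}+B_2^{-1}=B_1^{-1}(B_1+B_2)B_2^{-1}$ twice, the statistic collapses to the clean quadratic form
\begin{equation*}
S=\tfrac{1}{\sigma^2}\,\delta^\top A\,\delta,\qquad A=B_2(B_1+B_2)^{-1}B_1=\bigl(B_1^{-1}+B_2^{-1}\bigr)^{-1}.
\end{equation*}

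Second, I would establish the distribution of $S$. Under the (Gaussian) noise model used in the exact homogeneity test, $\vartheta_i=\theta_i^*+B_1^{-1}\bX_i^\top\eta_i$ and analogously for $\vartheta_j$, so $\delta$ has mean $\theta_i^*-\theta_j^*$ and conditional covariance $\sigma^2(B_1^{-1}+B_2^{-1})=\sigma^2A^{-1}$. Whitening via $z=A^{1/2}\delta/\sigma$ yields $z\sim\cN\!\bigl(A^{1/2}(\theta_i^*-\theta_j^*)/\sigma,\,I_d\bigr)$, so $S=\|z\|^2$ follows a non-central $\chi^2$ law with $d$ degrees of freedom and non-centrality parameter $\psi=\tfrac{1}{\sigma^2}(\theta_i^*-\theta_j^*)^\top A(\theta_i^*-\theta_j^*)$. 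Hence the type-II error equals exactly $P(S\leq\upsilon^c\mid\|\theta_i^*-\theta_j^*\|>\epsilon)=F(\upsilon^c;d,\psi)$, recovering the expression of Lemma~\ref{lem:type2Similarity}.

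The third and decisive step is to show $\psi\geq\psi^d$, after which the claim follows because $x\mapsto F(\upsilon^c;d,x)$ is nonincreasing in the non-centrality parameter (a larger $x$ gives a stochastically larger $\chi^2$, hence a smaller CDF value at the fixed point $\upsilon^c$). To lower-bound $\psi$ I would exploit the parallel-sum form: the hypothesis gives $\lambda_{\min}(B_1),\lambda_{\min}(B_2)\geq\tfrac{2\psi^d\sigma^2}{\gamma^2}=:\mu$, so $B_1^{-1}+B_2^{-1}\preceq\tfrac{2}{\mu}I$, whence $A\succeq\tfrac{\mu}{2}I$ and $\lambda_{\min}(A)\geq\tfrac{\psi^d\sigma^2}{\gamma^2}$. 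Then, because conditioning on $\|\theta_i^*-\theta_j^*\|>\epsilon$ together with Assumptions~\ref{ass:proximity}--\ref{ass:separation} forces $i$ and $j$ into distinct clusters and therefore $\|\theta_i^*-\theta_j^*\|\geq\gamma$, I obtain $\psi\geq\tfrac{1}{\sigma^2}\lambda_{\min}(A)\|\theta_i^*-\theta_j^*\|^2\geq\tfrac{1}{\sigma^2}\cdot\tfrac{\psi^d\sigma^2}{\gamma^2}\cdot\gamma^2=\psi^d$, and monotonicity closes the argument.

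I expect the main obstacle to be the non-centrality lower bound rather than the algebra or the monotonicity (both routine). Two points need care: the parallel-sum eigenvalue inequality crucially requires that \emph{both} design matrices have large minimum eigenvalue, so the ``$\tfrac{2}{\mu}$'' factor and the resulting $\tfrac{\mu}{2}$ bound on $\lambda_{\min}(A)$ must be tracked exactly; and the upgrade from the conditioning event $\|\theta_i^*-\theta_j^*\|>\epsilon$ to the usable separation $\|\theta_i^*-\theta_j^*\|\geq\gamma$ must be justified through the cluster structure, since it is this substitution that injects $\gamma$ into the bound. A secondary subtlety is distributional rigor: the non-central $\chi^2$ characterization is exact only under Gaussian noise, so under the sub-Gaussian model one must either invoke the classical test of \citep{chow1960tests,cantrell1991interpretation} or argue the tails do not inflate the type-II probability. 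Because the entire statement is conditional on the realized designs, I would keep the eigenvalue condition as a hypothesis here and discharge it separately (via the pure-exploration phase of length $T_0$) when the lemma is invoked inside the proof of Theorem~\ref{thm:clustering_correctness}.
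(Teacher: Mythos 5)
Your proof is correct, but it is worth noting that the paper does not prove this statement at all: Lemma~\ref{lem:type2Cluster} is imported verbatim as Lemma~12 of \citep{lidyclu} and used as a black box, so your submission is a self-contained derivation of a cited result rather than a parallel of an in-paper argument. Your derivation is sound. The algebraic collapse of the Chow statistic to $\tfrac{1}{\sigma^2}\delta^\top(B_1^{-1}+B_2^{-1})^{-1}\delta$ checks out (the two cross terms $B_2PB_1PB_2+B_1PB_2PB_1$ with $P=(B_1+B_2)^{-1}$ do telescope to $B_2PB_1$ using $PB_1=I-PB_2$), the resulting non-centrality parameter $\psi=\tfrac{1}{\sigma^2}(\theta_i^*-\theta_j^*)^\top B_2(B_1+B_2)^{-1}B_1(\theta_i^*-\theta_j^*)$ agrees exactly with the one the paper computes in its proof of Lemma~\ref{lem:type1Similarity}, and your three closing ingredients --- the parallel-sum bound $\lambda_{\min}(A)\ge\mu/2$, the upgrade from the conditioning event $\|\theta_i^*-\theta_j^*\|>\epsilon$ to $\|\theta_i^*-\theta_j^*\|\ge\gamma$ via Assumptions~\ref{ass:proximity} and~\ref{ass:separation}, and the monotone decrease of $F(\upsilon^c;d,\cdot)$ in the non-centrality parameter --- are each necessary and correctly deployed. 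In effect you prove the lower-bound (type-II) mirror image of the upper-bound (type-I) computation the paper carries out for $\psi$ in Appendix~B.1, and your result is slightly sharper than the cited Lemma~\ref{lem:type2Similarity}, since $\lambda_{\min}(A)\ge\bigl(1/\lambda_{\min}(B_1)+1/\lambda_{\min}(B_2)\bigr)^{-1}$. The one caveat you already flag correctly: the exact non-central $\chi^2$ law requires Gaussian noise (and independence of the noise from the realized exploration designs), whereas the paper's model only assumes sub-Gaussianity; this gap is inherited from \citep{lidyclu} and from the paper's own use of the test, so it is not a defect introduced by your argument, but it should be stated as an assumption rather than discharged.
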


\begin{lemma}[Lemma B1 from \cite{Liasynch}] \label{lem:min_eig}
Denote the number of observations that have been used to update $\{V_{i,t}, b_{i,t}\}$ as $\tau_{i}$, i.e., $V_{i,t}=\lambda I + \sum_{s=1}^{\tau_{i}}\bx_{s} \bx_{s}^{\top}$.
Then under Assumption \ref{as:contextreg}, with probability at least $1-\delta$, we have:
\begin{align*}
    \lambda_{\min}(V_{i,t}) \geq \lambda + \frac{\lambda_{c} \tau_{i}}{8},
\end{align*}
$\forall \tau_{i} \in\{\tau_{min},\tau_{min}+1,\dots,T\}, i \in [N]$, where $\tau_{min}=\lceil \frac{64}{3 \lambda_{c}}\log(\frac{2NTd}{\delta}) \rceil$.
\end{lemma}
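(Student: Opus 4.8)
The plan is to view $V_{i,t}-\lambda I=\sum_{s=1}^{\tau_i}x_sx_s^\top$ as a sum of conditionally bounded rank-one PSD matrices, lower bound its minimum eigenvalue by a matrix concentration (matrix Chernoff) argument, and then add back the regularizer. First I would fix a client $i$ and a count $\tau$, and let $\{\cF_s\}$ denote the filtration generated by all randomness up to and including the $s$-th context absorbed into $V_{i,t}$. The contexts entering $V_{i,t}$ during the pure exploration phase are obtained by sampling an arm uniformly at random from the i.i.d. arm set $\cA_{s,i}$; hence the selected context is itself a fresh draw from the marginal arm distribution, and Assumption \ref{as:contextreg} yields the key conditional second-moment bound $\mathbb{E}[x_sx_s^\top\mid\cF_{s-1}]\succeq\lambda_c I$, while $\|x_s\|_2\le 1$ forces $0\preceq x_sx_s^\top\preceq I$. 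These two facts are exactly the ingredients required by the lower-tail matrix Chernoff inequality.

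Next I would apply the matrix Chernoff bound to $\sum_{s=1}^{\tau}x_sx_s^\top$. Writing $\mu_{\min}=\lambda_{\min}\!\big(\sum_{s=1}^{\tau}\mathbb{E}[x_sx_s^\top\mid\cF_{s-1}]\big)\ge\lambda_c\tau$ and using the uniform bound $\lambda_{\max}(x_sx_s^\top)\le 1$, the inequality gives, for any retention fraction $1-\theta\in(0,1)$,
\begin{equation*}
\Pr\!\Big[\lambda_{\min}\!\big(\textstyle\sum_{s=1}^{\tau}x_sx_s^\top\big)\le(1-\theta)\lambda_c\tau\Big]\le d\,\exp\!\big(-c_\theta\,\lambda_c\tau\big),
\end{equation*}
where $c_\theta>0$ depends only on $\theta$. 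Choosing $\theta$ so that $(1-\theta)\lambda_c\tau\ge\lambda_c\tau/8$ (any $\theta\le 7/8$ suffices, leaving slack for the union bound) produces a per-$(i,\tau)$ failure probability of the form $d\exp(-c\lambda_c\tau)$.

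I would then make the statement uniform by a union bound over the $N$ clients and the values $\tau\in\{\tau_{\min},\dots,T\}$, so the total failure probability is at most $N\,T\,d\,\exp(-c\lambda_c\tau_{\min})$. Requiring this to be at most $\delta$ forces $c\lambda_c\tau_{\min}\ge\log(NTd/\delta)$, i.e. $\tau_{\min}=\Theta\!\big(\tfrac{1}{\lambda_c}\log(NTd/\delta)\big)$; the explicit constant $\tfrac{64}{3}$ and the factor $2$ inside the logarithm are a convenient, deliberately loose choice that absorbs $c$ together with the union-bound slack. On the resulting event, for every $i$ and every $\tau_i\ge\tau_{\min}$,
\begin{equation*}
\lambda_{\min}(V_{i,t})=\lambda+\lambda_{\min}\!\big(\textstyle\sum_{s=1}^{\tau_i}x_sx_s^\top\big)\ge\lambda+\frac{\lambda_c\tau_i}{8},
\end{equation*}
since adding $\lambda I$ shifts every eigenvalue upward by exactly $\lambda$.

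The main obstacle I anticipate is justifying the conditional second-moment bound $\mathbb{E}[x_sx_s^\top\mid\cF_{s-1}]\succeq\lambda_c I$ for the contexts that are \emph{actually} written into $V_{i,t}$: the data-collection process is adaptive (and, in later phases, mixes in statistics shared across a cluster), so the $x_s$ need not be literally i.i.d., and one must invoke the martingale (matrix Freedman) form of the concentration to argue that a round-by-round conditional covariance lower bound still holds. This is clean under the uniform sampling of the exploration phase, which is precisely the regime in which Lemma \ref{lem:min_eig} is invoked. The secondary technical point is the bookkeeping of the union bound so that the concentration holds \emph{simultaneously} for all $\tau_i$ in the range and all clients, which is what pins down the stated form of $\tau_{\min}$.
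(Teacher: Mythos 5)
The paper does not actually prove this lemma---it is imported verbatim as Lemma B1 of \cite{Liasynch}, so there is no in-paper argument to compare against---and your sketch is a correct reconstruction of the standard proof, namely the adapted-sequence matrix Chernoff/Freedman argument applied to $\sum_{s}x_sx_s^\top$ with the conditional lower bound $\mathbb{E}[x_sx_s^\top\mid\cF_{s-1}]\succeq\lambda_c I$, followed by a union bound over clients and over $\tau\in\{\tau_{\min},\dots,T\}$. Your two flagged caveats are exactly the right ones: the conditional covariance bound follows from Assumption \ref{as:contextreg} only for uniformly-explored contexts (the sole regime in which this paper invokes the lemma, in Corollary \ref{corr:type2subset}), and the constants $\tfrac{64}{3}$ and the factor $2$ inside the logarithm are loose enough that any standard version of the concentration inequality (e.g., lower-tail matrix Chernoff with retention fraction $1/8$) recovers the stated $\tau_{\min}$ with room to spare.
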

\begin{lemma}{Lemma 3 from \cite{lidyclu}}\label{lem:type2Similarity}
When 
$\bX_{1}$ and $\bX_{2}$ are rank-sufficient,
the type-II error probability can be upper bounded by,
\small
\begin{equation*}
    P\big(s(\cH_{t-1,1},\cH_{t-1,2}) \leq \upsilon \,| \,\|\theta_{1}-\theta_{2}\| > \epsilon\big) \leq 
             F\Big(\upsilon^c;d,\frac{||\theta_{1}^*-\theta_{2}^*||^{2}/\sigma^{2}}{{1}/{\lambda_{min}(\bX_{1}^{\top} \bX_{1})}+{1}/{\lambda_{min}(\bX_{2}^{\top} \bX_{2})}}\Big).\\
\end{equation*}
\normalsize
\end{lemma}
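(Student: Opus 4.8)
The plan is to recognize $s(\cH_{t-1,1},\cH_{t-1,2})$ as the classical Chow statistic for testing equality of regression coefficients across two independent linear models, and to exploit the fact that, under the linear-Gaussian model, this statistic follows a non-central $\chi^2$ distribution whose non-centrality parameter is governed by the true coefficient gap $\theta_1^*-\theta_2^*$. Concretely, I would (i) reduce the two-term numerator of $s$ to a single standardized quadratic form in $\vartheta_1-\vartheta_2$, (ii) read off the exact non-centrality parameter, (iii) lower-bound it by the quantity appearing in the statement, and (iv) invoke the stochastic monotonicity of the non-central $\chi^2$ CDF in its non-centrality parameter.

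For step (i), write $A_i=\bX_i^\top\bX_i$, so the pooled MLE is $\vartheta_{1,2}=(A_1+A_2)^{-1}(A_1\vartheta_1+A_2\vartheta_2)$. A short computation gives $\vartheta_1-\vartheta_{1,2}=(A_1+A_2)^{-1}A_2(\vartheta_1-\vartheta_2)$ and $\vartheta_2-\vartheta_{1,2}=-(A_1+A_2)^{-1}A_1(\vartheta_1-\vartheta_2)$. Substituting into $\|\bX_1(\vartheta_1-\vartheta_{1,2})\|^2+\|\bX_2(\vartheta_2-\vartheta_{1,2})\|^2$ and using the identities $A_1(A_1+A_2)^{-1}+A_2(A_1+A_2)^{-1}=I$ together with $A_1(A_1+A_2)^{-1}A_2=(A_1^{-1}+A_2^{-1})^{-1}$ collapses both terms into a single quadratic form, so that
\begin{equation*}
s=\frac{1}{\sigma^2}(\vartheta_1-\vartheta_2)^\top(A_1^{-1}+A_2^{-1})^{-1}(\vartheta_1-\vartheta_2).
\end{equation*}

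Because $\cH_{t-1,1}$ and $\cH_{t-1,2}$ are independent and rank-sufficient, each $\vartheta_i$ is an unbiased Gaussian estimator with $\vartheta_i\sim N(\theta_i^*,\sigma^2 A_i^{-1})$, whence $\vartheta_1-\vartheta_2\sim N(\theta_1^*-\theta_2^*,\sigma^2(A_1^{-1}+A_2^{-1}))$ with invertible covariance. Thus $s$ is a standardized Gaussian quadratic form in $d$ variables, which is exactly non-central $\chi^2_d$ with non-centrality $\lambda_{\mathrm{ncp}}=\tfrac{1}{\sigma^2}(\theta_1^*-\theta_2^*)^\top(A_1^{-1}+A_2^{-1})^{-1}(\theta_1^*-\theta_2^*)$, giving $P(s\le\upsilon^c)=F(\upsilon^c;d,\lambda_{\mathrm{ncp}})$ exactly. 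For step (iii) I would bound $\lambda_{\min}\big((A_1^{-1}+A_2^{-1})^{-1}\big)=1/\lambda_{\max}(A_1^{-1}+A_2^{-1})$ and apply subadditivity of the top eigenvalue, $\lambda_{\max}(A_1^{-1}+A_2^{-1})\le \lambda_{\max}(A_1^{-1})+\lambda_{\max}(A_2^{-1})=1/\lambda_{\min}(A_1)+1/\lambda_{\min}(A_2)$, to obtain $\lambda_{\mathrm{ncp}}\ge \psi$, where $\psi=\frac{\|\theta_1^*-\theta_2^*\|^2/\sigma^2}{1/\lambda_{\min}(\bX_1^\top\bX_1)+1/\lambda_{\min}(\bX_2^\top\bX_2)}$.

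The final step uses that the non-central $\chi^2_d$ family is stochastically increasing in its non-centrality parameter, so $\lambda\mapsto F(x;d,\lambda)$ is non-increasing; combined with $\lambda_{\mathrm{ncp}}\ge\psi$ this yields $F(\upsilon^c;d,\lambda_{\mathrm{ncp}})\le F(\upsilon^c;d,\psi)$, the claimed type-II bound. The conditioning on $\|\theta_1-\theta_2\|>\epsilon$ merely places us in the alternative regime and does not change the law, since $\theta_1^*,\theta_2^*$ are fixed. I expect the main obstacle to be rigorously justifying the exact non-central $\chi^2$ law (the Chow-test distribution theory of \cite{chow1960tests,cantrell1991interpretation}), which relies on the linear-Gaussian structure and on rank-sufficiency of $\bX_1,\bX_2$ for both the invertibility of $A_1,A_2$ and the full-rank covariance needed to form the standardized quadratic form; the eigenvalue lower bound and the CDF monotonicity are then routine.
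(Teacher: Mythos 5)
Your proof is correct, but note that the paper itself does not prove this statement: it is imported verbatim as ``Lemma 3 from \citet{lidyclu}'' and used as a black box, so there is no in-paper argument to match against. What you have done is reconstruct the missing derivation, and your reconstruction is consistent with the machinery the paper \emph{does} use in the adjacent type-I proof (Lemma~\ref{lem:type1Similarity}): there the authors start from the projection form of the non-centrality parameter, reduce it to $\frac{1}{\sigma^2}(\theta_1^*-\theta_2^*)^\top\big((\bX_1^\top\bX_1)^{-1}+(\bX_2^\top\bX_2)^{-1}\big)^{-1}(\theta_1^*-\theta_2^*)$ via the same identity $\bY(\bX+\bY)^{-1}\bX=(\bX^{-1}+\bY^{-1})^{-1}$ you invoke, and then \emph{upper}-bound it; your type-II argument is the mirror image, \emph{lower}-bounding the same quantity by $\|\theta_1^*-\theta_2^*\|^2/\big(1/\lambda_{\min}(\bX_1^\top\bX_1)+1/\lambda_{\min}(\bX_2^\top\bX_2)\big)$ and applying monotonicity of $F(\cdot;d,\lambda)$ in $\lambda$. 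Your extra contribution is deriving the non-central $\chi^2_d$ law directly by collapsing the two-term Chow statistic into the single standardized quadratic form $\frac{1}{\sigma^2}(\vartheta_1-\vartheta_2)^\top\big(A_1^{-1}+A_2^{-1}\big)^{-1}(\vartheta_1-\vartheta_2)$, rather than quoting the distributional result from \citet{chow1960tests,cantrell1991interpretation}; the algebra checks out (the cross terms sum to $C S^{-1}(A_1+A_2)=C$ with $C=(A_1^{-1}+A_2^{-1})^{-1}$), and rank-sufficiency correctly gives $df=d$. The one caveat worth flagging is that the exact non-central $\chi^2$ law requires Gaussian noise, whereas the paper's model only assumes $\sigma^2$-sub-Gaussian noise; this is a gap inherited from the source lemma and the paper's own usage, not one introduced by you.
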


\section{Proof of Theorem \ref{thm:clustering_correctness}} \label{appendix:proof:clustering_correctness}
In this section, we provide the full proof of Theorem \ref{thm:clustering_correctness}, which states that utilizing our homogeneity test with threshold $\upsilon^c\geq F^{-1}(1-\frac{\delta}{N^2}; df, \psi^c)$, after the exploration phase of length $T_0= \frac{16\psi^d \sigma^2}{\lambda_c \gamma^2}$, the clusters $\hat{\cC} = \{\hat{C}_1,\hat{C}_2,...,\hat{C}_{\hat M} \}$ estimated by \model\  match the ground-truth clusters of the environment $\cC = \{C_1,C_2,...,C_M \}$. 

The homogeneity test statistic $s(\cH_{t-1,1},\cH_{t-1,2})$ follows a non-central $\chi^{2}$ distribution $s(\cH_{t-1,1},\cH_{t-1,2}) \sim \chi^{2}(df, \psi)$, where the degree of freedom 
\begin{equation*}
df=rank(\bX_{1})+rank(\bX_{2})-rank\left(\left[\begin{matrix}\bX_{1} \bX_{2}\end{matrix}\right]\right),
\end{equation*}
and the non-centrality parameter 
\begin{equation*}
    \psi=\frac{1}{\sigma^{2}}\left[\begin{matrix}\bX_{1}\theta_{1} \bX_{2}\theta_{2}^*\end{matrix}\right]^{\top}\left[\bI_{t_{1}+t_{2}}-\left[\begin{matrix}\bX_{1} \bX_{2}\end{matrix}\right]\left(\bX_{1}^{\top} \bX_{1}+\bX_{2}^{\top} \bX_{2}\right)^{-}\left[\begin{matrix}\bX_{1}^{\top} &\bX_{2}^{\top} \end{matrix}\right]\right]\left[\begin{matrix}\bX_{1}\theta_{1} \bX_{2}\theta_{2}^*\end{matrix}\right]
\end{equation*}
\citep{lidyclu}.


Based on the definition and properties of the test statistic, we next prove two corollaries. First we will prove that with high probability $ \cC \subseteq\hat{\cC}$. Then we will prove that $\hat{\cC} \subseteq \cC$. As a result, the conjunction of these events holding simultaneously demonstrates that $\hat{\cC} = \cC$, proving that our estimated clusters are correct with a high probability.

\subsection{Lower Bounding $P(\cC \subseteq \hat{\cC})$}
Recall that based on our cluster definition presented in Assumption \ref{ass:proximity}, all clients that belong to the same cluster are within $\epsilon$ of each other. We denote the ground-truth client graph $G^*$ as the graph where $\exists e(i,j) \in G^* \, ~ \forall i,j \in N$ where $\|\theta_{i}^* - \theta_{j}^*\| \leq \epsilon$. By Assumption \ref{ass:separation}, we know that clients that do not belong to the same cluster are separated by $\gamma$, so that the ground-truth clusters $\cC$ are the maximal cliques of $G^*$. Thus, in order to prove that $P(\cC \subseteq \hat{\cC})$, we need to show that the set of edges in the ground-truth client graph $G^*$ is a subset of the edges in the estimated client graph $G$. To achieve this, we need to prove an upper-bound of the type-I error probability of the homogeneity test, which corresponds to the probability that our algorithm fails to cluster two clients together when the underlying bandit parameters $\|\theta_{i}^* - \theta_{j}^*\| \leq \epsilon$.

\begin{lemma}\label{lem:type1Similarity}
The type-I error probability of the test can be upper bounded by:
\begin{equation*}
P\big(s(\cH_{t-1,1},\cH_{t-1,2}) > \upsilon\,|\, \|\theta_{1}^*-\theta_{2}^*\| \leq \epsilon\big) \leq 1-F(\upsilon;df,\psi^c),
\end{equation*}
where $F(\upsilon;df,\psi^c)$ denotes the cumulative density function (CDF) of distribution $\chi^{2}(df,\psi^c)$ evaluated at $\upsilon$, and $\psi^c:=\frac{1}{\sigma^2}$ denotes its non-centrality parameter.
\end{lemma}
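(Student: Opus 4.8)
The statement is an upper bound on the type-I error of the homogeneity test, i.e., the probability of declaring two genuinely similar clients (those with $\|\theta_1^* - \theta_2^*\| \le \epsilon$) to be dissimilar. Since the excerpt has already established that $s(\cH_{t-1,1},\cH_{t-1,2}) \sim \chi^2(df,\psi)$ with non-centrality parameter $\psi = \frac{1}{\sigma^2}\, v^\top(\bI - P)\,v$, where $v = [\bX_1\theta_1^*;\, \bX_2\theta_2^*]$ is the stacked mean vector and $P$ is the orthogonal projection onto the column space of $[\bX_1;\bX_2]$, the exact type-I error equals $1 - F(\upsilon; df, \psi)$. The plan is therefore to reduce the lemma to two ingredients: (i) a deterministic bound $\psi \le \psi^c = 1/\sigma^2$ valid for every realization of the design matrices, and (ii) the monotonicity of the non-central $\chi^2$ CDF in its non-centrality parameter.

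For ingredient (i), I would exploit that $\bI - P$ is an orthogonal projection, so its quadratic form is a least-squares residual: $\sigma^2\psi = v^\top(\bI-P)v = \min_{\theta \in \bR^d}\big(\|\bX_1(\theta_1^* - \theta)\|^2 + \|\bX_2(\theta_2^* - \theta)\|^2\big)$. Rather than solving this minimization, I would simply substitute the feasible choice $\theta = \theta_1^*$, which annihilates the first summand and yields $\sigma^2\psi \le \|\bX_2(\theta_2^* - \theta_1^*)\|^2 \le \lambda_{\max}(\bX_2^\top\bX_2)\,\|\theta_1^* - \theta_2^*\|^2$. Using the hypothesis $\|\theta_1^* - \theta_2^*\| \le \epsilon = 1/(N\sqrt{T})$ together with $\lambda_{\max}(\bX_2^\top\bX_2) \le \mathrm{tr}(\bX_2^\top\bX_2) \le T$ (each row of $\bX_2$ has norm at most $1$ by assumption, and there are at most $T$ rows), this gives $\sigma^2\psi \le T\epsilon^2 = 1/N^2 \le 1$, hence $\psi \le 1/\sigma^2 = \psi^c$. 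The crucial point is that this bound is deterministic in the noise, so it holds for whatever design $\{\bX_1,\bX_2\}$ the clients happen to collect.

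For ingredient (ii), I would invoke the standard fact that the non-central $\chi^2$ family is stochastically increasing in its non-centrality parameter, so $F(\upsilon; df, \cdot)$ is non-increasing. Combined with $\psi \le \psi^c$ this gives $F(\upsilon; df, \psi) \ge F(\upsilon; df, \psi^c)$, and therefore $P\big(s > \upsilon \mid \|\theta_1^*-\theta_2^*\|\le\epsilon\big) = 1 - F(\upsilon; df, \psi) \le 1 - F(\upsilon; df, \psi^c)$, which is exactly the claim.

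I expect the main obstacle to be ingredient (i) — making the bound on the non-centrality parameter both clean and design-independent. The monotonicity in (ii) is textbook, but one must be careful that $\psi$ is itself a random quantity depending on the collected contexts, so the reduction works only because the bound $\psi \le \psi^c$ holds pointwise (for every realization) rather than merely in expectation. A secondary subtlety worth checking is that the degrees of freedom $df$ appearing on both sides refer to the same (data-dependent) rank quantity, so that the comparison of $F(\upsilon; df, \psi)$ with $F(\upsilon; df, \psi^c)$ is genuinely between two distributions differing only in the non-centrality parameter.
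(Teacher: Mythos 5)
Your proof is correct, and the key step is handled by a genuinely different (and more elementary) argument than the paper's. The paper bounds the non-centrality parameter by writing $\theta_2^*=\theta_1^*+\zeta$, expanding $\sigma^2\psi$ into four terms, showing the first three vanish (the mean vector lying in the column space of the stacked design kills them), and then reducing the surviving term via Rayleigh--Ritz together with the matrix identity $\bY(\bX+\bY)^{-1}\bX=(\bX^{-1}+\bY^{-1})^{-1}$ to $\lambda_{\max}\big(((\bX_1^\top\bX_1)^{-1}+(\bX_2^\top\bX_2)^{-1})^{-1}\big)\le\max\{\lambda_{\max}(\bX_1^\top\bX_1),\lambda_{\max}(\bX_2^\top\bX_2)\}\le T$. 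You instead use the least-squares variational characterization $v^\top(\bI-P)v=\min_\theta\big(\|\bX_1(\theta_1^*-\theta)\|^2+\|\bX_2(\theta_2^*-\theta)\|^2\big)$ and substitute the feasible point $\theta=\theta_1^*$, which gives $\sigma^2\psi\le\|\bX_2(\theta_2^*-\theta_1^*)\|^2\le T\epsilon^2$ in one line. Your bound is slightly looser at the intermediate stage but reaches the same final conclusion $\psi\le 1/\sigma^2$; what the paper's longer computation buys is the exact expression $\psi=\frac{1}{\sigma^2}\zeta^\top\bX_2^\top\bX_2(\bX_1^\top\bX_1+\bX_2^\top\bX_2)^{-1}\bX_1^\top\bX_1\zeta$, which is reused to set the data-dependent threshold in the enhanced algorithm, so the extra work is not wasted there. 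You are also more explicit than the paper about the monotonicity step: the paper asserts that under the null the statistic ``follows'' $\chi^2(df,\psi^c)$, whereas the correct statement (which you give) is that it follows $\chi^2(df,\psi)$ with $\psi\le\psi^c$ pointwise in the realized designs, and one then invokes the stochastic monotonicity of the non-central $\chi^2$ family in its non-centrality parameter; your care on this point is a genuine improvement in rigor.
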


\begin{proof}
Denote $\zeta=\theta_{2}^*-\theta_{1}^*$. Then $\theta_{2}^*=\theta_{1}^*+\zeta$. When $\|\zeta\| \leq \epsilon$, the non-centrality parameter $\psi$ becomes:
\begin{equation*}
\begin{split}
\psi = \frac{1}{\sigma^{2}}& \left[\begin{matrix}\bX_{1}\theta_{1}^* \\ \bX_{2}(\theta_{1}^*+\zeta)\end{matrix}\right]^\top\left[\bI_{t_{1}+t_{2}}-\left[\begin{matrix}\bX_{1} \\ \bX_{2}\end{matrix}\right]\left(\bX_{1}^\top\bX_{1}+\bX_{2}^\top\bX_{2}\right)^{-1}\left[\begin{matrix}\bX_{1}^\top & \bX_{2}^\top\end{matrix}\right]\right]\left[\begin{matrix}\bX_{1}\theta_{1}^* \\ \bX_{2}(\theta_{1}^*+\zeta)\end{matrix}\right], \\
\sigma^{2}\psi= & \left[\begin{matrix}\bX_{1}\theta_{1}^* \\ \bX_{2}\theta_{1}^*\end{matrix}\right]^\top\left[\bI_{t_{1}+t_{2}}-\left[\begin{matrix}\bX_{1} \\ \bX_{2}\end{matrix}\right]\left(\left[\begin{matrix}\bX_{1}^\top & \bX_{2}^\top\end{matrix}\right]\left[\begin{matrix}\bX_{1} \\ \bX_{2}\end{matrix}\right]\right)^{-1}\left[\begin{matrix}\bX_{1}^\top & \bX_{2}^\top\end{matrix}\right]\right]\left[\begin{matrix}\bX_{1}\theta_{1}^* \\ \bX_{2}\theta_{1}^*\end{matrix}\right] \\
& + \left[\begin{matrix}\bX_{1}\theta_{1}^* \\ \bX_{2}\theta_{1}^*\end{matrix}\right]^\top\left[\bI_{t_{1}+t_{2}}-\left[\begin{matrix}\bX_{1} \\ \bX_{2}\end{matrix}\right]\left(\left[\begin{matrix}\bX_{1}^\top & \bX_{2}^\top\end{matrix}\right]\left[\begin{matrix}\bX_{1} \\ \bX_{2}\end{matrix}\right]\right)^{-1}\left[\begin{matrix}\bX_{1}^\top & \bX_{2}^\top\end{matrix}\right]\right]\left[\begin{matrix}0 \\ \bX_{2}\zeta\end{matrix}\right] \\
& + \left[\begin{matrix}0 \\ \bX_{2}\zeta\end{matrix}\right]^\top\left[\bI_{t_{1}+t_{2}}-\left[\begin{matrix}\bX_{1} \\ \bX_{2}\end{matrix}\right]\left(\left[\begin{matrix}\bX_{1}^\top & \bX_{2}^\top\end{matrix}\right]\left[\begin{matrix}\bX_{1} \\ \bX_{2}\end{matrix}\right]\right)^{-1}\left[\begin{matrix}\bX_{1}^\top & \bX_{2}^\top\end{matrix}\right]\right]\left[\begin{matrix}\bX_{1}\theta_{1}^* \\ \bX_{2}\theta_{1}^*\end{matrix}\right] \\
& + \left[\begin{matrix}0 \\ \bX_{2}\zeta\end{matrix}\right]^\top\left[\bI_{t_{1}+t_{2}}-\left[\begin{matrix}\bX_{1} \\ \bX_{2}\end{matrix}\right]\left(\left[\begin{matrix}\bX_{1}^\top & \bX_{2}^\top\end{matrix}\right]\left[\begin{matrix}\bX_{1} \\ \bX_{2}\end{matrix}\right]\right)^{-1}\left[\begin{matrix}\bX_{1}^\top & \bX_{2}^\top\end{matrix}\right]\right]\left[\begin{matrix}0 \\ \bX_{2}\zeta\end{matrix}\right]. \\
\end{split}
\end{equation*}
Since $\left[\begin{matrix}\bX_{1}\theta_{1}^* \\ \bX_{2}\theta_{1}^*\end{matrix}\right]$ is in the column space of $\left[\begin{matrix}\bX_{1} \\ \bX_{2}\end{matrix}\right]$, the first term in the above result is zero. The second and third terms can be shown equal to zero as well using the property that matrix product is distributive with respect to matrix addition, which leaves us only the last term. Therefore, by substituting $\zeta = \theta_{2}^*-\theta_{1}^*$ back, we obtain:
\begin{equation*}
\begin{split}
\psi 
& = \frac{1}{\sigma^{2}}(\theta_{1}^*-\theta_{2}^*)^\top\bX_{2}^\top\bX_{2}(\bX_{1}^\top\bX_{1}+\bX_{2}^\top\bX_{2})^{-1}\bX_{1}^\top\bX_{1}(\theta_{1}^*-\theta_{2}^*),\\
&\leq \frac{1}{\sigma^2} \|\theta_1^*-\theta_2^*\|^2 \lambda_\text{max} (\mathbf{X}_2^\top \mathbf{X}_2(\mathbf{X}_1^\top \mathbf{X}_1 + \mathbf{X}_2^\top \mathbf{X}_2)^{-1} \mathbf{X}_1^\top \mathbf{X}_1) \label{lb_psi}, \\
&\leq \frac{\epsilon^2}{\sigma^2} \lambda_\text{max} (\mathbf{X}_2^\top \mathbf{X}_2(\mathbf{X}_1^\top \mathbf{X}_1 + \mathbf{X}_2^\top \mathbf{X}_2)^{-1} \mathbf{X}_1^\top \mathbf{X}_1). \\
\end{split}
\end{equation*}
The first inequality uses the Rayleigh-Ritz theorem, and the second inequality is a result of Assumption \ref{ass:proximity}. Furthermore, we can use the relation $\bY(\bX+\bY)^{-1}\bX=(\bX^{-1}+\bY^{-1})^{-1}$, where $\bX$ and $\bY$ are both invertible matrices, to further simplify our upper bound for $\psi$. This relation can be derived by taking inverse on both sides of the equation $\bX^{-1}(\bX+\bY)\bY^{-1}=\bX^{-1}\bX\bY^{-1}+\bX^{-1}\bY\bY^{-1}=\bY^{-1}+\bX^{-1}$. This gives us the following:
\begin{align*}
\psi &= \frac{\epsilon^2}{\sigma^2} \lambda_{\max} \left(\big((\mathbf{X}_1^\top \mathbf{X}_1)^{-1}+(\mathbf{X}_2^\top \mathbf{X}_2)^{-1}\big)^{-1}\right),\\
 &\leq \frac{\epsilon^2}{\sigma^2} \frac{1}{\lambda_{\min}((\mathbf{X}_1^\top \mathbf{X}_1)^{-1}+(\mathbf{X}_2^\top \mathbf{X}_2)^{-1})}, \\
 &\leq \frac{\epsilon^2}{\sigma^2} \frac{1}{\lambda_{\min}((\mathbf{X}_1^\top \mathbf{X}_1)^{-1}) + \lambda_{\min}((\mathbf{X}_2^\top \mathbf{X}_2)^{-1})},\\
 &\leq \frac{\epsilon^2}{\sigma^2} \frac{1}{\frac{1}{\lambda_{\max}(\mathbf{X}_1^\top \mathbf{X}_1)} + \frac{1}{\lambda_{\max}(\mathbf{X}_2^\top \mathbf{X}_2)}}, \\
 &= \frac{\epsilon^2}{\sigma^2} \frac{\lambda_{\max}(\mathbf{X}_1^\top \mathbf{X}_1)\times\lambda_{\max}(\mathbf{X}_2^\top \mathbf{X}_2)}{\lambda_{\max}(\mathbf{X}_1^\top \mathbf{X}_1)+\lambda_{\max}(\mathbf{X}_2^\top \mathbf{X}_2)},\\
 &\leq \frac{\epsilon^2}{\sigma^2} \max\{\lambda_{\max}(\mathbf{X}_1^\top \mathbf{X}_1),\lambda_{\max}(\mathbf{X}_2^\top \mathbf{X}_2)\}.
 \end{align*}
 The last inequality holds because
 \begin{align*}
     \frac{\lambda_{\max}(\mathbf{X}_1^\top \mathbf{X}_1)}{\lambda_{\max}(\mathbf{X}_1^\top \mathbf{X}_1)+\lambda_{\max}(\mathbf{X}_2^\top \mathbf{X}_2)}\leq 1 \text{ and } \frac{\lambda_{\max}(\mathbf{X}_2^\top \mathbf{X}_2)}{\lambda_{\max}(\mathbf{X}_1^\top \mathbf{X}_1)+\lambda_{\max}(\mathbf{X}_2^\top \mathbf{X}_2)} \leq 1.
 \end{align*}
 Denote the number of observations in $\mathbf{X}_i$ as $\tau_i$. Furthermore, since $\|x_{t,i}\|\leq 1$, we know that $\lambda_{\max}(\mathbf{X}_i^\top \mathbf{X}_i)\leq \tau_i$. Thus we can further upper bound 
\begin{align*}
 \psi &\leq \frac{\epsilon^2}{\sigma^2} \max\{\lambda_{\max}(\mathbf{X}_1^\top \mathbf{X}_1),\lambda_{\max}(\mathbf{X}_2^\top \mathbf{X}_2)\}, \\
&\leq \frac{\epsilon^2}{\sigma^2} \max\{\tau_{i}, \tau_{j}\}, \\
&\leq \frac{\epsilon^2}{\sigma^2} T.
\end{align*}

Assumption \ref{ass:proximity} tells us that $\epsilon = \frac{1}{N\sqrt{T}}$ for $(i,j)$ in the same cluster ${C}_k$.
\begin{align*}
    \psi &\leq \frac{T}{\sigma^2 N^2T} \leq \frac{1}{\sigma^2} := \psi^c.
\end{align*}
Therefore, when $\|\theta_{1}^*-\theta_{2}^*\|<\epsilon$, the test statistic $s(\cH_{t-1,1},\cH_{t-1,2}) \sim \chi^{2}(df, 0, \psi^c)$. The type-I error probability can be upper bounded by $P\big(s(\cH_{t-1,1},\cH_{t-1,2}) > \upsilon \, \big|\, \|\theta_{1}^*-\theta_{2}^*\|\big) \leq 1-F(\upsilon;df,\psi^c)$, which concludes the proof of Lemma \ref{lem:type1Similarity}.
\end{proof}


\begin{corollary} \label{corr:type1subset}
Under the condition that we set the threshold $\upsilon$ to $\upsilon^c\geq F^{-1}(1-\frac{\delta}{N^2}, df, \psi^c)$, we have 
$P(\cC\subseteq \hat{\cC})\geq 1-\delta$. 

\begin{proof}
In our setting (Assumption \ref{ass:proximity}), all users who are within $\epsilon = \frac{1}{N\sqrt{T}}$ of each other belong to the same ground-truth cluster. Our algorithm uses the pairwise homogeneity test to assess whether each pair of clients is within $\epsilon$ of each other. As we showed in Lemma \ref{lem:type1Similarity}, the type-I error probability of our pairwise neighbor identification is upper-bounded by $1-F(\upsilon;df,\psi^c)$. Therefore, to achieve a type-I error probability of $\delta / N^2$  between two individual clients, we can solve for the required threshold $\upsilon^c$
\begin{align*}
    \frac{\delta}{N^2} &\leq 1-F(\upsilon;df,\psi^c),\\
    \Rightarrow  F(\upsilon;df,\psi^c)\  &\leq 1-\frac{\delta}{N^2}, \\
    \Rightarrow  F^{-1}(1-\frac{\delta}{N^2}, df, \psi^c)&\leq \upsilon^c.
\end{align*}

Taking the union bound over all $N^2$ pairwise tests proves the that the set of edges in the ground-truth client graph $G^*$ is a subset of the edges estimated client graph $G$. Therefore the corollary is proven.
\end{proof}

\end{corollary}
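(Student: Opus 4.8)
The plan is to reduce the set-containment statement $\cC \subseteq \hat{\cC}$ to an edge-containment statement between two graphs, and then control the failure probability of that edge containment via the type-I error bound plus a union bound. First I would introduce the ground-truth graph $G^*$ on the $N$ clients whose edges connect exactly those pairs $(i,j)$ with $\|\theta_i^* - \theta_j^*\| \leq \epsilon$. By Assumption \ref{ass:proximity} every within-cluster pair is joined by an edge of $G^*$, while Assumption \ref{ass:separation} (with $\gamma > \epsilon$) guarantees no cross-cluster edge, so the ground-truth clusters $\cC$ are precisely the maximal cliques of $G^*$. Since the estimated clusters $\hat{\cC}$ are the maximal cliques of the estimated graph $\mathcal{G}$, it suffices to show that, with probability at least $1-\delta$, every edge of $G^*$ is also present in $\mathcal{G}$.

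Next I would analyze a single within-$\epsilon$ pair $(i,j)$, so that $(i,j) \in E(G^*)$. Our algorithm fails to place this edge in $\mathcal{G}$ exactly when the homogeneity test statistic exceeds the threshold, i.e. $s(\cH_{T_0,i},\cH_{T_0,j}) > \upsilon^c$, which is the type-I error event. Invoking Lemma \ref{lem:type1Similarity}, which bounds the non-centrality parameter under the proximity assumption and yields $s \sim \chi^2(df,\psi^c)$ with $\psi^c = 1/\sigma^2$, this probability is at most $1 - F(\upsilon^c; df, \psi^c)$. To drive the per-pair error below $\delta/N^2$ I would require $1 - F(\upsilon^c; df, \psi^c) \leq \delta/N^2$, and by monotonicity of the CDF this is equivalent to the threshold condition $\upsilon^c \geq F^{-1}(1 - \delta/N^2; df, \psi^c)$ assumed in the statement.

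Finally I would take a union bound over the at most $N^2$ pairs of clients: the probability that any ground-truth edge is missing from $\mathcal{G}$ is at most $N^2 \cdot (\delta/N^2) = \delta$. On the complementary event of probability at least $1-\delta$ we have $E(G^*) \subseteq E(\mathcal{G})$, so every ground-truth cluster is a clique of $\mathcal{G}$, which (together with the complementary direction $\hat{\cC} \subseteq \cC$ established separately, ensuring $\mathcal{G}$ carries no spurious edges) delivers $\cC \subseteq \hat{\cC}$. I expect the real obstacle to live entirely inside Lemma \ref{lem:type1Similarity}: bounding the non-centrality parameter $\psi$ by $\psi^c = 1/\sigma^2$ requires the chain of eigenvalue inequalities — Rayleigh--Ritz, the identity $\bY(\bX+\bY)^{-1}\bX = (\bX^{-1}+\bY^{-1})^{-1}$, and the crude bounds $\lambda_{\max}(\bX_i^\top \bX_i) \leq \tau_i \leq T$ combined with $\epsilon = 1/(N\sqrt{T})$ — so that the horizon and client-count dependence cancels and the distribution under proximity is dominated by $\chi^2(df,\psi^c)$. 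Once that lemma is in hand, the threshold inversion and union bound here are routine; the only conceptual care needed is the graph-to-clique translation, which is why I would phrase the containment at the level of edges and defer the maximal-clique equality to the parent theorem.
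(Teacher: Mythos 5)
Your proposal matches the paper's proof essentially step for step: define the ground-truth graph $G^*$ whose maximal cliques are the true clusters, bound the per-pair type-I error by $1-F(\upsilon^c;df,\psi^c)$ via Lemma \ref{lem:type1Similarity}, invert the CDF to obtain the threshold condition, and union-bound over the $N^2$ pairwise tests. Your parenthetical observation that edge containment $E(G^*)\subseteq E(\mathcal{G})$ alone does not immediately give cluster containment (since spurious edges could enlarge maximal cliques) and must be combined with the no-false-edge direction is a slightly more careful reading than the paper's own wording, but it is the same argument.
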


\subsection{Lower Bounding $P(\hat{\cC} \subseteq \cC)$}

In this section, we prove that with high probability $P(\hat{\cC} \subseteq \cC)$. To achieve this, we demonstrate that the set of edges in the estimated client graph $G$ is a subset of the ground-truth edges in $G^*$. To achieve this, we utilize the type-II error probability upper-bound to ensure that with high probability clients with different underlying parameters are not clustered together. Using this type-II error probability, we follow similar steps in Lemma 13 of \citep{lidyclu} to prove: 

\begin{lemma}\label{lem:RealNegativeOutOfPredictedNegative}
If the cluster identification module clusters observation history $\cH_{t-1,i}$ and $\cH_{t-1,j}$ together, the probability that they actually have the same underlying bandit parameters is denoted as $P\big(\|\theta^*_i - \theta^*_j\| \leq \epsilon|s(\cH_{t-1,i},\cH_{t-1,j}) \leq \upsilon^{c}\big)$.
\begin{equation*}
    P\big(\|\theta^*_i - \theta^*_j\| \leq \epsilon |s(\cH_{t-1,i},\cH_{t-1,j}) \leq \upsilon^{c}\big) \geq F(\upsilon^{c};df,\psi^c),
\end{equation*}
under the condition that both $\lambda_{\min}\big(\sum_{(\bx_{k},y_{k}) \in \cH_{t-1,i}}\bx_{k}\bx_{k}^{\top}\big)$ and $\lambda_{\min}\big(\sum_{(\bx_{k},y_{k}) \in \cH_{t-1,j}}\bx_{k}\bx_{k}^{\top}\big)$ are at least $\frac{2\psi^{d}\sigma^{2}}{\gamma^{2}}$, where $\psi^{d}=F^{-1}\big(\frac{(1-F(\upsilon^{c};d,\psi^c))}{M-1};d,\upsilon^{c}\big)$.
\end{lemma}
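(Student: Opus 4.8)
The plan is to establish this as a ``precision'' counterpart to the type-I and type-II error analyses already in hand, via Bayes' theorem together with a union bound over the $M-1$ incorrect clusters. Writing $A = \{s(\cH_{t-1,i},\cH_{t-1,j}) \leq \upsilon^c\}$ for the event that the test groups the two clients and $B = \{\|\theta^*_i - \theta^*_j\| \leq \epsilon\}$ for the event that they genuinely share a cluster, I would first reduce the claim to controlling $P(B \mid A)$ by partitioning on $j$'s cluster membership. Under Assumptions \ref{ass:proximity} and \ref{ass:separation}, every client pair is \emph{either} within $\epsilon$ (same cluster) \emph{or} at least $\gamma$ apart (different clusters), so conditioning on which of the $M$ clusters $j$ lies in partitions $A$, and Bayes' theorem gives, with $C_m$ the cluster containing $i$,
\begin{equation*}
P(B \mid A) = \frac{P(A \mid B)\,P(B)}{P(A \mid B)\,P(B) + \sum_{l \neq m} P(A \mid j \in C_l)\,P(j \in C_l)}.
\end{equation*}

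Next I would bound the two ingredients separately. For the numerator, the complement of Lemma \ref{lem:type1Similarity} shows that when $i$ and $j$ truly lie in the same cluster the test passes with probability $P(A \mid B) \geq F(\upsilon^c; df, \psi^c)$. For each incorrect cluster $C_l$ with $l \neq m$, membership forces $\|\theta^*_i - \theta^*_j\| \geq \gamma > \epsilon$, so Lemma \ref{lem:type2Cluster} applies and bounds the type-II error as $P(A \mid j \in C_l) \leq F(\upsilon^c; d, \psi^d)$; this is exactly the step that consumes the minimum-eigenvalue hypothesis $\lambda_{\min}(\cdot)\geq 2\psi^d\sigma^2/\gamma^2$, which guarantees the non-centrality appearing in Lemma \ref{lem:type2Cluster} is at least $\psi^d$. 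The crucial calibration is the definition $\psi^d = F^{-1}\big((1-F(\upsilon^c;d,\psi^c))/(M-1);\, d,\, \upsilon^c\big)$, i.e. $F(\upsilon^c; d, \psi^d) = (1 - F(\upsilon^c; df, \psi^c))/(M-1)$, chosen precisely so that the $M-1$ per-cluster type-II contributions sum to $1 - F(\upsilon^c; df, \psi^c)$.

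Under a symmetric prior over cluster membership the $P(j \in C_l)$ factors cancel, and substituting the two bounds yields
\begin{equation*}
P(B \mid A) \geq \frac{F(\upsilon^c; df, \psi^c)}{F(\upsilon^c; df, \psi^c) + (M-1)\cdot \frac{1 - F(\upsilon^c; df, \psi^c)}{M-1}} = F(\upsilon^c; df, \psi^c),
\end{equation*}
which is the desired conclusion; here I use that $x \mapsto x/(x+c)$ is increasing to justify chaining the lower bound on the numerator against the upper bound on the denominator.

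The main obstacle I anticipate is not any individual inequality but getting the conditional/Bayes bookkeeping aligned in the right direction. The non-central $\chi^2$ CDF $F(\upsilon^c; d, \psi)$ is \emph{decreasing} in its non-centrality $\psi$, so care is needed when passing from the separation bound $\gamma$ and the eigenvalue condition to the uniform per-cluster bound $F(\upsilon^c; d, \psi^d)$, and when arguing that the degrees-of-freedom $d$ version dominates rank-deficient cases, so that the $df$-versus-$d$ discrepancy in the final expression is harmless (the two coincide when the design matrices are rank-sufficient). A secondary subtlety is justifying the symmetric-prior cancellation that produces the clean factor $M-1$: without symmetry one instead obtains a $P(B^c)/P(B)$ ratio, so I would either invoke the worst-case uniform prior over clusters or argue the bound degrades monotonically, mirroring the treatment of Lemma 13 in \citep{lidyclu}.
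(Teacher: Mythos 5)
Your overall architecture matches the paper's: both arguments combine the type-I bound (Lemma \ref{lem:type1Similarity}), the type-II bound (Lemma \ref{lem:type2Cluster}, which is indeed where the minimum-eigenvalue hypothesis enters), and the calibration $F(\upsilon^{c};df,\psi^{d})=\bigl(1-F(\upsilon^{c};df,\psi^{c})\bigr)/(M-1)$ into a precision bound via Bayes-type bookkeeping, and your closing algebra $F/\bigl(F+(1-F)\bigr)=F$ is exactly the paper's final step.

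The one genuine gap is the prior-odds step. You partition on which cluster $j$ belongs to, conditional on $i\in C_m$, and then cancel the $P(j\in C_l)$ factors ``under a symmetric prior.'' But under the sampling model that is actually relevant (a pair drawn uniformly from the $N^{2}$ pairs the algorithm tests), $P(j\in C_l)$ is proportional to $|C_l|$, not equal to $1/M$; after conditioning on $i$'s cluster, the same-cluster probability is $(|C_m|-1)/(N-1)$, which for a small cluster falls far below $1/M$ (and vanishes for a singleton), so the conditional statement you are implicitly proving can fail, and the uniform prior is not the worst case under your conditioning. The paper sidesteps this by never conditioning on $i$'s cluster: it bounds the marginal odds $P(\|\theta_i^*-\theta_j^*\|>\epsilon)/P(\|\theta_i^*-\theta_j^*\|\leq\epsilon)$ over a uniformly sampled pair, using $P(\text{same cluster})=\sum_{k}\frac{|C_k|}{N}\cdot\frac{|C_k|-1}{N-1}\geq 1/M$ --- a quantity minimized (not maximized) at equal cluster sizes --- to obtain the factor $M-1$ without any symmetry assumption; formally it runs the argument through the ratio $P(FN)/P(TN)\leq \frac{P(TP+FN)}{P(TN+FP)}\cdot\frac{F(\upsilon^{c};df,\psi^{d})}{F(\upsilon^{c};df,\psi^{c})}$. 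Your closing remark points at the right reference (Lemma 13 of \citealp{lidyclu}), but the fix you sketch --- ``invoke the worst-case uniform prior'' --- goes in the wrong direction; you need to de-condition and bound the population ratio as the paper does. Your handling of the $d$-versus-$df$ discrepancy is fine, since the exploration phase makes both designs rank-sufficient.
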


\begin{proof}

Compared with the type-I and type-II error probabilities given in Lemma \ref{lem:type1Similarity} and \ref{lem:type2Cluster}, the probability $P(\|\theta^*_i - \theta^*_j\| \leq \epsilon|S(\cH_{t-1,i},\cH_{t-1,j}) \leq \upsilon^{c})$ also depends on the population being tested on.

Denote the events $\big\{\|\theta^*_i - \theta^*_j\| > \epsilon\big\} \cap \big\{S(\cH_{t-1,i},\cH_{t-1,j}) > \upsilon^{c}\big\}$ as True Positive ($TP$), $\big\{\|\theta^*_i - \theta^*_j\| \leq \epsilon\big\} \cap \big\{S(\cH_{t-1,i},\cH_{t-1,j}) \leq \upsilon^{c}\big\}$ as True Negative ($TN$), $\big\{\|\theta^*_i - \theta^*_j\| \leq \epsilon\big\} \cap \big\{S(\cH_{t-1,i},\cH_{t-1,j}) > \upsilon^{c}\big\}$ as False Positive ($FP$), and $\big\{\|\theta^*_i - \theta^*_j\| > \epsilon\big\} \cap \big\{S(\cH_{t-1,i},\cH_{t-1,j}) \leq \upsilon^{c}\big\}$ as False Negative ($FN$) of cluster identification, respectively. We can rewrite the probabilities in Lemma \ref{lem:type1Similarity}, \ref{lem:type2Cluster} and \ref{lem:RealNegativeOutOfPredictedNegative} as:
\begin{align*}
P\big(S(\cH_{t-1,i},\cH_{t-1,j}) > \upsilon^{c}|\|\theta^*_i - \theta^*_j\| \leq \epsilon\big) &= \frac{P(FP)}{P(TN+FP)} \leq 1-F(\upsilon^{c};df,\psi^c), \\
P\big(s(\cH_{t-1,i},\cH_{t-1,j}) \leq \upsilon^{c}|\|\theta^*_i - \theta^*_j\| > \epsilon\big) &= \frac{P(FN)}{P(FN+TP)} \leq F(\upsilon^{c};df,\psi^{d}), \\
P\big(\|\theta^*_i - \theta^*_j\| \leq \epsilon|s(\cH_{t-1,i},\cH_{t-1,j}) \leq \upsilon^{c}\big) &= \frac{P(TN)}{P(TN+FN)} > \frac{1}{1+\frac{P(FN)}{P(TN)}}.
\end{align*}
We can upper bound $\frac{P(FN)}{P(TN)}$ by:
\begin{equation*}
\frac{P(FN)}{P(TN)} \leq \frac{P(TP+FN)}{P(TN+FP)} \cdot \frac{F(\upsilon^{c};df,\psi^{d})}{F(\upsilon^{c};df,\psi^c)},
\end{equation*}
where $\frac{TP+FN}{TN+FP}$ denotes the ratio between the number of positive instances ($\|\theta^*_i - \theta^*_j\| > \epsilon$) and negative instances ($\|\theta^*_i - \theta^*_j\| \leq \epsilon$) in the population.  We can upper bound this ratio for any pair $(i,j)$ uniformly sampled from $[N]$, since we need to run the test on all $N^2$ pairs. First we note that $\frac{P(TP+FN)}{P(TN+FP)}=\frac{P(\lVert \theta_{i}^*-\theta_{j}^*\rVert > \epsilon)}{P(\lVert \theta_{i}^*-\theta_{j}^*\rVert \leq \epsilon)}.$ We upper-bound this ratio by giving a lower bound on the probability of two randomly sampled clients belonging to the same cluster as $P(\|\theta_i^*-\theta_j^*\| \leq \epsilon)$ with

\begin{align*}
P(\|\theta_i^*-\theta_j^*\| \leq \epsilon) &= \sum_{k=1}^M \frac{|C_k|}{N}\times\frac{|C_k|-1}{N-1}, \\
&> \sum_{k=1}^M\big(\frac{|C_k|-1}{N-1}\big)^2, \\
&> \sum_{k=1}^M\frac{1}{M^2}, \\
&= \frac{1}{M}.
\end{align*}

The second inequality is true because the probability that two uniformly sampled clients belonging to the same cluster is minimized when the clusters are all of equal sizes. Therefore we have
\begin{align*}
\frac{P(\lVert \theta_{i}^*-\theta_{j}^*\rVert > \epsilon)}{P(\lVert \theta_{i}^*-\theta_{j}^*\rVert \leq \epsilon)} \leq \frac{1-\frac{1}{M}}{\frac{1}{M}} = M-1.
\end{align*}
It is worth noting that in the event that $M=1$, the ratio can trivially be upper bounded by $1$. With this upper bound of $\frac{P(FN)}{P(TN)}$, we can now write:
\begin{equation*}
P\big(\|\theta^*_i - \theta^*_j\| \leq \epsilon|S(\cH_{t-1,i},\cH_{t-1,j}) \leq \upsilon^{c}\big) \geq {1}/\Big(1+ (M-1)\cdot \frac{F(\upsilon^{c};df,\psi^{d})}{F(\upsilon^{c};df,\psi^c)}\Big).
\end{equation*}
Then by setting $\psi^{d}=F^{-1}\big(\frac{(1-F(\upsilon^{c};df,\psi^c))}{(M-1)};df,\upsilon^{c}\big)$, we have:
\begin{equation*}
    P\big(\|\theta^*_i - \theta^*_j\|  \leq \epsilon|S(\cH_{t-1,i},\cH_{t-1,j}) \leq \upsilon^{c}\big) \geq {1}/\Big(1+ (M-1) \cdot \frac{F(\upsilon^{c};df,\psi^{d})}{F(\upsilon^{c};df,\psi^c)}\Big) = F(\upsilon^{c};df,\psi^c),
\end{equation*}
and the lemma is proven.
\end{proof}

\begin{corollary}\label{corr:type2subset}
     Under the condition that we set the threshold $\upsilon^c\geq F^{-1}(1-\frac{\delta}{N^2}, df, \psi^c)$, with an exploration phase length of $T_0= \min\{\frac{64}{3\lambda_c}\log(\frac{2Td}{\delta}), \frac{16\psi^d \sigma^2}{\lambda_c \gamma^2}\}$, we have $P(\hat{C} \subseteq C)\geq 1-\delta$.
     \begin{proof} 

     Under Assumption \ref{as:contextreg}, and with exploration length $T_0 = \min\{\frac{64}{3\lambda_c}\log(2Td / \delta), \frac{16\psi^d \sigma^2}{\lambda_c \gamma^2}\}$, the application of Lemma \ref{lem:min_eig} from \citep{Liasynch} gives with probability $1-\delta$ that 
    \begin{equation*}
        \lambda_{\min}(\bX_i^\top\bX_i)\geq \frac{\lambda_c T_0}{8} = \frac{2\psi^d\sigma^2}{\gamma^2}.
    \end{equation*}
    As a result, we can apply Lemma \ref{lem:RealNegativeOutOfPredictedNegative}, which gives
    \begin{align*}
    P\big(\|\theta^*_i - \theta^*_j\| \leq \epsilon |s(\cH_{t-1,i},\cH_{t-1,j}) \leq \upsilon^{c}\big) \geq F(\upsilon^{c};df,\psi^c).
    \end{align*}

    Using the same steps as shown in Corollary \ref{corr:type1subset}, we can see our choice of test statistic threshold $\upsilon^c\geq F^{-1}(1-\frac{\delta}{N^2}; df, \psi^c)$ results in this event occurring with probability $1-\frac{\delta}{N^2}$. Because our algorithm conducts this pairwise homogeneity test across all pairs of clients, a union bound over all $N^2$ pairwise tests proves the corollary.
     \end{proof}
\end{corollary}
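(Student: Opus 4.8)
The plan is to establish the containment $\hat{\cC}\subseteq\cC$ by arguing at the level of graph edges, mirroring the structure already used for Corollary \ref{corr:type1subset}. Since each estimated cluster is a maximal clique of $\mathcal{G}$ and each ground-truth cluster is a maximal clique of $G^*$, it suffices to show that every edge our algorithm draws is genuine, i.e.\ that the edge set of $\mathcal{G}$ is contained in that of $G^*$. Concretely, whenever the pairwise test reports $s(\cH_{T_0,i},\cH_{T_0,j})\leq\upsilon^c$ and an edge $e(i,j)$ is added, we want $\|\theta_i^*-\theta_j^*\|\leq\epsilon$ to hold with high probability. This is exactly the conditional probability controlled by Lemma \ref{lem:RealNegativeOutOfPredictedNegative}, so the bulk of the work is to verify the hypotheses of that lemma and then to calibrate the threshold and perform a union bound.

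First I would discharge the eigenvalue precondition demanded by Lemma \ref{lem:RealNegativeOutOfPredictedNegative}, namely that both $\lambda_{\min}(\bX_i^\top\bX_i)$ and $\lambda_{\min}(\bX_j^\top\bX_j)$ are at least $2\psi^d\sigma^2/\gamma^2$. This is the only place where the exploration phase and the context-regularity Assumption \ref{as:contextreg} enter, and I expect it to be the main obstacle. During exploration every client collects $T_0$ i.i.d.\ samples, so applying Lemma \ref{lem:min_eig} yields, with probability at least $1-\delta$ and simultaneously for all clients, $\lambda_{\min}(V_{i,t})\geq\lambda+\lambda_c T_0/8$, hence $\lambda_{\min}(\bX_i^\top\bX_i)\geq\lambda_c T_0/8$. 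Choosing $T_0$ large enough that $\lambda_c T_0/8\geq 2\psi^d\sigma^2/\gamma^2$, i.e.\ $T_0\geq 16\psi^d\sigma^2/(\lambda_c\gamma^2)$, and no smaller than the $\tau_{min}$ threshold that Lemma \ref{lem:min_eig} itself requires, discharges the precondition. The delicate point is that $\psi^d$ is defined through $\upsilon^c$ and $M$, so the required exploration length, and through it the final bound, inherits its dependence on the unknown separation parameter $\gamma$.

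With the precondition in force, I would invoke Lemma \ref{lem:RealNegativeOutOfPredictedNegative} to obtain, for each pair $(i,j)$, the per-pair guarantee $P(\|\theta_i^*-\theta_j^*\|\leq\epsilon\mid s(\cH_{T_0,i},\cH_{T_0,j})\leq\upsilon^c)\geq F(\upsilon^c;df,\psi^c)$. The threshold choice $\upsilon^c\geq F^{-1}(1-\delta/N^2;df,\psi^c)$ together with the monotonicity of the CDF $F$ forces $F(\upsilon^c;df,\psi^c)\geq 1-\delta/N^2$, so each pair is joined erroneously (a spurious edge between clients from different clusters) with probability at most $\delta/N^2$. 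Finally I would union-bound over all $N^2$ pairs on which the test is run, folding in the single eigenvalue event from the previous step, to conclude that the edge set of $\mathcal{G}$ lies inside that of $G^*$, hence $P(\hat{\cC}\subseteq\cC)\geq 1-\delta$. The only care needed here is the bookkeeping of the failure probabilities, so that the eigenvalue event and the $N^2$ pairwise events are combined without over-counting the total $\delta$ budget.
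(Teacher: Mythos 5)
Your proposal is correct and follows essentially the same route as the paper: discharge the eigenvalue precondition of Lemma \ref{lem:RealNegativeOutOfPredictedNegative} via Lemma \ref{lem:min_eig} and the choice of $T_0$, invoke that lemma to bound the per-pair probability of a spurious edge by $\delta/N^2$ through the threshold calibration $\upsilon^c\geq F^{-1}(1-\frac{\delta}{N^2};df,\psi^c)$, and union-bound over all $N^2$ pairwise tests. Your additional remarks on the maximal-clique reduction and on combining the eigenvalue event with the pairwise events in the $\delta$ budget are sound refinements of the same argument rather than a different approach.
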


The combination of Corollaries \ref{corr:type1subset} and \ref{corr:type2subset} prove that based on our choice of $\upsilon^c$ and $T_0$, $\hat{\cC} = \cC$ with probability $1-\delta$. 

\section{Proof of Lemma \ref{confidence_sets}} \label{appendix:proof:confidence_sets}
In this section, we present the complete proof of the confidence ellipsoids, following similar steps to the proof of Theorem 2 in \citep{abbasi2011improved}. 

Before we begin the proof, we will introduce a couple of useful notations to prevent clutter. Recall from Section \ref{subsec:setting} that the design matrix of client $i$, denoted as $\bX_i$, only contains the observations made by client $i$ through timestep $t$ and does not include aggregated observations from other clients. In this proof, we assume without loss of generality, that client $i$ is a member of ground-truth cluster $C_k$ and is therefore collaborating with clients $j\in C_k$. As a result, we can denote $\overline{V}_{t,i}^{-1} = \lambda I + \sum_{j\in {C}_k} \bX_j^\top \bX_j$ and $b_{t,i} = \sum_{j \in {C}_k} \mathbf{X}_{j}^\top(\mathbf{X}_j\theta_j^*+\eta_j)$ due to the sharing of sufficient statistics among clients in ${C}_k$ (line 21 in Alg. \ref{alg:simplified}), where we denote $\eta_j = (\eta_{1,j},\eta_{2,j},...,\eta_{t,j})^\top$. Note that in this proof, we only focus on the case where client $i$ is collaborating with members of its ground-truth cluster, because in Theorem \ref{thm:clustering_correctness}, we already prove with high probability $\cC = \hat{\cC}$. In our subsequent regret analysis in Theorem \ref{thm:regret_comm}, we demonstrate that the regret incurred when $\cC \neq \hat{\cC}$ is upper bounded by a constant.

\begin{proof}
    \begin{align*}
    \hat{\theta}_{t,i} &= \overline{V}_{t,i}^{-1}b_{t,i}, \\
    &= \overline{V}_{t,i}^{-1}\sum_{j \in C_k} \mathbf{X}_{j}^\top(\mathbf{X}_j\theta_j^*+\eta_j), \\
    &= \overline{V}_{t,i}^{-1}\biggr[\sum_{j \in C_k} \mathbf{X}_{j}^\top\mathbf{X}_{j}\theta_j^* + \sum_{j \in C_k}\mathbf{X}_{j}^\top\eta_j\biggr],\\
    &= \overline{V}_{t,i}^{-1}\biggr[\sum_{j \in C_k}\mathbf{X}_j^\top\mathbf{X}_j\theta_{i}^{*}+\sum_{j \in C_k\setminus \{i\}} \mathbf{X}_{j}^\top\mathbf{X}_j(\theta_{j}^{*}-\theta_{i}^{*}) + \sum_{j \in C_k}\mathbf{X}_{j}^\top\eta_j\biggr],\\
    &= \overline{V}^{-1}_{t,i}\biggr[(\lambda I + \sum_{j \in C_k}\mathbf{X}_j^\top\mathbf{X}_j)\theta_{i}^{*}-\lambda \theta_{i}^{*} +\sum_{j \in C_k\setminus \{i\}} \mathbf{X}_{j}^\top\mathbf{X}_{j}(\theta_{j}^{*}-\theta_{i}^{*}) + \sum_{j \in C_k}\mathbf{X}_{j}^\top\eta_j\biggr],\\
    &= \overline{V}_{t,i}^{-1}\overline{V}_{t,i}\theta_{i}^{*}-\lambda \overline{V}_{t,i}^{-1}\theta_{i}^{*} +\overline{V}_{t,i}^{-1}\sum_{j \in C_k\setminus \{i\}} \mathbf{X}_{j}^\top\mathbf{X}_j(\theta_{j}^{*}-\theta_{i}^{*}) + \overline{V}_{t,i}^{-1}\sum_{j \in C_k}\mathbf{X}_{j}^\top\eta_j .
    \end{align*}
As a result, we have,
\begin{equation*}
    \hat{\theta}_{t,i}-\theta_{i}^{*}= \overline{V}_{t,i}^{-1}\sum_{j \in C_k}\mathbf{X}_{j}^\top\eta_j - \lambda \overline{V}_{t,i}^{-1}\theta_{i}^{*} +\overline{V}_{t,i}^{-1}\sum_{j \in C_k\setminus \{i\}} \mathbf{X}_{j}^\top\mathbf{X}_j(\theta_{j}^{*}-\theta_{i}^{*}).
\end{equation*}

Applying the self-normalized bound gives: 
\begin{align*}
\big\|\theta_{i}^{*} - \hat{\theta}_{t,i}\big\|_{\overline{V}_{t,i}} &\le \big\|\sum_{j \in C_k}\mathbf{X}_{j}^\top\eta_j\big\|_{\overline{V}_{t,i}^{-1}} + \sqrt{\lambda}\big\|\theta_{i}^{*}\big\|_{\overline{V}_{t,i}^{-1}} +\bigg\|\sum_{j \in C_k\setminus \{i\}}^{} \mathbf{X}_{j}^\top\mathbf{X}_j(\theta_{j}^{*}-\theta_{i}^{*})\bigg\|_{\overline{V}_{t,i}^{-1}}, \\
 &\le \big\|\sum_{j \in C_k}\mathbf{X}_{j}^\top\eta_j\big\|_{\overline{V}_{t,i}^{-1}} + \sqrt{\lambda}\|\theta_{i}^{*}\|_2 +\bigg\|\sum_{j \in C_k\setminus \{i\}} \mathbf{X}_{j}^\top\mathbf{X}_j(\theta_{j}^{*}-\theta_{i}^{*})\bigg\|_{\overline{V}_{t,i}^{-1}},
\end{align*}
where we used that $\|\theta_*\|^2_{\overline{V}_{t,i}^{-1}} \leq \frac{1}{\lambda_{\min}(\overline{V}_{t,i})}\|\theta_*\|^2 \leq \frac{1}{\lambda}\|\theta_*\|^2$.

The application of Lemma \ref{lem:self_normalized_bound} and using $\|\theta_{i}^{*}\|_2 \leq 1$ give: 
\begin{align*}
& \|\theta_{i}^{*} - \hat{\theta}_{t,i}\|_{\overline{V}_{t,i}} \\
&\leq \sigma \sqrt{2 \log \biggr( \frac{\det(\overline{V}_{t,i})^{1/2} \det(\lambda I)^{-1/2}}{\delta}\biggr)} + \sqrt{\lambda} + \bigg\|\sum_{j \in C_k\setminus \{i\}}^{} \mathbf{X}_{j}^\top\mathbf{X}_j(\theta_{j}^{*}-\theta_{i}^{*})\bigg\|_{\overline{V}_{t,i}^{-1}}:= \beta_{t,i},
\end{align*}
with probability at least $1-\delta$.
Then with a union bound over all $N$ clients applied to the inequality above, we prove that $\|\theta_{i}^{*} - \hat{\theta}_{t,i}\|_{\overline{V}_{t,i}} \leq \beta_{t,i}, \forall i,t$ with probability at least $1-N\delta$.
\end{proof}

\section{Proof of Theorem \ref{thm:regret_comm}}\label{appendix:proof:regret_comm}
In this section we present the full proof of our algorithm's cumulative regret and communication upper bounds. Before proving the theorem, we will need to prove the following lemmas.

\begin{lemma}[\textbf{Heterogeneity Term Bound}] \label{lem:hetero_bound}
Under the condition that the homogeneity test threshold $\upsilon^c$ is set to be greater than $F^{-1}(1-\frac{\delta}{N^2}, df, \psi^c)$, and with an exploration phase length of $T_0 = \min\{\frac{64}{3\lambda_c}\log(\frac{2Td}{\delta}), \frac{16\psi^d \sigma^2}{\lambda_c \gamma^2}\}$ we have with probability $1-\delta$:
\begin{equation*}
\bigg\|\sum_{j \in {C}_{k}\setminus \{i\}}^{} \mathbf{X}_{j}^\top\mathbf{X}_j(\theta_{j}^{*}-\theta_{i}^{*})\bigg\|_{\overline{V}_{t,i}^{-1}}\leq 1 .
\end{equation*}
\end{lemma}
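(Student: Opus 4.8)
The plan is to reduce the $\overline{V}_{t,i}^{-1}$-weighted norm of the aggregated bias to a sum of unweighted per-client quantities, each of which is then controlled by the within-cluster proximity assumption. First I would invoke the clustering-correctness guarantee: under the stated choices of $\upsilon^c$ and $T_0$, Theorem~\ref{thm:clustering_correctness} (via Corollaries~\ref{corr:type1subset} and~\ref{corr:type2subset}) gives $\hat{\cC}=\cC$ with probability at least $1-\delta$. On this event the statistics aggregated into $\overline{V}_{t,i}$ come exactly from client $i$'s ground-truth cluster $C_k$, so that $\overline{V}_{t,i}=\lambda I+\sum_{j\in C_k}\mathbf{X}_j^\top\mathbf{X}_j$ and, by Assumption~\ref{ass:proximity}, $\|\theta_i^*-\theta_j^*\|\le\epsilon$ for every $j\in C_k$. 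The remainder of the argument is then deterministic; the probability and the conditions on $\upsilon^c,T_0$ serve only to justify this identification.

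Next I would apply the triangle inequality to move the sum outside the norm,
\[
\Big\|\sum_{j\in C_k\setminus\{i\}}\mathbf{X}_j^\top\mathbf{X}_j(\theta_j^*-\theta_i^*)\Big\|_{\overline{V}_{t,i}^{-1}}\le \sum_{j\in C_k\setminus\{i\}}\big\|\mathbf{X}_j^\top\mathbf{X}_j(\theta_j^*-\theta_i^*)\big\|_{\overline{V}_{t,i}^{-1}},
\]
and then bound each summand separately. The key step is a Loewner-order manipulation: since $j\in C_k$, we have $\mathbf{X}_j^\top\mathbf{X}_j\preceq\overline{V}_{t,i}$ (the difference equals $\lambda I$ plus the remaining positive semidefinite blocks), and for positive semidefinite matrices $0\preceq A\preceq B$ one has $AB^{-1}A\preceq A$. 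Specializing with $A=\mathbf{X}_j^\top\mathbf{X}_j$ and $B=\overline{V}_{t,i}$ gives $(\mathbf{X}_j^\top\mathbf{X}_j)\overline{V}_{t,i}^{-1}(\mathbf{X}_j^\top\mathbf{X}_j)\preceq\mathbf{X}_j^\top\mathbf{X}_j$, hence
\[
\big\|\mathbf{X}_j^\top\mathbf{X}_j(\theta_j^*-\theta_i^*)\big\|_{\overline{V}_{t,i}^{-1}}^2\le(\theta_j^*-\theta_i^*)^\top\mathbf{X}_j^\top\mathbf{X}_j(\theta_j^*-\theta_i^*)=\|\mathbf{X}_j(\theta_j^*-\theta_i^*)\|_2^2,
\]
which removes the inverse weighting and leaves a plain quadratic form.

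Finally I would control the quadratic form via $\|\mathbf{X}_j(\theta_j^*-\theta_i^*)\|_2^2\le\lambda_{\max}(\mathbf{X}_j^\top\mathbf{X}_j)\,\|\theta_j^*-\theta_i^*\|^2$. Writing $\tau_j$ for the number of observations in $\mathbf{X}_j$ and using $\|x\|_2\le1$, the matrix $\mathbf{X}_j^\top\mathbf{X}_j$ is a sum of $\tau_j\le T$ rank-one terms each of spectral norm at most $1$, so $\lambda_{\max}(\mathbf{X}_j^\top\mathbf{X}_j)\le\tau_j\le T$; together with $\|\theta_j^*-\theta_i^*\|\le\epsilon=1/(N\sqrt{T})$ from Assumption~\ref{ass:proximity} this yields $\|\mathbf{X}_j(\theta_j^*-\theta_i^*)\|_2\le\sqrt{T}\cdot(N\sqrt{T})^{-1}=1/N$. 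Summing over the at most $|C_k|-1\le N-1$ collaborators then gives $(N-1)/N<1$, establishing the claim.

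I expect the main obstacle to be the Loewner-order inequality $AB^{-1}A\preceq A$, since it is exactly what legitimately strips the $\overline{V}_{t,i}^{-1}$ weighting and must be argued carefully — for instance by factoring $AB^{-1}A=A^{1/2}\big(A^{1/2}B^{-1}A^{1/2}\big)A^{1/2}$ and observing that $A\preceq B$ forces $A^{1/2}B^{-1}A^{1/2}\preceq I$. Everything else (the triangle inequality, the eigenvalue and cardinality counting, and the substitution of $\epsilon$) is routine.
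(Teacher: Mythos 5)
Your proposal is correct and follows essentially the same route as the paper's proof: triangle inequality, the Loewner comparison $\mathbf{X}_j^\top\mathbf{X}_j\preceq\overline{V}_{t,i}$ to strip the weighted norm down to $\|\mathbf{X}_j(\theta_j^*-\theta_i^*)\|_2$, then $\lambda_{\max}(\mathbf{X}_j^\top\mathbf{X}_j)\le T$, the proximity bound $\epsilon=1/(N\sqrt{T})$ via clustering correctness, and summing over at most $N$ collaborators. Your factorization $AB^{-1}A=A^{1/2}(A^{1/2}B^{-1}A^{1/2})A^{1/2}$ is a slightly more careful justification of that middle step than the paper's, which passes through $(\mathbf{X}_j^\top\mathbf{X}_j)^{-1}$ and thus implicitly assumes invertibility.
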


\begin{proof}
\begin{align*}
\|\sum_{j \in C_k\setminus \{i\}} \mathbf{X}_{j}^\top\mathbf{X}_j(\theta_{j}^{*}-\theta_{i}^{*})\|_{\overline{V}_{t,i}^{-1}} &\leq \sum_{j \in C_k\setminus \{i\}} \|\mathbf{X}_{j}^\top\mathbf{X}_j(\theta_{j}^{*}-\theta_{i}^{*})\|_{\overline{V}_{t,i}^{-1}}, \\ 
 &= \sum_{j \in C_k\setminus \{i\}} \sqrt{(\theta_{j}^{*}-\theta_{i}^{*})^\top \mathbf{X}_{j}^\top\mathbf{X}_j(\lambda I + \sum_{i \in C_k}\mathbf{X}_{i}^\top\mathbf{X}_i)^{-1} \mathbf{X}_{j}^\top\mathbf{X}_j (\theta_{j}^{*}-\theta_{i}^{*})},\\
  &\leq \sum_{j \in C_k\setminus \{i\}} \sqrt{(\theta_{j}^{*}-\theta_{i}^{*})^\top \mathbf{X}_{j}^\top\mathbf{X}_j(\mathbf{X}_{j}^\top\mathbf{X}_j)^{-1} \mathbf{X}_{j}^\top\mathbf{X}_j (\theta_{j}^{*}-\theta_{i}^{*})}, \\
 &\leq \sum_{j \in C_k\setminus \{i\}} \|\theta_{j}^{*}-\theta_{i}^{*}\|\sqrt{\lambda_\text{max}(\mathbf{X}_{j}^\top\mathbf{X}_j)},\\
  &\leq \sum_{j \in C_k\setminus \{i\}} \|\theta_{j}^{*}-\theta_{i}^{*}\|\sqrt{t},  \\
 &\leq \sum_{j \in C_k\setminus \{i\}} \|\theta_{j}^{*}-\theta_{i}^{*}\|\sqrt{T},
\end{align*}
where the first inequality is given by the triangle inequality. The second inequality holds because the sum over all clients $\overline{V}_{t,i} = \sum_{i \in {C_k}}\mathbf{X}_{i}^\top\mathbf{X}_i$ necessarily includes $\mathbf{X}_{j}^\top\mathbf{X}_j$, hence $\overline{V}_{t,i} \geq \mathbf{X}_{j}^\top\mathbf{X}_j$. Additionally, $\mathbf{X}_{j}^\top\mathbf{X}_j$ is positive semi-definite so that $\overline{V}_{t,i}^{-1} \leq (\mathbf{X}_{j}^\top\mathbf{X}_j)^{-1}$.  The third inequality is given by the Rayleigh-Ritz Theorem. We have the last inequality because we know that since $\|x_{t,i}\|\leq 1$, $\lambda_{\max}(\mathbf{X}_i^\top \mathbf{X}_i)\leq \tau_i \leq T$. 

Theorem \ref{thm:clustering_correctness} shows that by setting $\upsilon_c \geq F^{-1}(\frac{\delta}{N^2}, df, \psi^c)$ and $T_0 = \min\{\frac{64}{3\lambda_c}\log(\frac{2Td}{\delta}), \frac{16\psi^d \sigma^2}{\lambda_c \gamma^2}\}$, with probability $1-\delta$ we have $\hat{\cC} = \cC$. Therefore, since $i,j$ belong to the same ground-truth cluster $C_k$, we have by Assumption \ref{ass:proximity}, $\|\theta_{j}^{*}-\theta_{i}^{*}\| \leq \epsilon_t = \frac{1}{N\sqrt{t}}$. As a result, we can further upper bound the heterogeneity term by 
\begin{align*}
\bigg\|\sum_{j \in {C}_k\setminus \{i\}} \mathbf{X}_{j}^\top\mathbf{X}_j(\theta_{j}^{*}-\theta_{i}^{*})\bigg\|_{\overline{V}_{t,i}^{-1}} \leq \sum_{j \in {C}_k\setminus \{i\}} \frac{\sqrt{T}}{N\sqrt{T}} 
\leq 1.
\end{align*}
\end{proof}

\begin{lemma}
    We define the single step pseudo regret $r_{t,i}=\langle \theta_i^*, x_{t,i}^*-x_{t,i}\rangle$ where $x_{t,i}^* = \argmax_{x \in \cA_{t,i}}{\langle x,{\theta}_{t,i}^*\rangle}$. With probability $1-N\delta$, $r_{t,i}$ is bounded by
    \begin{align}
        r_{t,i} &\leq 2\biggr(\sigma\sqrt{2\log\biggr(\frac{\det(\overline{V}_{t,i})^{1/2}\det(\lambda I)^{-1/2}}{\delta}\biggr)} + \sqrt{\lambda}S + O(1) \biggr)\|x_{t,i}\|_{\overline{V}_{t,i}^{-1}} = O\biggr(\sigma\sqrt{d\log \frac{T}{\delta}}\biggr) \|x_{t,i}\|_{\overline{V}_{t,i}^{-1}} .\label{eq:singlestep_reg}
    \end{align}
\end{lemma}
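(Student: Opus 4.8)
The plan is to run the by-now-standard optimism argument for UCB-type linear bandits, taking the confidence ellipsoid of Lemma~\ref{confidence_sets} as the sole probabilistic ingredient. On the high-probability event of that lemma (which, after its union bound over clients, holds for all $i$ and all $t$ simultaneously with probability at least $1-N\delta$), we have $\|\hat{\theta}_{t,i}-\theta_i^*\|_{\overline{V}_{t,i}}\le \beta_{t,i}$, where $\beta_{t,i}$ denotes the radius appearing on the right-hand side of~\eqref{eq:singlestep_reg}. First I would record the elementary consequence of Cauchy--Schwarz in the $\overline{V}_{t,i}$-geometry: for any arm $x$, $|\langle x,\hat{\theta}_{t,i}-\theta_i^*\rangle|\le \|x\|_{\overline{V}_{t,i}^{-1}}\,\|\hat{\theta}_{t,i}-\theta_i^*\|_{\overline{V}_{t,i}}\le \beta_{t,i}\|x\|_{\overline{V}_{t,i}^{-1}}$. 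This one inequality simultaneously controls how far the plug-in reward $\langle x,\hat{\theta}_{t,i}\rangle$ can sit above or below the true reward $\langle x,\theta_i^*\rangle$.

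The core is then a three-line optimism chain. Applying the displayed inequality to the optimal arm gives $\langle \theta_i^*,x_{t,i}^*\rangle \le \langle\hat{\theta}_{t,i},x_{t,i}^*\rangle + \beta_{t,i}\|x_{t,i}^*\|_{\overline{V}_{t,i}^{-1}}$, i.e.\ the UCB index upper-bounds the true optimal value. Since $x_{t,i}$ maximises the UCB index in~\eqref{eq:UCB}, the right-hand side is at most $\langle\hat{\theta}_{t,i},x_{t,i}\rangle + \beta_{t,i}\|x_{t,i}\|_{\overline{V}_{t,i}^{-1}}$; and applying the displayed inequality once more, now to the played arm, bounds $\langle\hat{\theta}_{t,i},x_{t,i}\rangle$ by $\langle\theta_i^*,x_{t,i}\rangle + \beta_{t,i}\|x_{t,i}\|_{\overline{V}_{t,i}^{-1}}$. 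Subtracting $\langle\theta_i^*,x_{t,i}\rangle$ collapses the chain to $r_{t,i}\le 2\beta_{t,i}\|x_{t,i}\|_{\overline{V}_{t,i}^{-1}}$, which is exactly the first inequality in~\eqref{eq:singlestep_reg}, with $S$ the norm bound $\|\theta_i^*\|\le 1$ and the heterogeneity contribution folded into the radius.

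It then remains to simplify the radius to the claimed $O\!\left(\sigma\sqrt{d\log(T/\delta)}\right)$. The heterogeneity term $\big\|\sum_{j\in C_k\setminus\{i\}}\mathbf{X}_j^\top\mathbf{X}_j(\theta_j^*-\theta_i^*)\big\|_{\overline{V}_{t,i}^{-1}}$ inside $\beta_{t,i}$ is at most $1=O(1)$ by Lemma~\ref{lem:hetero_bound}, and $\sqrt{\lambda}S=O(1)$ as well. For the self-normalised term I would invoke the determinant--trace inequality (Lemma~\ref{lem:det_trace}) with $\|x\|_2\le 1$, giving $\det(\overline{V}_{t,i})\le(\lambda+T/d)^d$, hence $\log\!\big(\det(\overline{V}_{t,i})^{1/2}\det(\lambda I)^{-1/2}/\delta\big)=O(d\log(T/\delta))$, so that $\sigma\sqrt{2\log(\cdots)}=O(\sigma\sqrt{d\log(T/\delta)})$ dominates the two $O(1)$ terms.

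The argument is essentially routine, so I do not anticipate a genuine obstacle; the points needing care are bookkeeping. The first is ensuring the optimism event is the \emph{uniform} event of Lemma~\ref{confidence_sets} over all $t$ and all $N$ clients, which is what supplies the $1-N\delta$ probability and lets us treat every round's $r_{t,i}$ on a single good event. The second is confirming that the bias genuinely stays $O(1)$ rather than inflating with $|C_k|$ or $t$ --- this is precisely the content of Lemma~\ref{lem:hetero_bound}, which in turn relied on the clustering correctness of Theorem~\ref{thm:clustering_correctness}, so that $i,j$ truly share a ground-truth cluster with $\|\theta_i^*-\theta_j^*\|\le\epsilon$. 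I would also be mildly careful about the index shift between the selection-time matrix $\overline{V}_{t-1,i}$ in~\eqref{eq:UCB} and the $\overline{V}_{t,i}$ written in the statement, treating $\overline{V}_{t,i}$ consistently as the Gram matrix used to choose $x_{t,i}$.
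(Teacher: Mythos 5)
Your proposal is correct and follows essentially the same route as the paper: the standard optimism chain yielding $r_{t,i}\le 2\beta_{t,i}\|x_{t,i}\|_{\overline{V}_{t,i}^{-1}}$ via two applications of Cauchy--Schwarz against the confidence radius of Lemma~\ref{confidence_sets}, followed by bounding the heterogeneity term by $O(1)$ with Lemma~\ref{lem:hetero_bound} and the log-determinant by $O(d\log(T/\delta))$ via the determinant--trace inequality. The only cosmetic difference is that you phrase optimism through the UCB index of the optimal arm whereas the paper introduces an optimistic parameter $\tilde{\theta}_{t,i}$; for an ellipsoidal confidence set these are the same argument.
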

    \begin{proof}
        Assume without loss of generality $\theta_{i}^* \in {C_k}$. Then,
        \begin{align}
        \nonumber
            r_{t,i} &= \langle \theta_i^*,x_{t,i}^* \rangle - \langle \theta_i^*,x_{t,i} \rangle, \\ \nonumber
            &\leq \langle \Tilde{\theta}_{t,i},x_{t,i} \rangle - \langle \theta_i^*,x_{t,i} \rangle, \\ \nonumber
            &= \langle \Tilde{\theta}_{t,i} - \theta_i^*,x_{t,i} \rangle, \\ \nonumber
            &= \langle \Tilde{\theta}_{t,i}-\hat{\theta}_{t,i},x_{t,i} \rangle + \langle \hat{\theta}_{t,i}-\theta_i^*,x_{t,i} \rangle, \\ \nonumber
            &\leq \|\Tilde{\theta}_{t,i}-\hat{\theta}_{t,i}\|_{\overline{V}_{t,i}} \|x_{t,i}\|_{\overline{V}_{t,i}^{-1}}  + \|\hat{\theta}_{t,i}-\theta_i^*\|_{\overline{V}_{t,i}} \|x_{t,i}\|_{\overline{V}_{t,i}^{-1}}, \\
            &\leq 2\biggr(\sigma\sqrt{2\log\biggr(\frac{\det(\overline{V}_{t,i})^{1/2}\det(\lambda I)^{-1/2}}{\delta}\biggr)} + \sqrt{\lambda}S + \bigg\|\sum_{j \in C_k\setminus \{i\}}^{} \mathbf{X}_{j}^\top\mathbf{X}_j(\theta_{j}^{*}-\theta_{i}^{*})\bigg\|_{\overline{V}_{t,i}^{-1}}\biggr)\|x_{t,i}\|_{\overline{V}_{t,i}^{-1}}. \label{eq:inst-reg}
        \end{align}
        The first inequality is because $\langle \Tilde{\theta}_{t,i},x_{t,i} \rangle$ is optimistic. Applying Lemma \ref{lem:hetero_bound} to upper bound the heterogeneity term gives
        \begin{align*}
            \text{RHS of Eq.\eqref{eq:inst-reg}}&\leq 2\biggr(\sigma \sqrt{2\log\biggr(\frac{\det(\overline{V}_{t,i})^{1/2}\det(\lambda I)^{-1/2}}{\delta}\biggr)} + \sqrt{\lambda}S + O(1)\biggr)\|x_{t,i}\|_{\overline{V}_{t,i}^{-1}}, \\
            &= O\biggr(\sigma\sqrt{d\log \frac{T}{\delta}}\biggr) \|x_{t,i}\|_{\overline{V}_{t,i}^{-1}} .
        \end{align*}
    \end{proof}

Now we are equipped to prove Theorem \ref{thm:regret_comm}.
\begin{proof}
    The cumulative regret of our system can be decomposed into three components. The first component is the regret accumulated under our exploration stage. During these timesteps we can trivially upper bound the instantaneous regret by two. The second component considers the regret during timesteps in which our estimated clusters are correct. 
    The third component considers the regret accumulated during the timesteps in which our estimated clusters are incorrect, which we can also upper bound the instantaneous regret by two, yielding
    \begin{equation*}
        R_T \leq \sum_{t=0}^{T_0}\sum_{i=1}^N 2 + \sum_{t=T_0+1}^T \sum_{k=1}^{\hat{M}} \sum_{i\in {\hat{C}_k}} r_{t,i} \cdot \mathds{1}\{\hat{\cC} = \cC\}   + \sum_{t=T_0+1}^T \sum_{k=1}^{\hat{M}} \sum_{i\in {\hat{C}_k}} 2 \cdot \mathds{1}\{\hat{\cC} \neq \cC\} .
    \end{equation*}
    Note that because our cluster estimation is non-parametric the number of estimated clusters $\hat{M}$ is not a hyper-parameter to our clustering algorithm. 

    According to Theorem \ref{thm:clustering_correctness}, if we select $\upsilon^c \geq F^{-1}(1-\frac{\delta}{N^2}, df, \psi^c)$ and $T_0 = \frac{16\psi^d \sigma^2}{\lambda_c \gamma^2}$, the probability that $\hat{\cC} = \cC$ is $1-\delta$. Therefore, by setting $\delta = \frac{1}{N^2 T}$, the regret contributed by the rightmost term is of the order $O(1)$. As a result, our high-probability regret bound is given by:
    \begin{equation*}
    R_T \leq \frac{32N\psi^d \sigma^2}{\lambda_c \gamma^2} + \sum_{t=T_0+1}^T \sum_{k=1}^{\hat{M}} \sum_{i\in {\hat{C}_k}} r_{t,i}\cdot \mathds{1}\{\hat{\cC} = \cC\}  + O(1).
    \end{equation*}
    
    In the subsequent steps, we will focus on the regret accumulated when $\hat{\cC} = \cC$. This means we only need to examine the instances when the estimated number of clusters and their compositions exactly match the actual clusters. Consequently, in subsequent discussions $\hat{M} = M$ and $\hat{C}_k = C_k$ for all $k \in [M]$. 

    Now we prove Theorem \ref{thm:regret_comm} following the steps in the proof of Theorem 4 in \cite{dislinucb}.
    We consider the case where Eq.\eqref{eq:singlestep_reg} holds because with the same choice of $\delta = \frac{1}{N^2 T}$, the expected instantaneous regret resulting during timesteps when Eq.\eqref{eq:singlestep_reg} does not hold is $O(1)$.
    
In our communication protocol, for each cluster $C_k$, there will be a number of epochs separated by communication rounds. We denote $|C_k|$ denotes the number of clients in cluster $C_k$. If there are $P_k$ epochs within cluster $C_k$, then $V_{P_k}$ will be the matrix with all samples from $C_k$ included. Similarly we denote the last globally shared $V$ to the clients in $C_k$ in epoch $p$ as $V_p$.

    From Lemma \ref{lem:det_trace}, we have $\det(V_0) = \lambda^d$. $\det(V_{P,k}) \leq \biggr(\frac{tr(V_{p})}{d}\biggr)^d \leq \left(\lambda + \frac{|C_k|\,T}{d}\right)^d$. Therefore by the pigeonhole principle,
    \begin{equation*}
    \log \frac{\det(V_{p})}{\det(V_0)} \leq d\log\biggr(1+\frac{|C_k|\,T}{\lambda d}\biggr).
    \end{equation*}

   It follows that for all but $R:= d\log\big(1+\frac{|C_k|T}{\lambda d}\big)$ epochs,
    \begin{equation}
        1 \leq \frac{\det(V_{j})}{\det(V_{j-1})} \leq 2. \label{eq:good_epoch}
    \end{equation}

    In these ``good epochs'' where Eq \eqref{eq:good_epoch} is satisfied, we can follow Theorem 4 from \cite{dislinucb} and treat all of the $|C_k|T$ observations from cluster $k$ as observations from an imaginary single agent in a round-robin manner. We similarly use $\tilde{V}_{t,i} = \lambda I + \sum_{\{(p,q):(p<t) \lor (p=t \land q <i)\}} x_{p,q}x_{p,q}^\top$ to denote the $\overline{V}_{t,i}$ this agent calculates before seeing $x_{t,i}$. If $x_{t,i}$ is in a good epoch, then:
    \begin{equation*}
        1 \leq \frac{\det(\tilde{V}_{t,i})}{\det(\overline{V}_{t,i})}  \leq \frac{\det(V_{j})}{\det(V_{j-1})} \leq 2. 
    \end{equation*}    
    
    We similarly denote $\mathcal{B}_{p,k}$ as the set of $(t,i)$ pairs that belong to epoch $p$ and $P_{good,k}$ as the set of good epochs in cluster $k$. In that event, we can use the regret bound for a single agent which gives

    \begin{align*}
        R_{good} &= \sum_{t=T_0+1}^T \sum_{k=1}^M \sum_{i\in {C_k}} r_{t,i},\\
                   &\leq  \sum_{k=1}^M \sqrt{|C_k|T \sum_{p \in P_{good,k}} \sum_{(t,i) \in \mathcal{B}_{p,k}}  r_{t,i}^2}, \\
                  &\leq  \sum_{k=1}^M O\bigg(\sqrt{d|C_k|T\log(T/\delta) \sum_{p \in P_{good,k}} \sum_{(t,i) \in \mathcal{B}_{p,k}}  \min(\|x_{t,i}\|^2_{\tilde{V}_{t,i}^{-1}},1)}\bigg),  \\
                &\leq  \sum_{k=1}^M O\bigg(\sqrt{d|C_k|T\log(T/\delta) \sum_{p \in P_{good,k}} \log\bigg(\frac{\det(V_p)}{\det(V_{p-1})}\bigg)}\bigg),  \\
                &\leq  \sum_{k=1}^M O\bigg(\sqrt{d|C_k|T\log(T/\delta) \log\bigg(\frac{\det(V_p)}{\det(V_{0})}\bigg)}\bigg),  \\
        &\leq \sum_{k=1}^M O\bigg( d\sqrt{|C_k|\,T}\log(|C_k|\,T)\bigg).
    \end{align*}
    Now we must analyze the regret caused by the bad epochs, of which there are $R=O(d\log(|C_k|\,T))$ within each cluster $C_k \in \cC$. This part of the analysis differs from the proof in Theorem 4 of \citep{dislinucb} due to the fact that in our protocol, clusters that have requested collaboration may have to wait in the queue until they are served in the event that multiple clusters have requested collaboration at the same timestep. 
    
Consider the regret for a particular cluster $C_k\in \cC$ during this bad epoch. Suppose that the bad epoch starts at time $t_0$ and lasts $n$ timesteps. We denote the time $t_q$ when the cluster $k$ is added to the queue awaiting collaboration. We can decompose the regret of this cluster during the bad epoch into two parts, corresponding to the timesteps before and after $C_k$ has been added to the queue:
    \begin{align*}
        REG_{bad}(k)&= \sum_{t=t_0}^{t_q-1} r_{t,i} + \sum_{t=t_q}^{n} r_{t,i}. \\
    \end{align*}
    Based on our algorithm design, we can see in line 13 of Algorithm \ref{alg:simplified} that a cluster is only added to the queue when at least one client in that cluster has exceeded its communication threshold $D_k$. Therefore we know that before $t_q$, we can upper bound the regret of the cluster $k$ from $t_0$ to $t_q-1$ as:
    \begin{align*}
        \sum_{t=t_0}^{t_q-1} r_{t,i} &\leq O\biggr(\sqrt{d\log \frac{T}{\delta}}\biggr) \sum_{i\in {C_k}} \sum_{t=t_0}^{t_q-1}  \|x_{t,i}\|_{\overline{V}_{t,i}^{-1}}, \\
        &\leq O\biggr(\sqrt{d\log \frac{T}{\delta}}\biggr) \sum_{i\in {C_k}}  \sqrt{(t_q-1 - t_0)\log \frac{\det (V_{t_q-1,i})}{\det (V_{t_q-1,i}-\Delta V_{t_q-1,i})}}, \\
        &\leq O\biggr(\sqrt{d\log \frac{T}{\delta}}\biggr) |C_k| \sqrt{D_k}.
    \end{align*}

    Once cluster $k$ is added to the queue at timestep $t_q$, it may have to wait to be served by the central server based on how many clusters have requested collaboration before it. Recall that our queue is a FIFO queue (line 19 in Algorithm \ref{alg:simplified}), and we have $M$ total clusters. Therefore the maximum time cluster $C_k$ could have to wait in the queue is $M$ timesteps. Each timestep the cluster is waiting in the queue, a client in this cluster will miss $|C_k|$ observations. For each of these missed observations, we can upper bound the regret incurred by $2$, giving
    \begin{align*}
        \sum_{t=t_q}^{n} r_{t,i} \leq 2(M+1)|C_k|^2.
    \end{align*}

    Combining our results, we have the following bound on the regret of cluster $C_k$ during a bad epoch:
    \begin{align*}
        REG_{bad}(k)&\leq O\biggr(\sqrt{d\log \frac{T}{\delta}}\biggr) |C_k| \sqrt{D_k} + 2(M+1)|C_k|^2.
    \end{align*}
    As we know we have at most $R=O(d\log(|C_k|\,T))$ bad epochs, we can further bound it by
    \begin{align*}
        REG_{bad}(k)&\leq O\biggr(\sqrt{D_k}|C_k|d^{1.5}\log^{1.5}(|C_k|T) + 2d|C_k|^2(M+1)\log(|C_k|\,T)\biggr).
    \end{align*}
    with the choice of $D_k = \frac{T \log |C_k|T}{d |C_k|}$, our regret becomes:
    \begin{align*}
        REG_{bad}(k)&\leq O\biggr(d\sqrt{|C_k|T}\log^{2}(|C_k|T) + 2d|C_k|^2M\log(|C_k|\,T)\biggr).
    \end{align*}

    The summation over all $M$ clusters gives the regret for all clusters in all of the bad epochs:
    \begin{align*}
    REG_{bad}&\leq \sum_{k=1}^M O\biggr(d\sqrt{|C_k|T}\log^{2}(|C_k|T) + 2d|C_k|^2M\log(|C_k|\,T)\biggr).
    \end{align*}    

    Combining the regret from the exploration phase, good epochs, and bad epochs gives a final cumulative regret upper bound of:
    \begin{align*}
        R_T\leq \frac{32N\psi^d \sigma^2}{\lambda_c \gamma^2} &+ \sum_{k=1}^M O\bigg(d\sqrt{|C_k|\,T}\log(|C_k|\,T)\bigg), \\ &+  \sum_{k=1}^M O\biggr(d\sqrt{|C_k|T}\log^{2}(|C_k|T) + 2d|C_k|^2M\log(|C_k|\,T)\biggr) + O(1).
    \end{align*}
    This can be further simplified with, 
    \begin{align*}
    R_T&\leq O\biggr( \frac{N\psi^d \sigma^2}{\lambda_c \gamma^2} + \sum_{k=1}^M d\sqrt{|C_k|T}\log^{2}(|C_k|T) + 2d|C_k|^2M\log(|C_k|\,T)\biggr).
    \end{align*}

\subsection{Communication cost} 

The cumulative communication cost $C_T$ of our algorithm can be divided into two parts. The first is the communication cost associated with the pure exploration and cluster estimation phase. During the pure exploration phase, no clients communicate with the central server, so that the communication cost associated with that phase is trivially $0$. At the end of the exploration phase, all $i\in [N]$ clients share with server their sufficient statistics $V_{T_0,i}$ and $b_{T_0,i}$, each of which are $d \times d$ and $d \times 1$ respectively. Therefore, the communication cost of the cluster estimation is $C_{cluster\_est} = N(d^2 + d) = O(Nd^2)$. 

Next, we characterize the communication cost of the second phase, the federated clustered bandit phase. In our communication protocol, for each cluster $C_k$, there will be a number of epochs separated by communication rounds. Denote the length of an epoch as $\alpha$, so that there can be at most $\lceil \frac{T}{\alpha}\rceil$ epochs with length longer than $\alpha$.
For an epoch with less than $\alpha$ time steps, similarly, we denote the first time step of this epoch as $t_{s}$ and the last as $t_{e}$, i.e., $t_{e}-t_{s} < \alpha$. Therefore, $\log{\frac{\det(V_{t_{e}})}{\det(V_{t_{s}})}} > \frac{D_k}{\alpha}$. Following the same argument as in the regret proof, the number of epochs with less than $\alpha$ time steps is at most $\lceil \frac{R \alpha}{D_k}\rceil$. Then $C_{fed\_cluster}(k)=|C_k| \cdot (\lceil \frac{T}{\alpha}\rceil+\lceil \frac{R \alpha}{D_k}\rceil)$, because at the end of each epoch, the synchronization round incurs $2|C_k|$ communication cost. We minimize $C_{fed\_cluster}(k)$ by choosing $\alpha=\sqrt{\frac{D_k T}{R}}$, so that $C_{fed\_cluster}(k) =O(|C_k| \cdot \sqrt{\frac{T R}{D}})$. With our choice of $D_k = \frac{T \log |C_k|T}{d |C_k|}$, we have 
\begin{align*}
    C_{fed\_cluster}(k)&= O(|C_k| \cdot \sqrt{\frac{T R}{\frac{T \log |C_k|T}{d |C_k|}}}), \\
    &= O(|C_k| \cdot d\sqrt{|C_k|}).
\end{align*}

Combining our communication cost from our two phases together gives:
\begin{align*}
    C_{T}&= O(Nd^2) + \sum_{k=1}^M O(d^3|C_k|^{1.5}).
\end{align*}
\end{proof}

\section{Additional Empirical Enhancement Evaluation} \label{appendix:empirical}
In this work, we demonstrate the effectiveness of our empirical enhancements on two synthetic datasets. In Section \ref{experiment:synthetic}, we analyzed the performance of both \model{} and \modelenhance{} on a balanced synthetic dataset that was generated following the procedure described in Section \ref{experiment:synthetic-generation}.  In this section, we evaluate our models on an imbalanced synthetic dataset to emphasize the distinct contributions of our priority queue and data-dependent re-clustering enhancements.

\paragraph{Dataset}  In this imbalanced dataset, we deliberately vary the distribution of clients and the sizes of clusters. We establish $N=50$ clients and $M=13$ ground-truth clusters. Instead of randomly assigning clients to clusters like we did in Section \ref{experiment:synthetic-generation},  we manually assigned 26 clients to cluster $C_1$, and the remaining $24$  clients were assigned in pairs to the remaining $12$ cluster centers. After being assigned to a cluster center, we follow the same procedure from Section \ref{experiment:synthetic-generation} to generate the client parameters within $\epsilon$ of the cluster centers. For the other environment settings, we used $d=25$, $K=1000$, $\gamma = 0.85$ and $T=2500$.

\paragraph{Models} In order to evaluate the contributions of each enhancement proposed in Section \ref{method:enhance}, we implemented two additional enhanced algorithms of \model{}. In HetoFedBandit-PQ, we replace the server's FIFO queue with a priority queue that selects a cluster to collaborate with based on their determinant ratios. HetoFedBandit-DR performs data-dependent clustering at each collaboration round. \modelenhance{}, as described in Algorithm \ref{alg:data-depsimplified}, is our fully enhanced algorithm, where both a priority queue and data-dependent clustering are employed.

\paragraph{Results} In Figure \ref{fig:enhance1_reg}, we conducted an empirical evaluation of the individual enhancements proposed in Section \ref{method:enhance}. A comparison between HetoFedBandit-DR and HetoFedBandit demonstrates that the use of data-dependent clustering significantly improved performance on our imbalanced synthetic dataset. By employing a data-dependent clustering threshold, our algorithm facilitated greater collaboration among clients with similar observation histories during the early rounds. Although this enhancement incurred additional communication cost, the cost remained sub-linear and comparable to that of DisLinUCB.

Comparing HetoFedBandit with HetoFedBandit-PQ, our observations suggest that the utilization of a priority queue yielded modest improvements in cumulative regret, particularly in the initial rounds when multiple clients simultaneously requested collaboration, leading to queue congestion. 
In this imbalanced environment, we observed significant delays for the larger cluster $C_1$ when using a FIFO queue. The larger size of cluster $C_1$ resulted in a higher value of the cluster determinant ratio, indicating its potential for greater regret reduction in the federated learning system. However, due to the FIFO queue, several smaller clusters that that benefited less from collaboration were served ahead of $C_1$. Nevertheless, as the algorithm progressed, the frequency of communication among clients decreased, resulting in reduced queue congestion. As a result, HetoFedBandit and HetoFedBandit-PQ exhibited similar performance in the later rounds.

\begin{figure}
     \centering
     \begin{subfigure}[b]{0.47\textwidth}
         \centering
         \includegraphics[width=\textwidth,height=5cm]{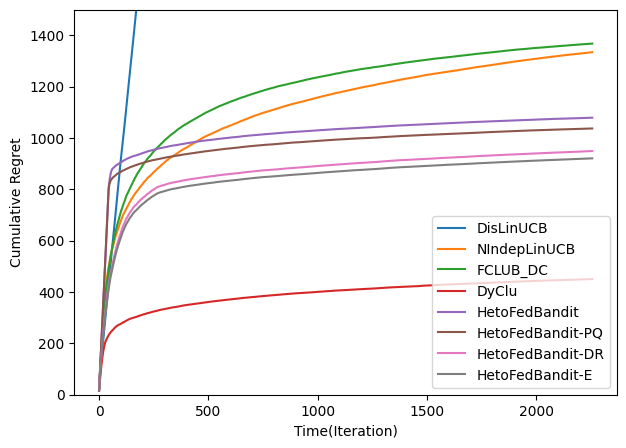}
         \caption{Accumulative Regret}
         \label{fig:enhance1_reg}
     \end{subfigure}
     \hfill
     \begin{subfigure}[b]{0.47\textwidth}
         \centering
         \includegraphics[width=\textwidth, height=5cm]{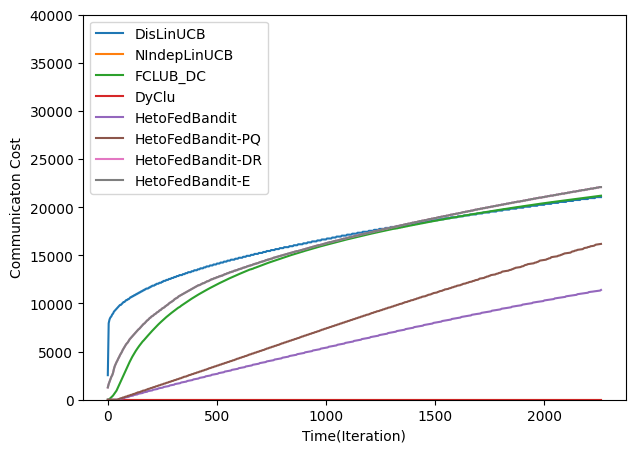}
         \caption{Communication Cost}
         \label{fig:enhance1_comm_cost}
     \end{subfigure}
     \hfill
        \caption{Experimental Results on Imbalanced Synthetic Dataset}
        \label{fig:results_sim_enhance}
        \vspace{-3mm}
\end{figure}

\section{Sensitivity Analysis} \label{appendix:sensitivity}

According to our regret analysis, the performance of \model{} depends on three key environment parameters: the number of ground-truth clusters $M$, and the number of clients $N$, and the cluster separation parameter $\gamma$. 
In this experiment, we analyze their influence on \model{} and baselines by varying these parameters while keeping the others fixed. The accumulated regret under different settings are reported in Table \ref{tb:simulationResults}. As suggested by our theoretical analysis, a larger client to cluster ratio $\frac{N}{M}$ leads to higher regret of both \model{} (HFB) and \modelenhance{} (HFB-E) as shown in setting 1, 2 and 3, since observations are split into more clusters with smaller size each. Lastly, as shown in settings 4 and 5, 6, decreasing the environment separation introduces a higher regret of \model{} since a longer exploration period is required to discern which clients are safe for collaboration. Additionally, the decreased cluster separation in setting 5 leads to an increase in regret for \modelenhance{}, as well as DyClu due to the increased likelihood of a clustering error when the clusters are closer together.

\begin{table*}[!ht]
\centering
\caption{Comparison of accumulated regret under different environment settings.}
\begin{tabular}{p{0.3cm} p{0.3cm} p{0.3cm} p{0.5cm} p{0.65cm} p{1.5cm} p{1.8cm} p{1.8cm} p{1 cm} p{0.8 cm} p{1.2cm}}
\toprule
& $N$ & $M$ & $\gamma$ & $T$ & NIndep-LinUCB & DisLinUCB & FCLUB\_DC & DyClu & HFB & HFB-E \\
\cmidrule(r{4pt}){2-5} \cmidrule(l){5-11}
1 & 30 & 1 & 0.85 & 3000 &772.03 & 59.20 & 648.22 & 48.77 & 576.31 & 173.89  \\
2 & 30 & 4 & 0.85 & 3000 &784.80 & 23776.10 & 784.01 & 227.18 & 669.17 &  443.89  \\
3 & 30 & 30 & 0.85 & 3000 &781.35 & 25124.46 & 791.19 & 776.86 & 883.51 & 822.24 \\
4 & 30 & 4 & 0.65 & 3000 &777.73 & 20129.05 & 788.12 & 231.57 & 699.89 &  461.54  \\
5 & 30 & 4 & 0.05 & 3000 & 787.79 & 23823.61 & 771.04 & 269.45 & 916.73 & 582.21  \\
\bottomrule
\end{tabular}
\label{tb:simulationResults}
\end{table*}


\vskip 0.2in
\newpage

\bibliography{bibfile}

\end{document}